\newcommand{\ie}{i.e.}
\newcommand{\eg}{e.g.}
\newcommand{\mdp}{\mathcal{M}}
\newcommand{\Exp}{\mathop{\mathbb E}\displaylimits}
\newtheorem{assumption}{Assumption}
\newtheorem{theorem}{Theorem}
\newtheorem{lemma}{Lemma}
\newtheorem{definition}{Definition}%
\def\eqref#1{Equation~\ref{#1}}
\def\1{\bm{1}}
\DeclareMathAlphabet{\mathsfit}{\encodingdefault}{\sfdefault}{m}{sl}
\SetMathAlphabet{\mathsfit}{bold}{\encodingdefault}{\sfdefault}{bx}{n}
\DeclareMathOperator*{\argmax}{arg\,max}
\DeclareMathOperator*{\argmin}{arg\,min}
\title{Learning Meta Representations for Agents in Multi-Agent Reinforcement Learning}
\author{
  \qquad \qquad\,\,\, Shenao Zhang \qquad\qquad \qquad\qquad\qquad\qquad \qquad \qquad\quad Li Shen \\
  \qquad \qquad\quad Georgia Institute of Technology \qquad \qquad\qquad\qquad\qquad \, Tencent AI Lab \\
  \qquad \qquad\quad \texttt{shenao@gatech.edu} \qquad \qquad \qquad \quad\qquad\qquad\qquad \texttt{lshen.lsh@gmail.com} \\
  \AND
  \qquad \qquad\quad Lei Han \qquad \qquad \qquad \quad\qquad \qquad\qquad\qquad\qquad \qquad\,\,\, Li Shen \\
  \qquad \qquad\quad Tencent Robotics X \qquad \quad\qquad\qquad \qquad \qquad \qquad\qquad\, JD Explore Academy \\
  \qquad \qquad\quad \texttt{leihan.cs@gmail.com} \qquad \qquad \quad \qquad\qquad \qquad \,\,\,\,\,\texttt{mathshenli@gmail.com} \\
}
\begin{document}

\maketitle

\begin{abstract}
In multi-agent reinforcement learning, the behaviors that agents learn in a single Markov Game (MG) are typically confined to the given agent number. Every single MG induced by varying the population may possess distinct optimal joint strategies and game-specific knowledge, which are modeled independently in modern multi-agent reinforcement learning algorithms. In this work, our focus is on creating agents that can generalize across population-varying MGs. Instead of learning a unimodal policy, each agent learns a policy set comprising effective strategies across a variety of games. To achieve this, we propose \textit{Meta Representations for Agents} (MRA) that explicitly models the game-common and game-specific strategic knowledge. By representing the policy sets with multi-modal latent policies, the game-common strategic knowledge and diverse strategic modes are discovered through an iterative optimization procedure. We prove that by approximately maximizing the resulting constrained mutual information objective, the policies can reach Nash Equilibrium in every evaluation MG when the latent space is sufficiently large. When deploying MRA in practical settings with limited latent space sizes, fast adaptation can be achieved by leveraging the first-order gradient information. Extensive experiments demonstrate the effectiveness of MRA in improving training performance and generalization ability in challenging evaluation games.
\end{abstract}

\section{Introduction}
\label{intro}
Behaviors of agents learned in a single Markov Game (MG) highly depend on the environmental settings, especially the number of agents, \ie, population \citep{suarez2019neural,long2020Evolutionary}. Many multi-agent reinforcement learning (MARL) algorithms \citep{sukhbaatar2016learning,foerster2016learning,lowe2017multi} are developed in games with a fixed population. However, the algorithms may suffer from generalization issues, \ie, the policies learned in a single MG are brittle to the change of the agent number \citep{suarez2019neural}. Recent works have experimentally shown the benefit of knowledge transfer between MGs with different populations \citep{agarwal2019learning,long2020Evolutionary}, which is required to perform between successive games. Unfortunately, the resulting agents are still confined to particular training games, with less ability for extrapolation.


In this work, we are concerned with learning multi-agent policies that generalize across Markov Games constructed by varying the population from the same underlying environment. The created agents are expected to behave well in both training MGs and novel (or unseen) evaluation MGs. However, for each agent, optimizing one unimodal policy even to maximize the performance on the entire \textit{training} MG set is still challenging \citep{teh2017distral}. Effective policies in population-varying games, such as ones that achieve Nash Equilibrium in each game, may behave dramatically different due to the game-specific strategic knowledge of themselves. Such discrepancy will hamper the performance in individual games \citep{brunskill2013sample}. In this regard, it is desirable to learn \textit{sets of policies} that are formed by the optimal strategies for each training MG, while transferring knowledge to \textit{unseen} MGs is still challenging nevertheless.


To cope with this generalization challenge, our solution involves modeling the \textit{game-specific} and \textit{game-common} strategic knowledge. In unseen games, although the optimal game-specific knowledge that leads to optimal policies is unobtainable, the game-common knowledge and various strategic modes can be captured during training by imposing \textit{knowledge variations}, \ie, the \textit{suboptimal} game-specific knowledge. Instead of only fitting the best response, learning to make decisions under multiple knowledge variations plays the role of augmenting the training games. As a result, the strategic knowledge is learned in an unsupervised manner and agents can effectively generalize to novel MGs.

For games induced by varying the population, the distinct optimal policies are determined by different strategic relationships between agents (see \eqref{rel_v} for a formal definition), which we characterize as game-specific knowledge. For example, in a Pac-Man game that is populationally dominated by ghost agents, the game-specific knowledge for Pac-Man is to focus on the ghost agents' positions for survival. However, as all Pac-Man agents ignore other Pac-Man agents' positions, they are unaware of collaborating and their ability to collect food dots in evaluation games is poor. One potential fix is to learn the representations which can serve as the game-common knowledge and generate a set of policies that output the best-effort actions when conditioned on different (suboptimal) strategic relationships (i.e., knowledge variations) between agents, e.g., by forcing the Pac-Man to pay more attention to other Pac-Man in spite of the game being ghost-dominated. Although these policies can be suboptimal in the training MG, they have the potential to perform well in evaluation games in a zero-shot manner. This motivates us to learn \textit{diverse} strategies so that even if not all knowledge variations are covered during training, the game-common knowledge can be learned. Better generalization is thus achieved since we only need to fit the best strategic relationship in the evaluation game.

To formalize this intuition, we propose \textit{Meta Representations for Agents} (MRA) to discover the underlying strategic structures. Specifically, by meta-representing the policy sets with multi-modal latent policies, the game-common knowledge and diverse strategic modes are captured through iterative diversity-driven optimization. We prove that by approximately maximizing a constrained mutual information objective, the latent policies can reach Nash Equilibrium in every evaluation game if with a sufficiently large latent space. When deployed in limited-size latent spaces, fast adaptation is achieved by leveraging the first-order gradient information. We further empirically validate the benefits of MRA, which is capable of boosting training performance and extrapolating over a variety of evaluation games.

\section{Related Work}
\label{related}
Multi-Agent Reinforcement Learning (MARL) extends RL to multi-agent systems. In this work, we follow the centralized training with decentralized execution (CTDE) setting. Recent MARL algorithms with CTDE setting \citep{lowe2017multi,jiang2018learning,iqbal2018actor} learn in MGs with fixed numbers of agents. However, the resulting agents are shown to be unable to play well in some situations. For instance, it is shown in \citep{suarez2019neural} that the random exploration bottleneck may result in both inferior and brittle policies, \eg, competitive agents trained in a small population lack adequate exploration compared with agents trained in a large population. On the other hand, the complexity of games grows exponentially with the population, which causes direct learning intractable \citep{yang2018mean}. For both the above difficulties, training in multiple MDPs or MGs is shown to be helpful \citep{teh2017distral,wang2020few,long2020Evolutionary}, which highlights the importance of learning transferable knowledge. 
In single-agent RL, knowledge transfer is proven useful to improve the performance in related MDPs \citep{taylor2009transfer}, \eg, knowledge from game-specific experts distilled to a policy \citep{rusu2015policy,parisotto2015actor,teh2017distral}. Recently, generalizable policy sets are learned in SMERL \citep{kumar2020one}, which share similarities with our work. However, SMERL is proposed to improve the single-agent policy robustness, with the motivation that \textit{remembering} diverse (suboptimal) policies in a \textit{single} MDP can directly lead to robust behaviors, with no need to perform explicit perturbations. Similar ideas to learn transferable skills also appear in recent works \citep{lim2021dynamics,xie2021deep}. Approaches that also adopt latent variable policies and mutual information objectives \citep{eysenbach2018diversity,Sharma2020Dynamics-Aware,mahajan2019maven,zheng2018structured} differ from ours since they either focus on the unsupervised skill discovery in single MDPs or learn in multi-task setups without generalization guarantees.

Meta-learning for RL \citep{vilalta2002perspective}, including Reptile \citep{nichol2018first} and $\text{RL}^2$ \citep{duan2016rl}, is related to our work, which extracts the \textit{prior knowledge} in related MDPs by \eg, recurrent models \citep{wang2016learning,duan2016rl} or feed-forward models \citep{brunskill2013sample}. Alternatively, the gradient information can also be leveraged to meta-learn \citep{finn2017model,nichol2018first}. However, in our work, the learning protocol is more like multi-task learning in the transfer learning literature. The common knowledge in MRA is different from the prior knowledge in meta-learning as the latter captures the high-level essence, while the former is the feature that directly transfers. Also, MRA by \textit{explicitly} modeling the common knowledge and specific knowledge is more suitable for specific problems and has more interpretability.

Recent MARL works \citep{agarwal2019learning,wang2020few,long2020Evolutionary} experimentally reveal the performance benefits of transferring knowledge between population-varying MGs, but with the ultimate goal of training in complex large-population MGs, achieved by curriculum learning. There are also works focusing on multi-task MARL, \eg, \citep{omidshafiei2017deep} with independent learners. Although transfer learning is more about the intra-agent transfer, \ie, between MGs, the inter-agent transfer is also addressed by parameter sharing between cooperative agents \citep{tan1993multi,terry2020parameter} or between homogeneous agents in role-symmetric games \citep{suarez2019neural,Muller2020A}. With the aim of learning dynamic team composition of \textit{heterogeneous} agents, COPA \citep{liu2021coach} introduced a coach-player framework in \textit{single} training games. Besides, the counterfactual reasoning in REFIL \citep{iqbal2021randomized} focused on the multi-task training setting, while we study the generalizability of MARL agents to unseen evaluation games with a theoretically justified objective. There are also works that aim to discover diverse strategic behaviors in MARL \citep{tang2021discovering,lupu2021trajectory}, with randomized policy gradient and zero-shot coordination, respectively. Notably, approaches that model the \textit{dynamical} interaction between agents \citep{wang2019influence,yang2021ciexplore} are orthogonal to the \textit{strategic} modeling in our work.

\section{Preliminaries}
\label{Preliminaries}
\textbf{Markov Game:} An N-agent Markov Game $m$ is defined by the state space $\mathcal{S}$, action sets $\{\mathcal{A}^1, \ldots, \mathcal{A}^N\}$, and observation sets $\{\mathcal{O}^1, \ldots, \mathcal{O}^N\}$. For any agent $i\in[1,N]$, $o^i\in\mathcal{O}^i$ is an observation of the global state $s\in\mathcal{S}$. The state transition and the reward function for agent $i$ are defined as $\mathcal{P}_m: \mathcal{S}\times\mathcal{A}^1\times\ldots\times\mathcal{A}^N\shortrightarrow\Delta(\mathcal{S})$ and $\mathcal{R}^i_m: \mathcal{S} \times \mathcal{A}^i \shortrightarrow [0,1]$, respectively, where $\Delta(\mathcal{S})$ denotes the set of discrete probability distributions over $\mathcal{S}$. The joint strategy is denoted as $\boldsymbol{\pi}=(\pi^1, \ldots, \pi^N)=(\pi^i,\boldsymbol{\pi^{\textrm{-}i}})$, where $\pi^{i}$ is the strategy of agent $i$ and  $\boldsymbol{\pi^{\textrm{-}i}}$ is the joint strategy excluding it. In the following sections, we study the Markov Games with discrete state and action spaces.

In this work, we consider role-symmetric MGs \citep{suarez2019neural,Muller2020A}, where homogeneous agents are with the same reward function and action space. The number of types of homogeneous agents is denoted as $h$, \eg, $h=2$ in an N-agent Pac-Man game representing Pac-Man and ghost agents. Homogeneous agents are symmetric in each game, \ie, changing the policy of an agent with another homogeneous agent will not affect the outcome \citep{terry2020parameter}. Population-varying MGs, including training and evaluation MGs, \eg, varying $N_1$ and $N_2$ in an $N_1$ Pac-Man, $N_2$ ghosts game, are with the same $h$ and with states from state set $S$, whereas described by different transition $\mathcal{P}$ and joint space of observation  $\mathcal{O}$, action $\mathcal{A}$, reward $\mathcal{R}$.

\textbf{Relational Representation:} As an opponent modeling framework, relational representation \citep{long2020Evolutionary,agarwal2019learning,iqbal2018actor,zhang2021structure} aims to capture the strategic relationship between agents and output an embedding $e$ for further policy and critic function learning. Specifically, consider the observation $o^i$ of agent $i$ with entities $o^i = \left[o_s^i, o_1^i, \ldots, o_j^i, \ldots, o_N^i\right]$, where $o_s^i$ is agent $i$'s self properties (\eg, its speed), $o_j^i$ is agent $i$'s observation on agent $j$ (\eg, distance from agent $j$), and the observed environment information (\eg, landmark locations) is concatenated to these entities. Then with self-attention \citep{vaswani2017attention} generating the pair-wise relation $g^{i,j}$, \ie, the $j$-th entity of agent $i$'s (egocentric) relational graph $g^i$, the representation embedding $e^i$ for agent $i$ is formulated as
\begin{equation}
\begin{aligned}
\label{rel_v}
    e^i=\sum_{j \neq i}g^{i,j}\mathcal{V}(o_j^i),
\text{where}\ g^{i,j}=\frac{\exp(\mathcal{Q}(o_s^i)^{\top} \mathcal{K}(o_j^i))}{\sum_{j \neq i}\exp(\mathcal{Q}(o_s^i)^{\top} K(o_j^i))},
\end{aligned}
\end{equation}
where we follow the Transformer architecture \citep{vaswani2017attention} and let $\mathcal{V}(\cdot)$, $\mathcal{Q}(\cdot)$ and $\mathcal{K}(\cdot)$ represent linear functions. The observation embedding with an arbitrary number of agents can thus be represented with a fixed length.

\textbf{Nash Equilibrium:} A core concept in game theory is Nash Equilibrium (NE). When every agent in the MG $m$ acts according to the joint strategy $\boldsymbol{\pi}$ at state $s$, the value of agent $i$, denoted by $v_{\boldsymbol{\pi}}^{i,m}(s)$, is the expectation of $i$'s $\gamma$-discounted cumulative reward. Formally, we define
\begin{align*}
    v_{\boldsymbol{\pi}}^{i,m}(s)= \Exp_{\boldsymbol{a}\sim \boldsymbol{\pi}, s_0=s, s_t\sim {\mathcal{P}_m}} \biggl[\sum_t \gamma ^t r^i_m(s_t,\boldsymbol{a_t})\biggr].
\end{align*}
In this work, the bold symbol is joint over all agents, and variables with superscript $i$ are of agent $i$. Denote the value of the best response for agent $i$ as $v_{\boldsymbol{\pi^{\textrm{-}i}}}^{*i,m}$, which is the best policy of agent $i$ when $\boldsymbol{\pi^{\textrm{-}i}}$ is executed, \ie, $v^{*i,m}_{\boldsymbol{\pi^{\textrm{-}i}}}=\max_{\pi^i}v^{i,m}_{\pi^i,\boldsymbol{\pi^{\textrm{-}i}}}$. Then the joint strategy $\boldsymbol{\pi}$ reaches NE if for any agent $i\in \{1,...,N\},
    v_{\boldsymbol{\pi}}^{i,m}(s)= v_{\boldsymbol{\pi^{\textrm{-}i}}}^{*i,m}(s).$

A common metric to measure the distance to a Nash Equilibrium is \textsc{NashConv}, which represents how much each player (or agent) gains by deviating from the best response (unilaterally) in total. And it can be approximately calculated in small games \citep{johanson2011accelerating,lanctot2017unified}. We denote the \textsc{NashConv} of $\boldsymbol{\pi}$ in the Markov Game $m$ as $\mathcal{D}_m(\boldsymbol{\pi})$, defined as
$$
\mathcal{D}_m(\boldsymbol{\pi})=\mathcal{D}_m(\pi^i,\boldsymbol{\pi}^{\textrm{-}i})=\left\lVert{\left\lVert{v^{*i,m}_{\boldsymbol{\pi}^{\textrm{-}i}} - v^{i,m}_{\boldsymbol{\pi}}}\right\rVert_{s,\infty}}\right\rVert_{i,1} =\sum_{1\leq i\leq N}\max_{s\in\mathcal{S}} \lvert v^{*i,m}_{\boldsymbol{\pi}^{\textrm{-}i}}(s) - v^{i,m}_{\boldsymbol{\pi}}(s)\rvert,
$$
where $\lVert{\cdot}\rVert_{s,\infty}$ is the $\mathcal{L}_{+\infty}$-norm over the state space $\mathcal{S}$ and $\lVert{\cdot}\rVert_{i,1}$ is the $\mathcal{L}_1$-norm over agent indexes. With this definition, the joint strategy $\boldsymbol{\pi}$ reaches NE in $m$ if and only if $\mathcal{D}_m(\boldsymbol{\pi})=0$.

\section{Learning Meta Representations for Agents}
\label{analysis}
\subsection{Problem Statement}
In a single Markov Game, achieving Nash Equilibrium gives reasonable solutions and is of great importance \citep{hu2003nash,yang2018mean,perolat2017learning}. To enable generalization in different MGs, the most straightforward way is to learn a joint strategy set $\boldsymbol{\Pi}$ that contains effective joint strategies for every MG, \eg, the ones that achieve NE. We denote the set of all training MGs as $\mdp$ and the set of evaluation MGs as $\mdp'$. Then the goal is to learn an optimal joint strategy set $\boldsymbol{\Pi}^*$ that satisfies
\begin{equation}
\begin{aligned}
\label{primal}
    \forall m' \in \mdp', \exists \boldsymbol{\pi}\in \boldsymbol{\Pi}^*, \text{~s.t.~} \mathcal{D}_{m'}(\boldsymbol{\pi})=0.
\end{aligned}
\end{equation}

Consider the problem of optimizing $\boldsymbol{\Pi}$ so that the optimal $\boldsymbol{\Pi}^*$ satisfies \eqref{primal}. We first need its size $\lvert\boldsymbol{\Pi}\rvert$ to be sufficiently large to comprise at least one effective strategy for every $m'\in\mdp'$. Then $\boldsymbol{\Pi}$ should be improved with respect to the worst-performing $m'$, \ie, the game with no effective strategy contained in $\boldsymbol{\Pi}$, to achieve low regret $\mathcal{D}_{m'}$ for all $m'$. In other words, $\boldsymbol{\Pi}$ is updated to include the joint strategy $\boldsymbol{\pi}$ that minimizes $\mathcal{D}_{m'}(\boldsymbol{\pi})$. Formally, 
\begin{equation}
\begin{aligned}
\label{mm}
\boldsymbol{\Pi}^* = \argmin_{\boldsymbol{\Pi}}\mathcal{L}(\boldsymbol{\Pi}), \text{~where~} \mathcal{L}(\boldsymbol{\Pi}) = \min_{\boldsymbol{\pi}\sim\boldsymbol{\Pi}}\max_{m'\in \mdp'}\mathcal{D}_{m'}(\boldsymbol{\pi}).
\end{aligned}
\end{equation}
However, minimizing $\mathcal{L}(\boldsymbol{\Pi})$ over the unseen evaluation games in $\mdp'$ is impractical in general. In the following sections, we cope with this intractability by introducing a heuristic algorithm and showing that the resulting objective is indeed equivalent to  \eqref{mm}, optimizing which can lead to the optimal strategy set that satisfies  \eqref{primal}.

\subsection{Relational Representation with Latent Variable Policies}
\label{sec_rr}
Instead of learning independent unimodal policies to form the set $\boldsymbol{\Pi}$, we adopt hierarchical latent variable policies to represent the multimodality, with the game-common and game-specific strategic knowledge explicitly modeled by relational representation. Specifically, for population-varying MGs, we regard the relational graph $g$ as the game-specific knowledge since (1) agents optimally behave in each game by learning the \textit{per-game optimal} relational graph; and (2) agents take different actions when incorporating different strategic relationships so that multiple strategic modes are obtained with varied $g$. For this reason, we treat $g$ as a high-level latent variable that is dynamically generated by $g=\phi(o,z)$, where $z$ is a low-level latent sampled from a learned distribution $p(z\mid m;\psi)$. An agent takes action $a\sim \pi(\cdot\mid o,g;\theta)$, where $g=\phi(o,z)$ and $z\sim p(z\mid m;\psi)$. Then the common knowledge that determines how agents can optimally behave conditioned on different $g$ is distilled into the policy parameter $\theta$, which includes the transformation $\mathcal{V}$ in \eqref{rel_v} and the successive policy network parameters. 

Our implementation is illustrated in Figure \ref{fig_ill}. Specifically, different relational graphs are controlled by the lower-level latent variable $z$, which selects the attention heads inspired by the \textit{option} \citep{sutton1999between} structure. We apply the multi-head self-attention architecture \citep{vaswani2017attention} and set the head number as the dimension of the categorical distribution $p(z)$. Then with a sampled latent, say $z_1$, the relational graph $g_1$ corresponds to the output at the $z_1$-th head. The same relational graph is used for both the decentralized actor and the centralized critic.

\begin{figure*}[h]
\centering
\includegraphics[width=0.88\textwidth]{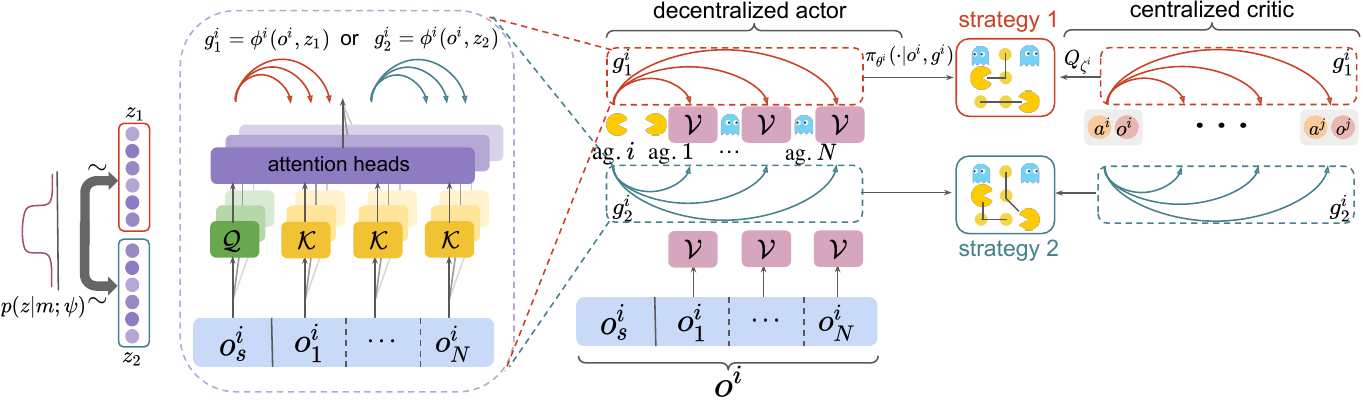}
\caption{The illustration of the implemented architecture of our MRA algorithm. The observation $o^i$ of agent (ag.) $i$ contains the agent's self-information $o_s^i$ and the observed information of other agents, i.e., $o_1^i, \ldots, o_N^i$. After sampling the latent $z$ and computing the (egocentric) relational graph $g^i$, the action and estimated value of agent $i$ can be generated. Here, we show how different strategy modes can be generated, which correspond to $z_1, g_1^i$ and $z_2, g_2^i$.}
\label{fig_ill}
\end{figure*}

Unfortunately, although the above design fits well into the multi-task MARL setup, where each task differs in the number of agents, there is no guarantee that the resulting agents can generalize to novel evaluation MGs. To address this issue, we present the insights and objectives of our \textit{Meta Representations for Agents} (MRA) algorithm as follows.

\subsection{Generalization by Strategic Knowledge Discovery}
Our core idea to enable generalization is to discover the underlying strategic structures in the underlying games. Although the effective policies in the evaluation games are never known during training, agents can still learn the common strategic knowledge and different behavioral modes solely in the \textit{training} games in an unsupervised manner with the imposed \textit{suboptimal} game-specific knowledge. In particular for population varying games, the agents are assigned different strategic relationships, \ie, each agent pays additional attention to some agents while ignoring others. Instead of learning only one optimal joint policy, training with multiple strategic relationships enables the unsupervised discovery of behavior modes, some of which offer appreciable returns in evaluation MGs. Thus, when evaluating in novel games, the desired policy behaviors can be quickly generated by adaptation. In the extreme case that sufficiently many strategic modes are captured with an extremely large latent space, the desired policy for evaluation games can be directly found.

Specifically, agents optimally behave in each game $m\in\mdp$ with the optimal policy parameter $\theta^*$ and the (per-game) optimal relational graph $g^*$. By imposing \textit{knowledge} (or \textit{relation}) \textit{variations} in $m$, \ie, \textit{multiple suboptimal} $g$ at a certain observation, agents learn how the best decisions to accomplish the task are made, \ie, learn $\theta^*$ that achieves the highest average return. With the discovery of distinct strategic modes, the game-common knowledge contained in $\theta^*$ is obtained. Thus, when agents are in the novel MGs $m'\in\mdp'$, their policies can effectively adapt by learning the optimal relational graph in $m'$, or achieve zero-shot transfer (without adaptation) if the latent space is large. This gives the objective of $\theta^i$ that maximizes the average return of all knowledge variations and all training MGs:
\begin{equation}
\begin{aligned}
\label{objective_theta}
\max_{\theta^i}\mathcal{L}(\theta^i) = \max_{\theta^i} \Exp_{m\sim\mdp,g^i}\left[v^{i,m}_{\pi^i(\cdot\mid\cdot,g^i;\theta^{i}), \boldsymbol{\pi^{\textrm{-}i}}}\right].
\end{aligned}
\end{equation}
Notably,  \eqref{objective_theta} differs from the objective of multi-task learning where the average training return is maximized by learning the optimal $\theta^*$ and a \textit{single optimal} relational graph in each game.

In order to perform well in all $m'\in\mdp'$, the strategic modes captured during training should cover as many behaviors as possible. This requires a large latent space size $\lvert Z\rvert$ and \textit{diverse} actions. Therefore, we introduce a diversity-driven objective that encourages high mutual information between $g$ and $a$ for behavior diversity, as well as between $m$ and $g$ to extract game-specific knowledge, defined as follows:
\begin{equation}
\begin{aligned}
\label{objective_phi}
\max_{\psi^i,\phi^i}\mathcal{L}(\psi^i,\phi^i) = \max_{\psi^i,\phi^i} \mathcal{I}(g^i;a^i\mid o^i) + \mathcal{I}(m;g^i\mid o^i),
\end{aligned}
\end{equation}
where $\mathcal{I}(g;a\mid o) = \mathcal{H}(a\mid o) - \mathcal{H}(a\mid o,g)$ is the mutual information between $g$ and $a$ conditioned on observation $o$. With a slight abuse of notation, the variable $m$ denotes the basic information of game $m$, such as the numbers of agents of each set of homogeneous agents. 

\subsection{Fast Adaptation with Limited Latent Space Size}
\label{fast_adaptation_latent_space}
Despite the diversity-inducing objective, we also need a large enough latent space $\lvert Z\rvert$. Specifically, denote by $\boldsymbol{\Pi}_{\Theta}$ the joint strategy set parameterized by $\Theta=\{\psi,\phi,\theta\}$. Then achieving zero-shot adaptation requires $\lvert Z\rvert=\lvert\boldsymbol{\Pi}_{\Theta}\rvert\geq\lvert\boldsymbol{\Pi}^*\rvert$. However, without further assumptions on the Markov Games in the evaluation set $\mdp'$, the size $\lvert\boldsymbol{\Pi}^*\rvert$ will be unbounded. Therefore, with practical limited-size latent models, we adopt the techniques from Reptile \citep{nichol2018first} to fast adapt to evaluation games, achieved by repeatedly selecting a game and moving the parameter towards the trained weights on this game to find the point near all games' solution manifolds.


Specifically, we optimize $\theta$ by performing $K$ policy gradient steps on each individual MG, instead of trying to maximize the average return over all training games in a joint way. Formally, after selecting a training MG $m$, the objective for $\theta$ in the $k$-th mini-batch changes from \eqref{objective_theta} to $\mathcal{L}^k_{m}(\theta^i) = \Exp_{g^i}[v^{i,m}_{\pi^i(\cdot\mid\cdot,g^i;\theta^{i}), \boldsymbol{\pi^{\textrm{-}i}}}]$. Let $U_m^K(\theta)$ denote the policy parameter after $K$ gradient steps\footnote{The gradient can be calculated by leveraging a centralized value estimation as described in \eqref{grad_pi}.} with learning rate $\beta$. Then $\theta$ is updated by $\theta\shortleftarrow\theta +\alpha \Delta\theta$, where $\alpha$ is a hyperparameter and $\Delta\theta = U_m^K(\theta) - \theta$. By doing so, the first-order gradient information can be leveraged to update $\theta$ towards the instance-specific adapted policy parameter. This can be seen by writing the expected update $\mathbb{E}[\Delta \theta]$ over mini-batches in $m$ as
\begin{equation}
    \label{update_expected}
    \begin{aligned}
    \mathbb{E}[\Delta \theta] \!= (K-1) \mathbb{E} \left[\nabla\mathcal{L}_m^k(\theta)\right] + \frac{(K-1)(K-2)\beta}{2} \mathbb{E}\left[\nabla{\Big(\nabla{\mathcal{L}_m^k(\theta)} \nabla{\mathcal{L}_m^j(\theta)}}\Big)\right],
    \end{aligned}
\end{equation}
where $\nabla{\mathcal{L}}_m^k(\theta)$ is the gradient at the initial $\theta$. The derivation is in Appendix \ref{fast_adap}. Notably, the second term on the RHS in  \eqref{update_expected} is with the direction that increases the inner product between gradients of different mini-batches $j,k$. That is, $\theta$ is optimized not only to maximize the return under all relation variations, but also towards the place where gradients of different variations point to the same direction, \ie, the place that is easy to optimize from. With this property, when the optimal strategic modes of evaluation MGs are not discovered during training, the learned $\theta$ can still fast adapt to effective policies.

\begin{wrapfigure}{R}{0.55\textwidth}
\begin{minipage}{0.55\textwidth}
        \vspace{-1.3cm}
  \begin{algorithm}[H]
    \caption{MRA: Training in the MG set $\mdp$}
    \begin{algorithmic}[1]
\WHILE {not converged}
    \FOR {$\text{MG } m \in \mdp$ with population $N_m$}
        \STATE {Every agent samples $z^i\sim p_{\psi^i}(\cdot|m)$ and  $g^i = \phi^i(o^i,z^i)$}\hspace{-0.1cm}
            \STATE {Execute $a^1, \ldots, a^{N_m}$ in $m$, where $a^i\!\sim\!\pi_{\theta^i}(\cdot|o^i,g^i)$}
            \FOR {agent $i = 1, \ldots, N_m$}
            \STATE {Update $\theta^i$ by $\theta^i\shortleftarrow\theta^i + \alpha(U_m^K(\theta^i) - \theta^i)$}\label{line1}\\
            \STATE {Update $\psi^i$ and $\phi^i$ by \eqref{objective_phi}}\label{line2}
    \ENDFOR
    \ENDFOR
\ENDWHILE
    \end{algorithmic}
    \label{alg_high}
  \end{algorithm}
  \vspace{-40pt}
\end{minipage}
\end{wrapfigure}
The pseudocode of \textit{Meta Representations for Agents} (MRA) is presented in Algorithm \ref{alg_high}, where iterative optimization of \eqref{objective_theta} and  \eqref{objective_phi} is performed to update the policy parameter $\theta$ and the parameters $\psi, \phi$ in the relational representation.
It is worth noting that the right pseudocode is a high-level abstract of our method. We will discuss the optimization procedure and the algorithm implementation in Section \ref{sec_opt} and Appendix \ref{pcode}.

\section{Analysis}
\label{analysis_the}
In this section, we provide a theoretical analysis of MRA. Specifically, we show that the optimal parameters resulting from the MRA objective in \eqref{objective_theta} and  \eqref{objective_phi} ensure generalizability under certain conditions.
To begin with, we introduce the Markov state transition operator $\mathcal{P}_m^{\pi^i,\boldsymbol{\pi^{\textrm{-}i}}}$ in MG $m$, defined as
$$
\left(\mathcal{P}_m^{\pi^i,\boldsymbol{\pi^{\textrm{-}i}}} x\right)(s)= \int_{s'\sim \mathcal{S}}x(s')\Exp_{\pi^i,\boldsymbol{\pi^{\textrm{-}i}}}\left[\mathcal{P}_m(d s'\mid s,a^i,\boldsymbol{a^{\textrm{-}i}})\right].
$$
Here, $x\colon \mathcal{S}\shortrightarrow\mathbb{R}$ is an $L_1$ Lebesgue integrable function. The norm of the operator is defined as $\lVert{\Lambda}\rVert_{op} := \sup\{\lVert{\Lambda x}\rVert_{s,1}\colon \lVert{x}\rVert_{s,1}\leq 1 \}$.

Then we make the assumption of Lipschitz Game.
\begin{assumption}
\label{lip_assumption}
(Lipschitz Game). For any Markov Game $m\in(\mdp\cup\mdp')$, there exists a Lipschitz coefficient $\iota_m>0$ such that for all agent in $m$ and $s\in\mathcal{S}$:
\begin{align*}
\left\lVert{\mathcal{P}_m^{\pi^{*i},\boldsymbol{\pi^{\textrm{-}i}}} - \mathcal{P}_m^{\pi^{i},\boldsymbol{\pi^{\textrm{-}i}}}}\right\rVert_{op} \leq \iota_m \left\lVert{\left\lVert{\pi^{*i}(a\mid s)-\pi^i(a\mid s)}\right\rVert_{a,1}}\right\rVert_{s,\infty},
\end{align*}
where $\lVert{\cdot}\rVert_{a,1}$ is the $\mathcal{L}_1$-norm over the action space $\mathcal{A}$. 
\end{assumption}

The Lipschitz assumption has been made in a plethora of preceding studies \citep{liu2021policy,zhang2019policy,zhang2022conservative}. We note that Assumption \ref{lip_assumption} is reasonable since the Lipschitz coefficient $\iota_m$ can be interpreted as the \textit{influence} of agents \citep{radanovic2019learning,dimitrakakis2019multi}, which measures how much the policy changing of an agent can affect the game environment.

Then we define a distance metric that measures the discrepancy between $\mdp$ and $\mdp'$ by comparing and computing the distance to NE in the games of the two sets. Let $N_m$ denote the total number of agents in game $m$, and $h_{i,m}$ denote the homogeneous agent set of agent $i$ in $m$. 
\begin{definition}
\label{def_distance_sets}
For two sets of MGs $\mdp$ and $\mdp'$, we define the distance $\varsigma$ between $\mdp$ and $\mdp'$ by
\begin{align*}
\varsigma = \max_{\substack{m'\in \mdp'\\i\in\{1,\ldots,N_{m'}\}}}\min_{ \substack{m\in\mdp,i'\in h_{i,m}\\ \boldsymbol{\pi}\in\{\boldsymbol{\pi}\mid\mathcal{D}_{m}(\boldsymbol{\pi})=0\}\\\boldsymbol{\pi'}\in\{\boldsymbol{\pi'}\mid\mathcal{D}_{m'}(\boldsymbol{\pi'})=0\}}}\mathcal{D}_{m'}(\pi^{i'},\boldsymbol{\pi'^{\textrm{-}i}}).
\end{align*}
\end{definition}

We also define the $\epsilon$-range joint strategy set $\boldsymbol{\hat{\Pi}}$ to guide the policy learning of agents during training.
\begin{definition}
\label{def4}
For the training MG set $\mdp$ and $\epsilon > 0$, the $\epsilon$-range joint strategy set $\boldsymbol{\hat{\Pi}}$ is defined as:
\begin{align*}
\boldsymbol{\hat{\Pi}} = \bigcup_{m\in\mdp}\boldsymbol{\hat{\Pi}_m}, 
\text{~~where~~} \; \boldsymbol{\hat{\Pi}_m} = \{\boldsymbol{\pi} \mid\mathcal{D}_m(\boldsymbol{\pi}) \leq \epsilon\}.
\end{align*}
\end{definition}
By bounding $\epsilon$ that characterizes a large set $\boldsymbol{\hat{\Pi}}$, we show by the following theorem that \eqref{mm} can be solved from the constrained mutual information maximization objective. 
\begin{theorem}
\label{theorem1}
If $\lvert\boldsymbol{\Pi}_{\Theta}\rvert \geq \lvert\boldsymbol{\hat{\Pi}}\rvert$ and $\epsilon$ satisfies $\epsilon\geq \varsigma - \min_{\iota_m,\iota_{m'}}\frac{\varsigma\gamma\left(\iota_{m'}-\iota_{m}\right)}{\gamma \iota_{m'}+1-\gamma}$, then with the optimal parameters $\Theta^*=\{\psi^*,\phi^*,\theta^*\}$ given by
\begin{equation}
\begin{aligned}
\label{objective}
    \psi^*, \phi^* = \argmax_{\psi,\phi} \mathcal{I}(g;a\mid o) + \mathcal{I}(m;g\mid o)
    \text{~~s.t.~~}  \boldsymbol{\pi}_{\theta^*} \in \boldsymbol{\hat{\Pi}},
\end{aligned}
\end{equation}
for every evaluation Markov Game $m'\in \mdp'$, there exists a joint strategy $\boldsymbol{\pi}\in \boldsymbol{\Pi}_{\Theta^*}$ that reaches Nash Equilibrium (\ie, $\boldsymbol{\Pi}_{\Theta^*}=\boldsymbol{\Pi}^*$ satisfies \eqref{primal}). Here, the variables and parameters are per-agent, \eg, $\boldsymbol{\pi}_{\theta}$ is joint over $\pi_{\theta^i}$, and the superscript is omitted for clarity.\vspace{-0.3cm}
\end{theorem}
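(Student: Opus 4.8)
The plan is to connect the abstract min--max problem \eqref{mm} to the constrained mutual-information objective \eqref{objective} by showing that the constraint $\boldsymbol{\pi}_{\theta^*}\in\boldsymbol{\hat\Pi}$, together with the cardinality condition $|\boldsymbol{\Pi}_{\Theta}|\ge|\boldsymbol{\hat\Pi}|$, already forces $\boldsymbol{\Pi}_{\Theta^*}$ to contain a Nash strategy for every $m'\in\mdp'$. First I would unpack what $\boldsymbol{\pi}_{\theta^*}\in\boldsymbol{\hat\Pi}$ means: by Definition \ref{def4}, it says that for \emph{every} training game $m\in\mdp$ the learned set contains some joint strategy with $\mathcal{D}_m(\boldsymbol{\pi})\le\epsilon$, i.e.\ an $\epsilon$-approximate NE in each training game. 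The cardinality bound guarantees that the latent model is expressive enough to realize all of $\boldsymbol{\hat\Pi}$ simultaneously, so $\boldsymbol{\Pi}_{\Theta^*}\supseteq\boldsymbol{\hat\Pi}$ after the constrained optimization (the MI objective only selects \emph{which} such set, picking the most diverse one, but cannot shrink it below $\boldsymbol{\hat\Pi}$). The MI term itself plays no role in the existence claim of the theorem — it is what makes the representative strategies spread out — so I would state this reduction and then focus entirely on: given that $\boldsymbol{\Pi}_{\Theta^*}$ contains an $\epsilon$-NE for each training game, show it contains an exact NE for each evaluation game.

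The core estimate is a transfer bound: fix $m'\in\mdp'$ and agent $i$; by Definition \ref{def_distance_sets} there is a training game $m$, a homogeneous partner index $i'\in h_{i,m}$, an exact NE $\boldsymbol{\pi}$ of $m$ and an exact NE $\boldsymbol{\pi'}$ of $m'$ with $\mathcal{D}_{m'}(\pi^{i'},\boldsymbol{\pi'}^{\textrm{-}i})\le\varsigma$. The idea is to take the $\epsilon$-NE of $m$ that lives in $\boldsymbol{\hat\Pi}_m$, transplant agent $i$'s component (via the homogeneity/role-symmetry identification) into the profile $\boldsymbol{\pi'}^{\textrm{-}i}$ for $m'$, and bound $\mathcal{D}_{m'}$ of the resulting profile. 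I would expand $\mathcal{D}_{m'}$ as a difference of value functions $\|v^{*i,m'}_{\boldsymbol{\pi}^{\textrm{-}i}}-v^{i,m'}_{\boldsymbol{\pi}}\|$, insert and subtract the value of the reference NE strategy $\pi^{i'}$, and use the triangle inequality to split into (a) the distance from the $\epsilon$-NE component to $\pi^{i'}$, controlled through $\epsilon$, and (b) $\mathcal{D}_{m'}(\pi^{i'},\boldsymbol{\pi'}^{\textrm{-}i})\le\varsigma$. The key quantitative ingredient is Assumption \ref{lip_assumption}: a change in a single agent's policy changes the Markov transition operator by at most $\iota_m$ (resp.\ $\iota_{m'}$) times the policy difference in $\|\cdot\|_{a,1}$; combined with the standard Neumann-series bound $\|(I-\gamma \mathcal{P})^{-1}\|_{op}\le 1/(1-\gamma)$ for the resolvent, a policy perturbation of size $\delta$ perturbs the value by at most $\gamma\iota_m\delta/(1-\gamma)$ (up to reward-range factors, which are in $[0,1]$). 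Propagating an $\epsilon$-suboptimality from game $m$ (coefficient $\iota_m$) through to game $m'$ (coefficient $\iota_{m'}$) is exactly what produces the expression $\varsigma\gamma(\iota_{m'}-\iota_m)/(\gamma\iota_{m'}+1-\gamma)$ in the hypothesis: solving the inequality $\mathcal{D}_{m'}\le 0$ for the required $\epsilon$ rearranges to $\epsilon\ge\varsigma-\varsigma\gamma(\iota_{m'}-\iota_m)/(\gamma\iota_{m'}+1-\gamma)$, and taking the worst case over the minimizing pair $(\iota_m,\iota_{m'})$ gives the stated condition. Since $\mathcal{D}_{m'}\ge 0$ always, $\mathcal{D}_{m'}=0$, i.e.\ the transplanted profile is an exact NE of $m'$, and it lies in $\boldsymbol{\Pi}_{\Theta^*}$ because it was built from the $\epsilon$-NE in $\boldsymbol{\hat\Pi}_m\subseteq\boldsymbol{\Pi}_{\Theta^*}$.

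Concretely the steps are: (1) reduce the theorem to ``$\boldsymbol{\Pi}_{\Theta^*}\supseteq\boldsymbol{\hat\Pi}$ plus the transfer bound'', using the constraint and the cardinality assumption; (2) prove a perturbation lemma — if a policy is within $\delta$ of a best response in $\|\cdot\|_{a,1}$ uniformly over states, its value is within $\gamma\iota_m\delta/(1-\gamma)$ of optimal — via the transition-operator Lipschitz assumption and the resolvent bound; (3) for a fixed $m'$ and agent $i$, use Definition \ref{def_distance_sets} to pick the witnessing $m,i',\boldsymbol{\pi},\boldsymbol{\pi'}$, transplant the $\epsilon$-NE component through role symmetry, and chain the perturbation lemma with the $\varsigma$ bound to get $\mathcal{D}_{m'}\le\varsigma-\gamma(\iota_{m'}-\iota_m)\varsigma/(\gamma\iota_{m'}+1-\gamma)-\big(\text{slack from }\epsilon\big)$, which is $\le 0$ under the hypothesis on $\epsilon$; (4) conclude $\mathcal{D}_{m'}(\boldsymbol{\pi})=0$ with $\boldsymbol{\pi}\in\boldsymbol{\Pi}_{\Theta^*}$, i.e.\ \eqref{primal}. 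The main obstacle I anticipate is step (3): making the ``transplant across games'' rigorous requires that role-symmetry genuinely lets an agent's policy from $m$ be reinterpreted as a policy for a homogeneous agent in $m'$ despite the differing populations, and that the relational representation of Eq.\ \eqref{rel_v} makes the policy well-defined for the new agent count (this is where the fixed-length embedding matters); and carefully tracking which Lipschitz coefficient governs which leg of the perturbation, so that the algebra collapses to exactly the stated threshold rather than a looser bound, will need care. The MI maximization and the $U_m^K$ fast-adaptation machinery are not needed for this existence statement and I would not invoke them.
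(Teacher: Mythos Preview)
Your proposal has two genuine gaps relative to the paper's argument.

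\textbf{First, the role of the mutual-information objective.} You claim that the cardinality bound $|\boldsymbol{\Pi}_{\Theta}|\ge|\boldsymbol{\hat\Pi}|$ together with the constraint $\boldsymbol{\pi}_{\theta^*}\in\boldsymbol{\hat\Pi}$ already forces $\boldsymbol{\Pi}_{\Theta^*}\supseteq\boldsymbol{\hat\Pi}$, and that the MI term ``plays no role in the existence claim''. This is incorrect: the constraint only gives the inclusion $\boldsymbol{\Pi}_{\Theta^*}\subseteq\boldsymbol{\hat\Pi}$ (every realized policy is an $\epsilon$-approximate NE of some training game), and the cardinality condition says only that the latent model \emph{could} cover $\boldsymbol{\hat\Pi}$, not that it does. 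Without the diversity term, all latents could collapse to a single element of $\boldsymbol{\hat\Pi}$ and the constraint would still be satisfied. In the paper's proof the MI maximization is exactly the mechanism that forces coverage: it is shown to be equivalent (under the constraint) to maximizing $\mathcal{H}(\boldsymbol{\pi})$, which in turn equals the max--min log-likelihood objective of Lemma~\ref{prop_in}; that lemma is what establishes that at the optimum every $\boldsymbol{\hat\pi}\in\boldsymbol{\hat\Pi}$ is matched by some $\boldsymbol{\pi}\in\boldsymbol{\Pi}_{\Theta^*}$. You cannot bypass this step.

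\textbf{Second, the direction of the transfer bound.} Your step (3) proposes to take an $\epsilon$-approximate NE of a training game $m$, transplant a component into $m'$, and argue that the result is an \emph{exact} NE of $m'$. But then a \emph{larger} $\epsilon$ would make the transplanted profile \emph{worse} in $m'$, whereas the hypothesis is a \emph{lower} bound $\epsilon\ge\varsigma-\ldots$, so your inequality $\mathcal{D}_{m'}\le(\text{expression})-(\text{slack from }\epsilon)\le 0$ runs the wrong way. The paper's Lemma~\ref{prop1} goes in the opposite direction: it shows that the exact NE $\boldsymbol{\pi'}$ of each evaluation game $m'$ already lies in $\boldsymbol{\hat\Pi}$, i.e.\ there is some training game $m$ with $\mathcal{D}_m(\pi'^{i},\boldsymbol{\pi}^{\textrm{-}i'})\le\epsilon$. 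This is why a \emph{large enough} $\epsilon$ is needed: it enlarges $\boldsymbol{\hat\Pi}$ until it contains all evaluation NE. Once that is established, the coverage $\boldsymbol{\Pi}_{\Theta^*}\supseteq\boldsymbol{\hat\Pi}$ from the MI step immediately yields \eqref{primal}. Your perturbation lemma (step 2) is essentially the paper's Lemma~\ref{l1} and is correct, but it is used to bound $\mathcal{D}_m$ of the evaluation NE in a training game, not $\mathcal{D}_{m'}$ of a training $\epsilon$-NE in an evaluation game.
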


\begin{proof}
See Appendix \ref{proof_label} for a full proof and Appendix \ref{proof_sketch} for a proof sketch.\vspace{-0.3cm}
\end{proof}

Theorem \ref{theorem1} suggests a general paradigm of diversity-driven learning that is effective when $\boldsymbol{\hat{\Pi}}$ satisfies certain properties. In practical MGs, however, the unknown Lipschitz coefficient and the hardness of calculating $\varsigma$ pose challenges to computing the satisfying $\epsilon$. An approximation to the optimal parameters in  \eqref{objective} is to perform iterative optimization following  \eqref{objective_theta} and \eqref{objective_phi}. 

With fixed $\phi$ and $\psi$, the objective of $\theta$ in  \eqref{objective_theta} (greedily) maximizes the expected value over variations in order to minimize the distances to Nash of different policy modes. In other words, the distance $\mathcal{D}_m(\boldsymbol{\pi})$ to the equilibrium of the corresponding joint strategies is minimized to satisfy $\mathcal{D}_m(\boldsymbol{\pi})\leq \epsilon$ in the long run. Then the optimization of $\phi$ and $\psi$ follows to maximize $\mathcal{I}(g;a\mid o) + \mathcal{I}(m;g\mid o)$. By iteratively improving the mutual information and updating $\theta$ towards the $\epsilon$-range $\boldsymbol{\hat{\Pi}}$, the obtained solutions are close to the optimal parameters in  \eqref{objective}. Besides, the condition $\lvert\boldsymbol{\Pi}_{\Theta}\rvert \geq \lvert\boldsymbol{\hat{\Pi}}\rvert$ in the theorem supports the intuition that a sufficiently large policy set (or latent space) is required for zero-shot transfer. However, as an approximation to the theorem, MRA also has some limitations, which we discuss and provide potential improvements in Section \ref{conc}.

\section{Objective Optimization}
\label{sec_opt}
We have shown that the iterative optimization of MRA arises from a theoretically justified objective. In this section, we present the optimization procedures in Line \ref{line1} and Line \ref{line2} of Algorithm \ref{alg_high} that correspond to the policy gradient and the mutual information maximization, respectively.
\label{methods}
\subsection{Multi-Agent Actor-Critic Policy Gradient}
The policy parameter $\theta^i$ in objective  \eqref{objective_theta} is optimized by introducing a \textit{centralized} critic $Q_{\zeta^i}$ for each agent $i$ \citep{lowe2017multi}. Denote the target network with delayed policy and critic parameters as $\bar{\theta}$, $\bar{\zeta}$, and replay buffer as $D$. The parameterized critic $Q_{\zeta^i}$ is optimized to minimize
\begin{equation}
\begin{aligned}
\label{loss_q}
\mathcal{L}(\zeta^i) = \Exp_{(\boldsymbol{o},\boldsymbol{a},\boldsymbol{o'},\boldsymbol{r}) \sim D}\left[\left(Q_{\zeta^i}(\boldsymbol{o},\boldsymbol{a}) - y^i \right)^2\right], 
\text{where}\ y^i = r^i + \gamma  \Exp_{\boldsymbol{a'} \sim \boldsymbol{\pi}_{\bar{\theta}}}\left[Q_{\bar{\zeta}^i}(\boldsymbol{o'},\boldsymbol{a'}) \right]
\end{aligned}
\end{equation}

Then the gradient of the policy parameter $\theta^i$ of agent $i$ during training is given by
\begin{equation}
\begin{aligned}
\label{grad_pi}
\nabla_{\theta^i}\mathcal{L}(\boldsymbol{\pi}) &=  \Exp_{(\boldsymbol{o},\boldsymbol{g}) \sim D, \boldsymbol{a}\sim\boldsymbol{\pi}}\left[\nabla_{\theta^i}\log\pi_{\theta^i}(a^i\mid o^i,g^i)Q_{\zeta^i}(\boldsymbol{o},\boldsymbol{a})\right],
\end{aligned}
\end{equation}
where $Q_{\zeta^i}(\boldsymbol{o},\boldsymbol{a})$ can also be replaced by the advantage function $A_{\zeta^i}(\boldsymbol{o},\boldsymbol{a}) := Q_{\zeta^i}(\boldsymbol{o},\boldsymbol{a}) - \Exp_{\boldsymbol{a}}[Q_{\zeta^i}(\boldsymbol{o},\boldsymbol{a})]$.


When evaluation in a novel MG, $\theta^i$ and $\phi^i$ is fine-tuned to greedily maximize agents' individual rewards. Denote $\omega^i = \{\theta^i,\phi^i\}$. Then the gradient of $\omega^i$ is given by
\begin{equation}
\begin{aligned}
\label{grad_pi_eval}
\nabla_{\omega^i}\mathcal{L}(\boldsymbol{\pi}) &=  \Exp_{\substack{\boldsymbol{o}\sim D, \boldsymbol{a}\sim\boldsymbol{\pi}}}\left[\nabla_{\omega^i}\log\pi_{\theta^i}(a^i\mid o^i,\phi^i(o^i,z^i))Q_{\zeta^i}(\boldsymbol{o},\boldsymbol{a})\right].
\end{aligned}
\end{equation}

In role-symmetric games, the parameters $\theta,\phi,\zeta$ are shared by homogeneous agents. And $\phi$ can be implemented as the option architecture \citep{sutton1999between}, \ie, $g$ corresponds to the $z$-th option sampled from the categorical distribution. Implementation details and the variants can be found in Appendix \ref{add_exp}.

\subsection{Mutual Information Maximization}
\label{o_s}
In an iteration, several update steps of actor and critic are followed by mutual information $\mathcal{I}(g^i;a^i\mid o^i) + \mathcal{I}(m;g^i\mid o^i)$ maximization. According to the definition of mutual information and the non-negativeness of KL divergence, the following bound holds. And $\phi$ is optimized to optimize  \eqref{mi1} by gradient ascent. All the derivations in this section are provided in Appendix \ref{mi_der}.
\begin{equation}
\begin{aligned}
\label{mi1}
\mathcal{I}(g^i;a^i\mid o^i) \!\geq\! \Exp_{g^i, o^i\sim D, a^i \!\sim\! \pi_{{\theta^i}}(\cdot\mid o^i,g^i)}\left[\log\frac{\pi_{\bar{\theta}^i}(a^i\mid o^i,g^i)}{p(a^i\mid o^i)}\right],
\end{aligned}
\end{equation}
where $p(a^i\mid o^i) = \Exp_{z'\sim p(\cdot\mid m), g'=\phi^i(o^i,z')}\left[\pi_{\bar{\theta}}(a^i\mid o^i,g')\right]$.

The second mutual information term $\mathcal{I}(m;g^i\mid o^i)$ can be simplified by
\begin{equation}
\begin{aligned}
\label{mimi2}
    \mathcal{I}(m;g^i\mid o^i) = \mathbb{E}_{m,o^i\sim D}\left[\log p(m\mid o^i,g^i)\right] + \log \lvert\mdp\rvert,
\end{aligned}
\end{equation}
where $\lvert\mdp\rvert$ is the number of training MGs. To calculate the RHS of  \eqref{mimi2}, we introduce an auxiliary inference network $\xi$. Denote the one-hot index of MG as $x$. Then the auxiliary network outputs the index prediction $\hat{x}$, \ie, $p(\hat{x}\mid o,g;\xi)$. The cross-entropy objective is given as follows:
\begin{equation}
\begin{aligned}
\label{mi2}
    \min_{\psi,\xi} \mathbb{E}_{z \sim p(\cdot\mid m;\psi), o^i\sim D}\left[-x \log\left(p\left(\hat{x}\mid o^i,\phi^i(o^i,z);\xi\right)\right)\right].
\end{aligned}
\end{equation}
By minimizing \eqref{mi2}, $\psi$ and $\xi$ are simultaneously optimized. 

\section{Experiments}
\label{exp}
Experiments are conducted in three environments built upon the particle-world framework \citep{lowe2017multi} that cover both competitive and mixed games, including the treasure collection environment, resource occupation environment, and the Pacman-like world. We provide the experiment settings and task descriptions to Appendix \ref{sec_app_env_set}. 

\subsection{Benefits of game-common Strategic Knowledge}
\label{benefits_single}
We first conduct experiments to show the benefits of the proposed method by demonstrating that when training in various Markov Games, the game-common strategic knowledge can benefit individual training games.

In Figure \ref{single_fig}, we compare MRA and the following baseline methods: (1) the MADDPG algorithm \citep{lowe2017multi} with relational representations (\textbf{MADDPG}); (2) the Reptile algorithm \citep{nichol2018first} (\textbf{Reptile}); (3) baseline with the same network architectures as MRA, but agents learn their policies only in a \textit{single} MG (\textbf{baseline}).
\vspace{-0.45cm}

\begin{figure*}[htbp]
\centering
\subfigure[Treasure collection.]{
\begin{minipage}[t]{0.23\linewidth}
\centering
\includegraphics[width=1.7in]{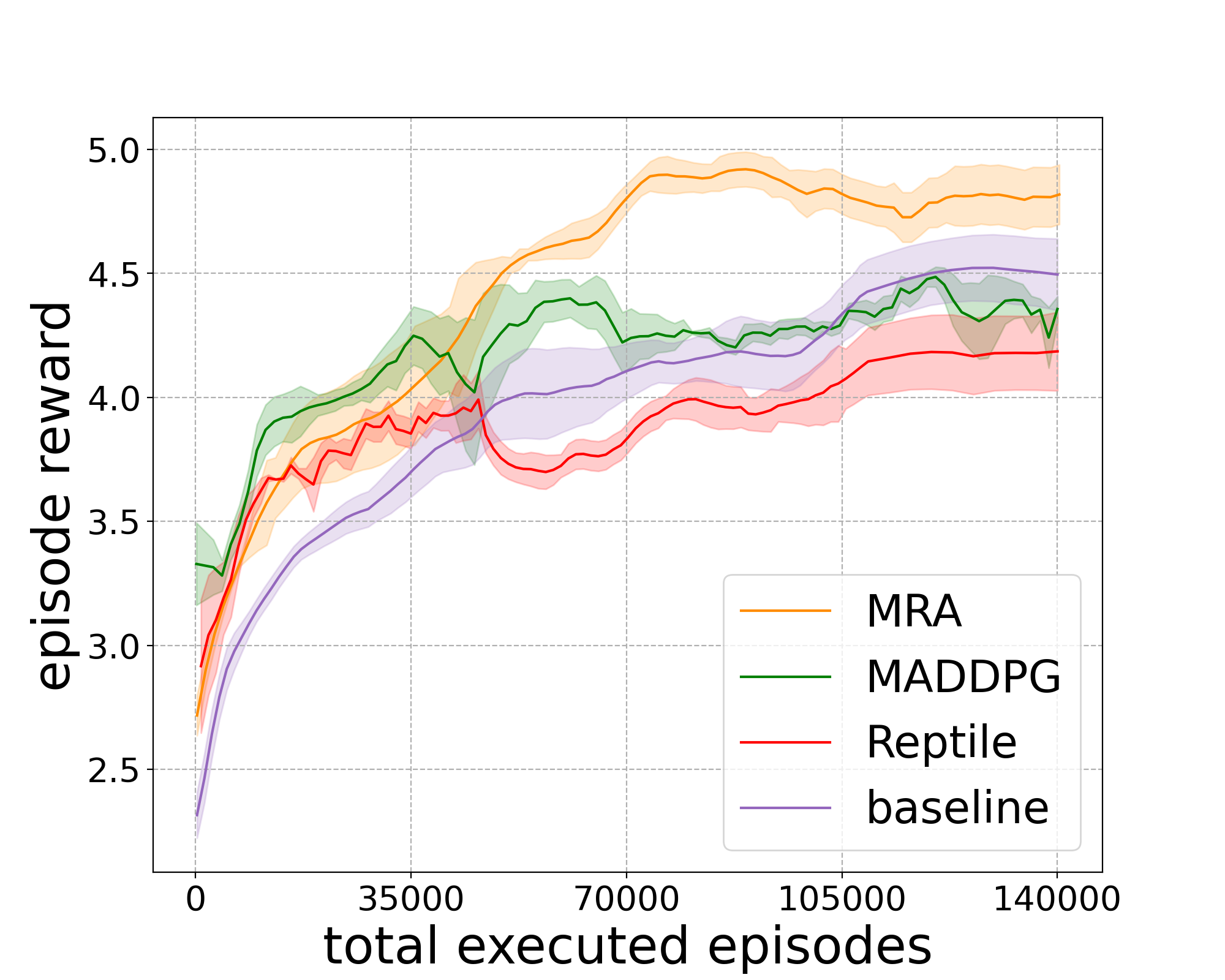}
\end{minipage}%
}%
\subfigure[Resource  occupation.]{
\begin{minipage}[t]{0.23\linewidth}
\centering
\includegraphics[width=1.7in]{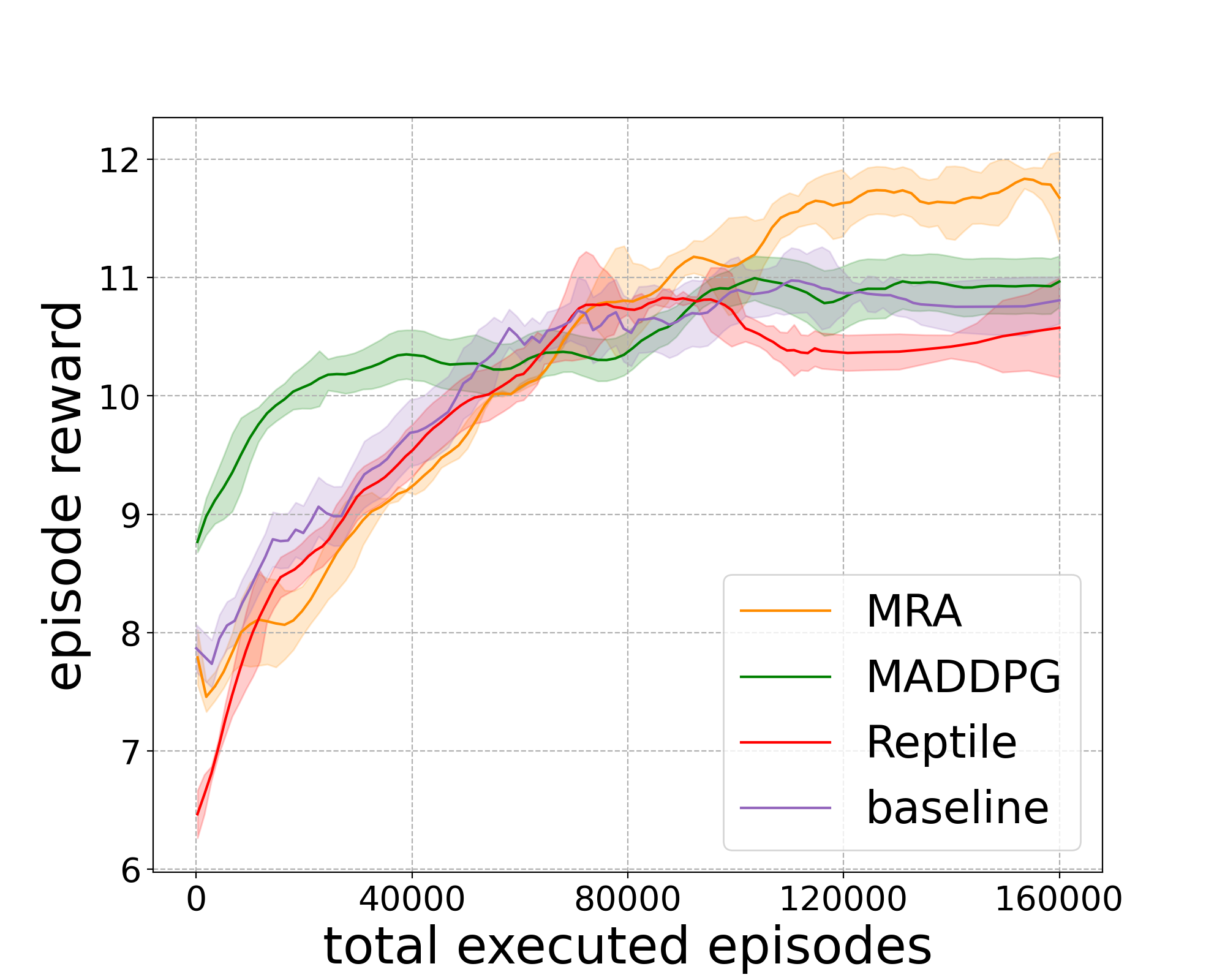}
\end{minipage}%
}%
\subfigure[Pacman-like world: \textbf{Left:} Pac-Man. \textbf{Right:} Ghost.]{
\begin{minipage}[t]{0.53\linewidth}
\centering
\includegraphics[width=1.7in]{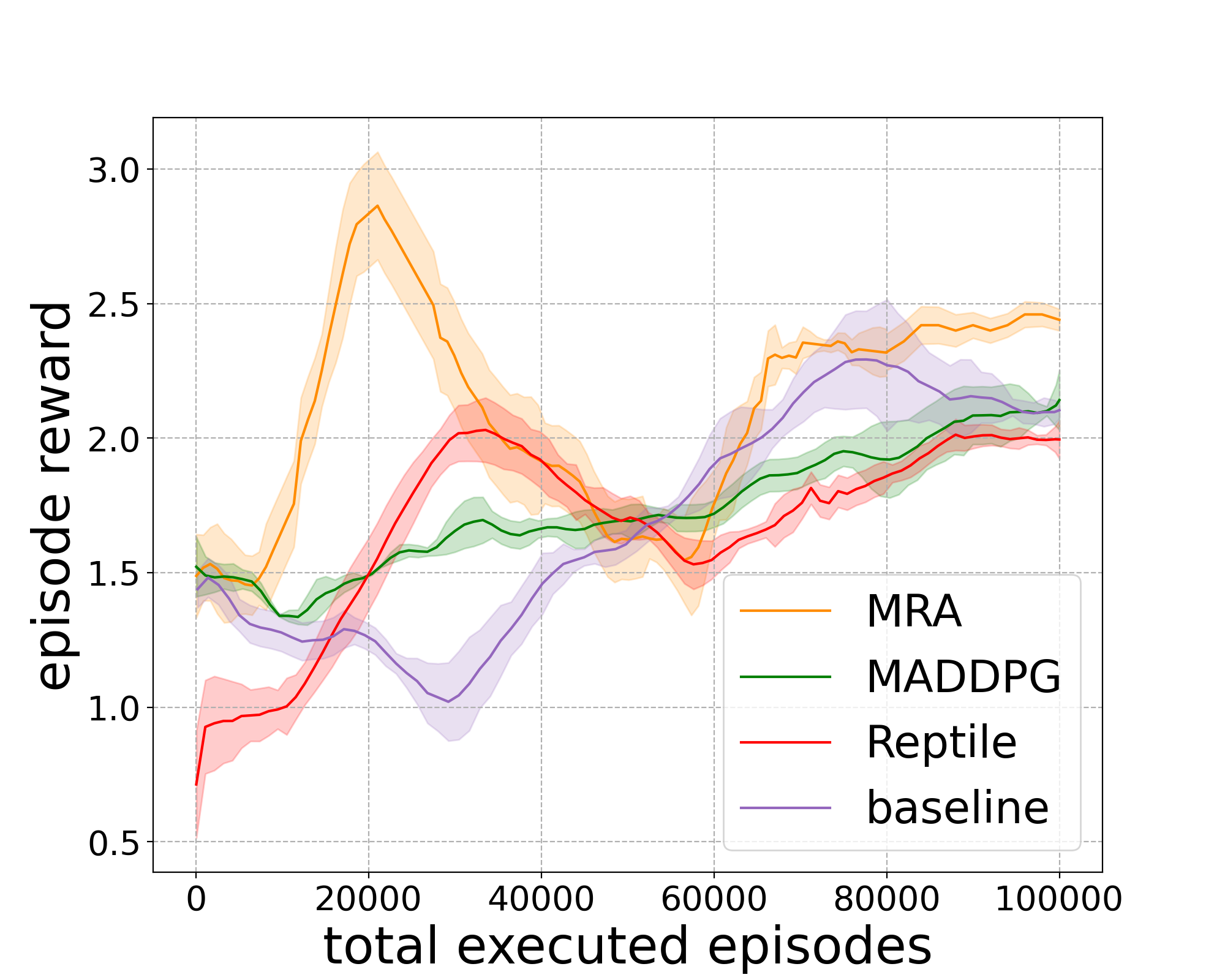}
\hspace{-0.17in}
\includegraphics[width=1.7in]{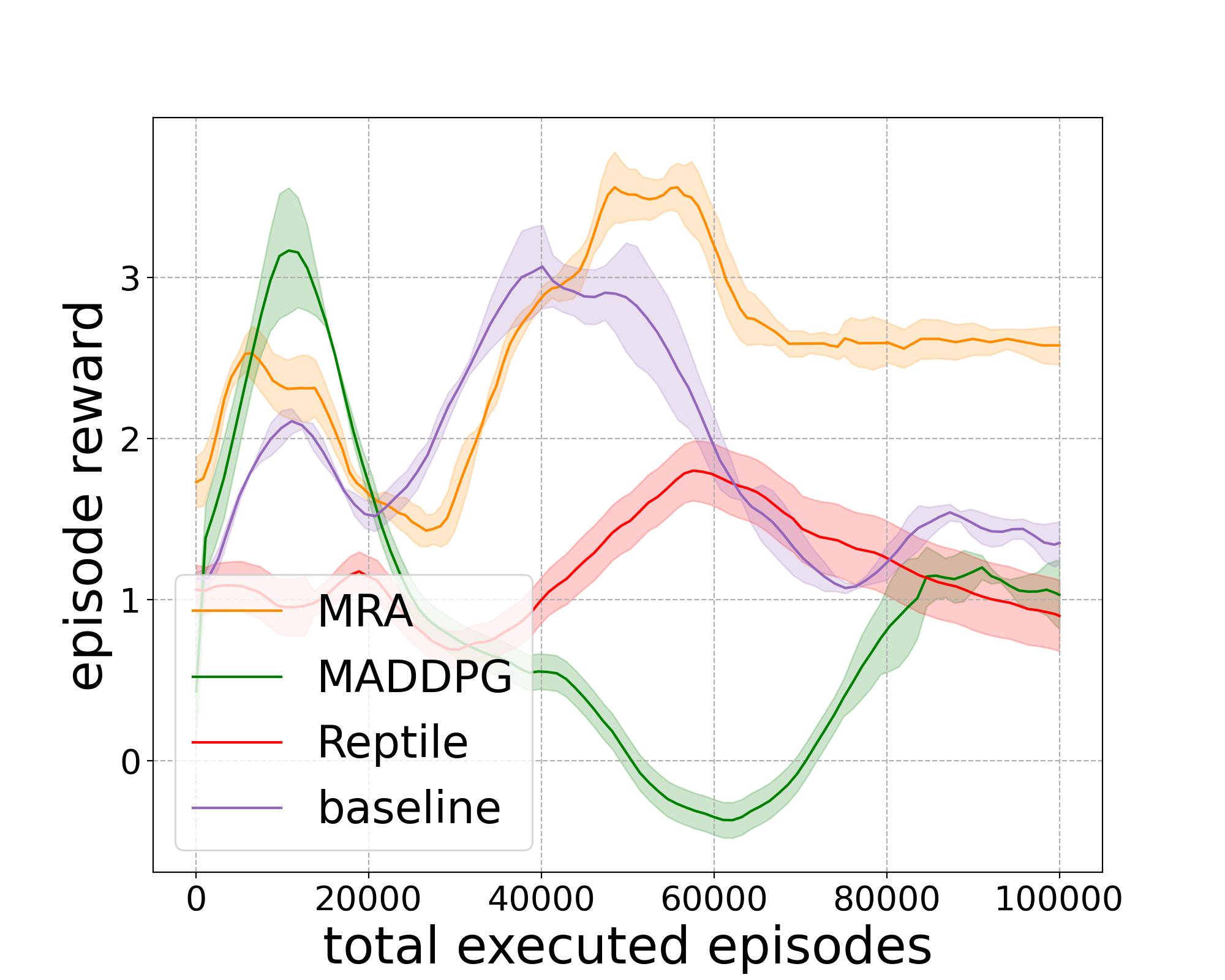}
\end{minipage}
}%
\vspace{-0.2cm}
\caption{Benefits of meta-representations in the three environments. \textbf{(a):} $6$ collector agents and $20$ treasure dots;  \textbf{(b):} $12$ agents in the $6$-resources environment;  \textbf{(c):} $8$ Pac-Man agents, $4$ ghost agents and $20$ food dots.\vspace{-0.2cm}}
\label{single_fig}
\end{figure*}

For MRA and Reptile, the size of training MG set $\lvert\mdp\rvert$ for the three environments is $4,4,3$, respectively. The settings of the training Markov Games in these three environments are as follows. For the treasure collection task, there are $4$ training MGs. The numbers of agents are $3$, $6$, $12$, and $24$, respectively, which we denote as $\{3,6,12,24\}$. The setting of the $4$ training MGs in resource occupation task is $\{6,9,12,15\}$. For the PacMan task, there are $3$ training MGs in total: $\{(4,2),(6,3),(8,4)\}$, where $(4,2)$ in the first MG denotes that there are $4$ PacMan agents and $2$ ghost agents. 

We note that the \textit{actual} executed episodes of MRA in one MG are $\lvert\mdp\rvert$ times \textit{smaller} than that for baseline and MADDPG, which reveals the efficiency of the proposed meta-representation. Comparisons between MRA and Reptile validate the effectiveness of MRA to meta-represent effective policies in various MGs. Due to the existence of game-specific knowledge, the optimal policies in different games are distinct. Thus, in Reptile, using a unimodal policy to represent them all negatively affect the performance. By comparing MRA with baseline and MADDPG, the benefit of the game-common strategic knowledge for individual games is revealed. Since the baseline agents are trained only in a single game, they cannot benefit from such common knowledge, even with more episodes executed. The common knowledge helps MRA outperform MADDPG. We also evaluate MADDPG with homogeneous agents sharing parameters, which works poorly. Although the Pacman-like world is not a zero-sum game, we still provide cross-comparison results in Appendix \ref{sec_app_cc} for completeness. 

\subsection{Performance Comparison in Multiple Games}
\label{performance_multi}
In this part, we compare the performance of MRA with multi-task and meta-learning methods, including \textbf{EPC} \citep{long2020Evolutionary} and \textbf{$\text{RL}^2$} \citep{duan2016rl}. Specifically, EPC learns policies from multiple training MGs with relational representations and evolutionary algorithms. However, EPC \textit{refits} each training MG after obtaining effective policies in the previous training MG. On the contrary, MRA and $\text{RL}^2$ learn the generalizable strategic knowledge and essence-capturing prior knowledge, respectively.

The EPC algorithm, one of the curriculum learning approaches, is implemented by initializing $3$ parallel sets of agents and mix-and-match the top $2$ sets to the successive MG. For the meta-RL algorithm $\text{RL}^2$, each trial contains a cycle of all the $\lvert\mdp\rvert$ MGs. We also compare another MRA variant, \textbf{uni-MRA}, that samples $z$ from a uniform distribution. In the treasure collection and resource occupation tasks, the results are shown in Figure \ref{fig:add}. 

\begin{figure*}[htbp]
\centering
\vspace{-0.3cm}
\subfigure{
    \begin{minipage}[t]{0.25\linewidth}
        \centering
        \includegraphics[width=1.7in]{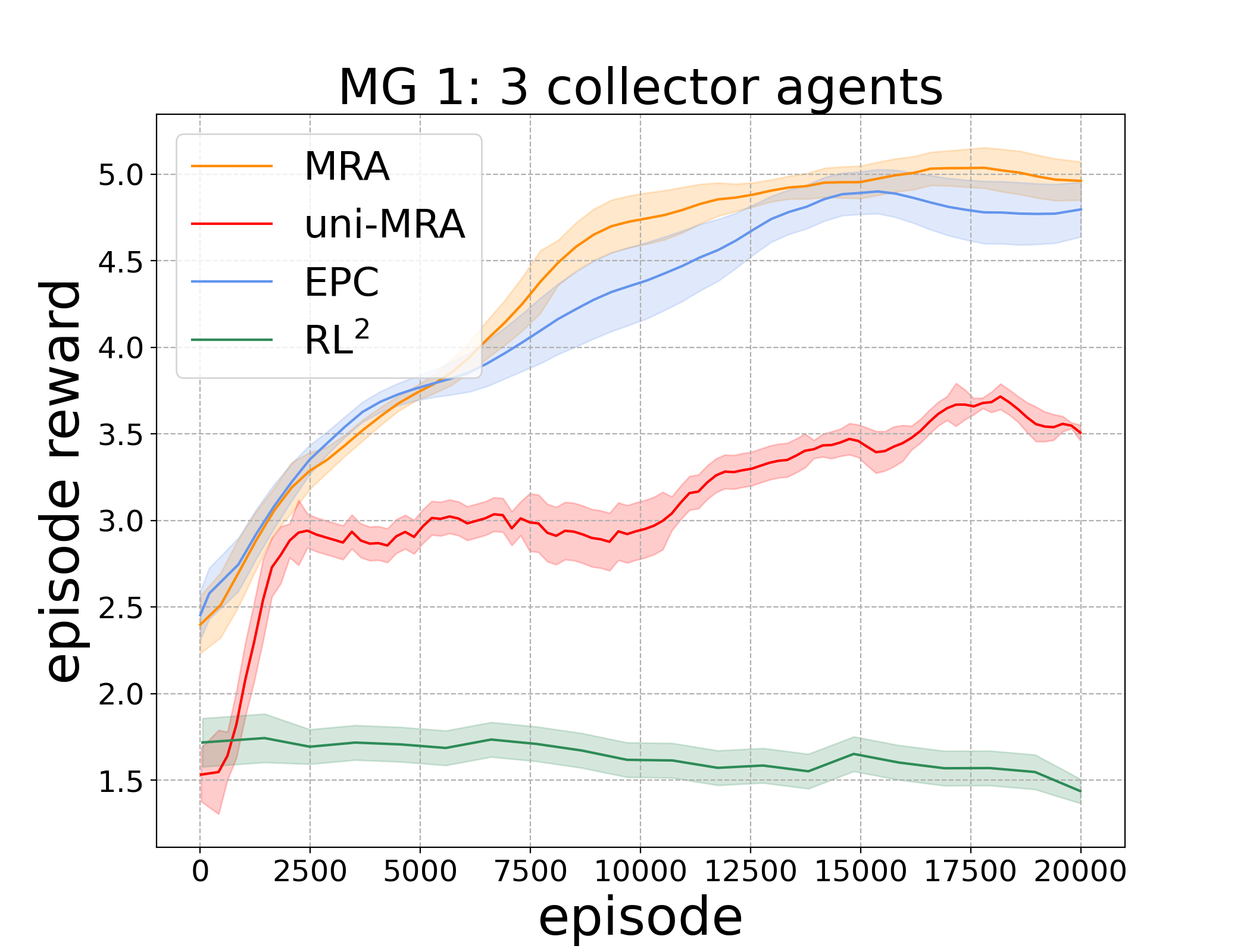}\\
    \end{minipage}%
}%
\subfigure{
    \begin{minipage}[t]{0.25\linewidth}
        \centering
        \includegraphics[width=1.7in]{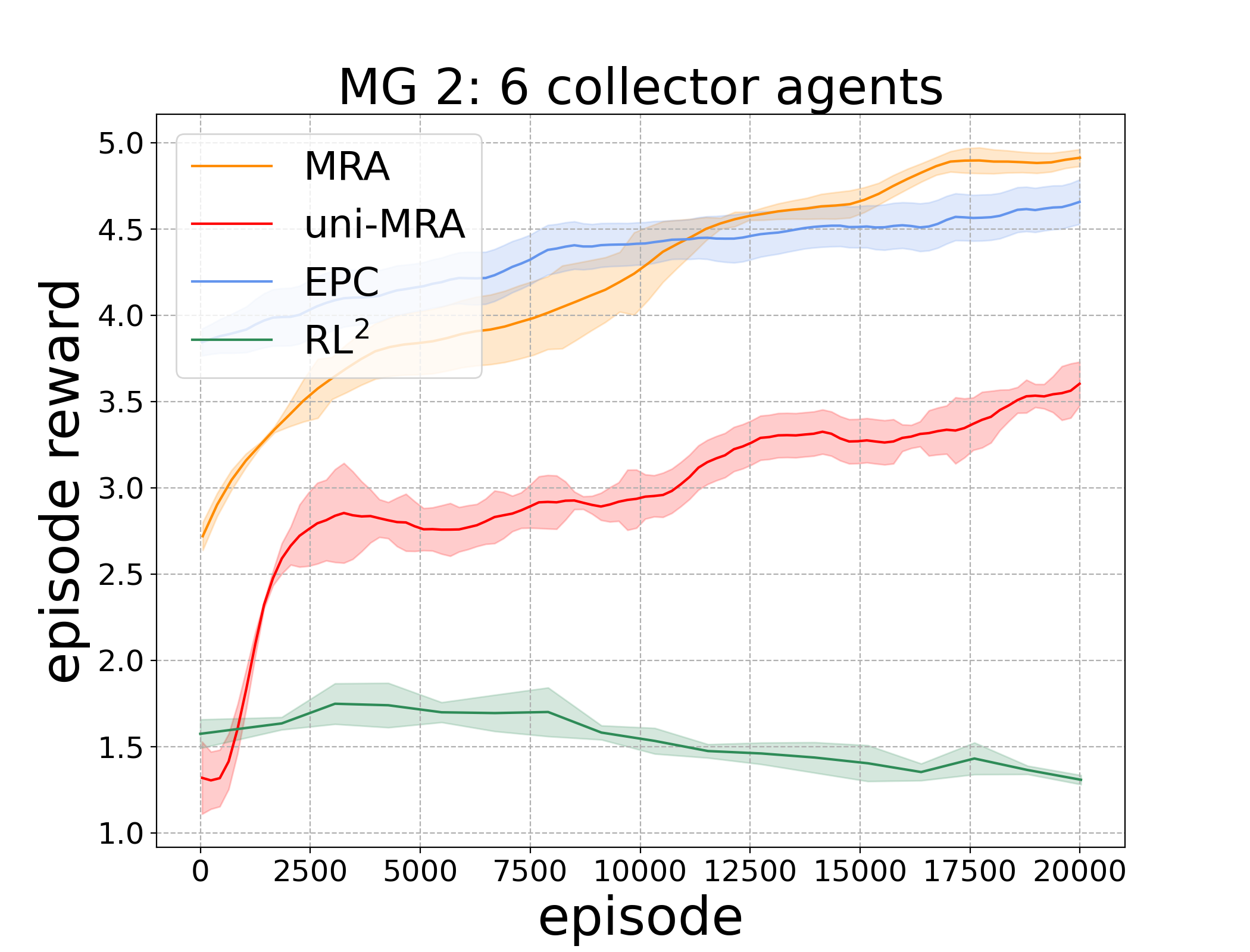}\\
    \end{minipage}%
}%
 \subfigure{
    \begin{minipage}[t]{0.25\linewidth}
        \centering
        \includegraphics[width=1.7in]{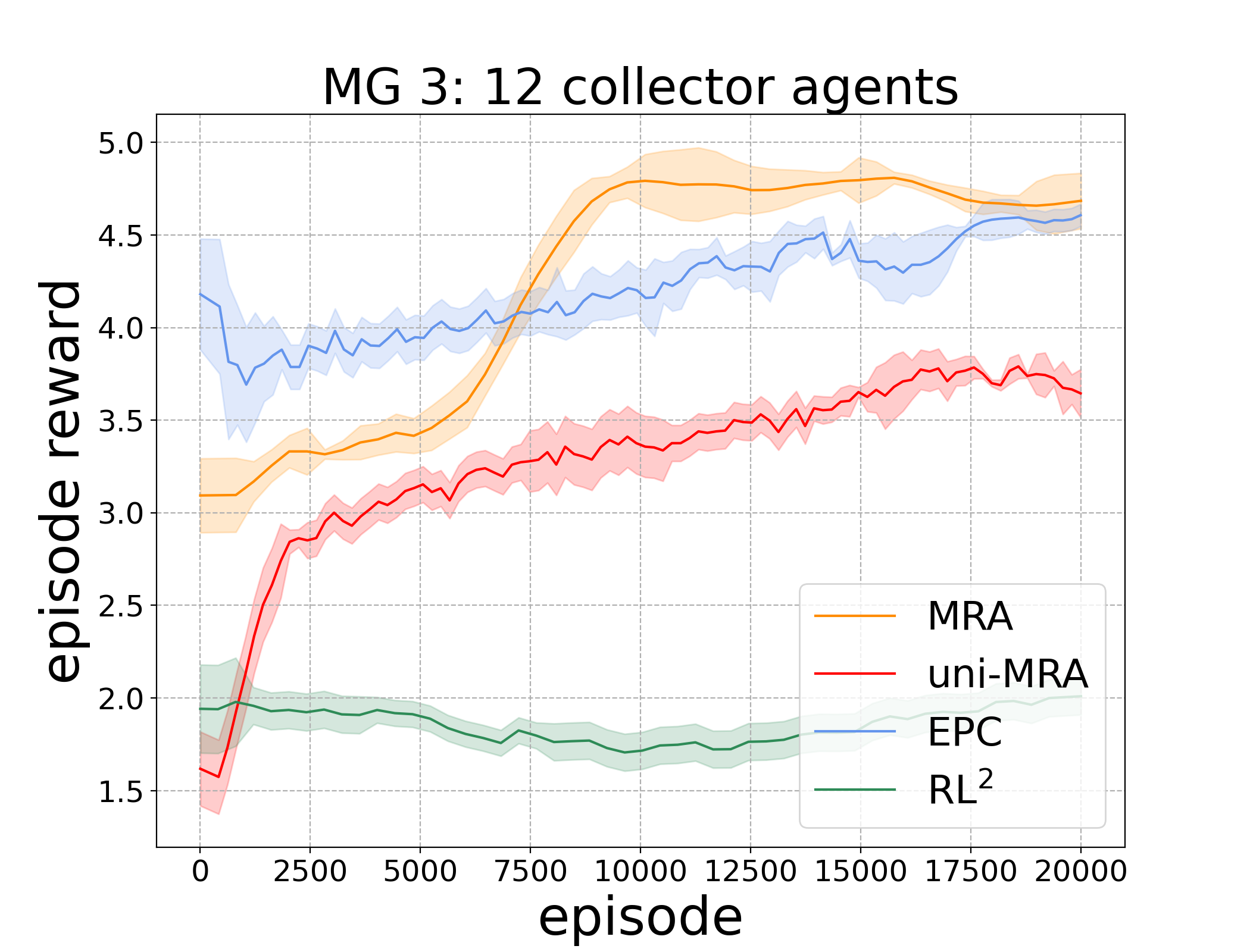}\\
    \end{minipage}%
}%
 \subfigure{
    \begin{minipage}[t]{0.25\linewidth}
        \centering
        \includegraphics[width=1.7in]{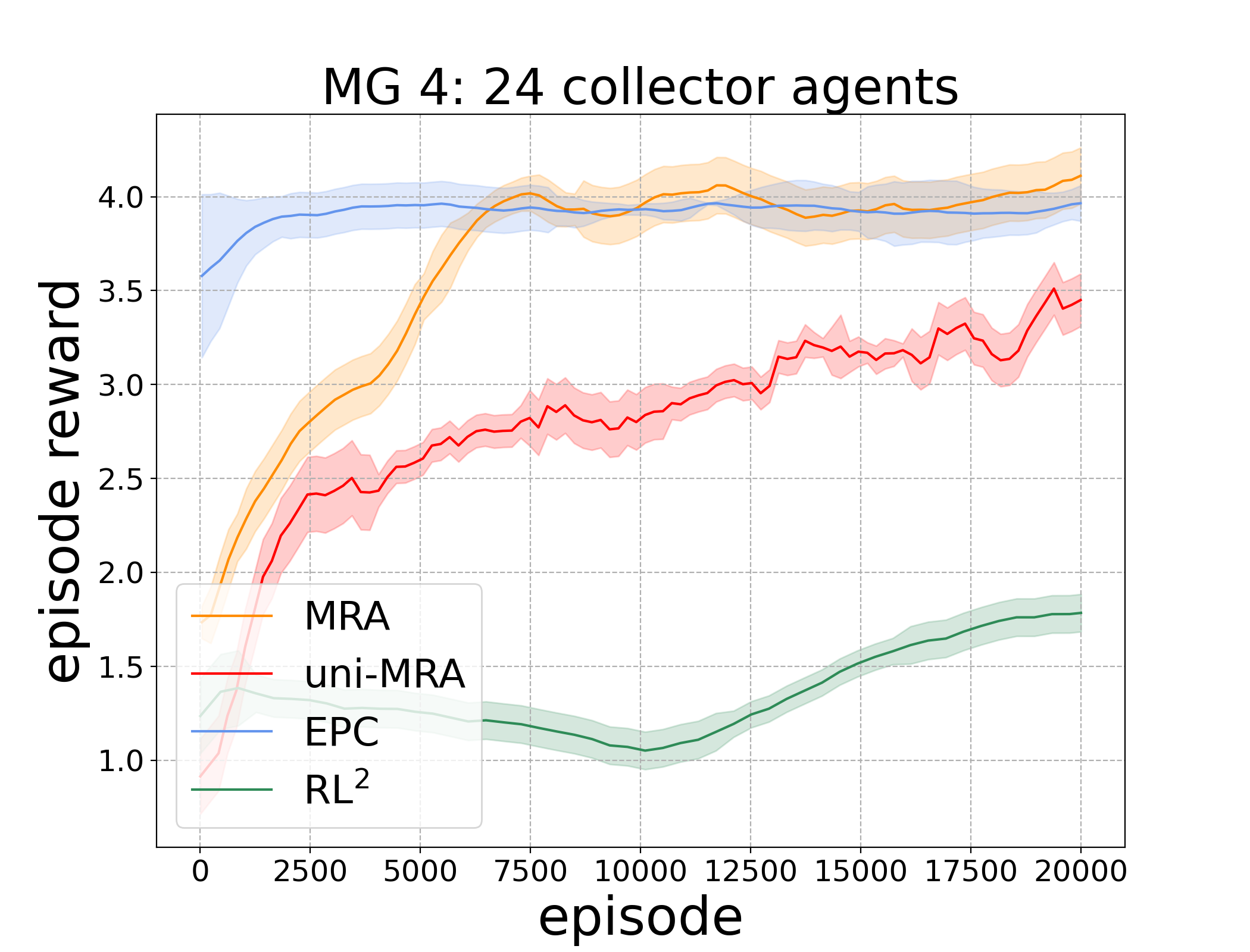}\\
    \end{minipage}%
}\\
\vspace{-0.25cm}
\centering
\subfigure{
    \begin{minipage}[t]{0.25\linewidth}
        \centering
        \includegraphics[width=1.7in]{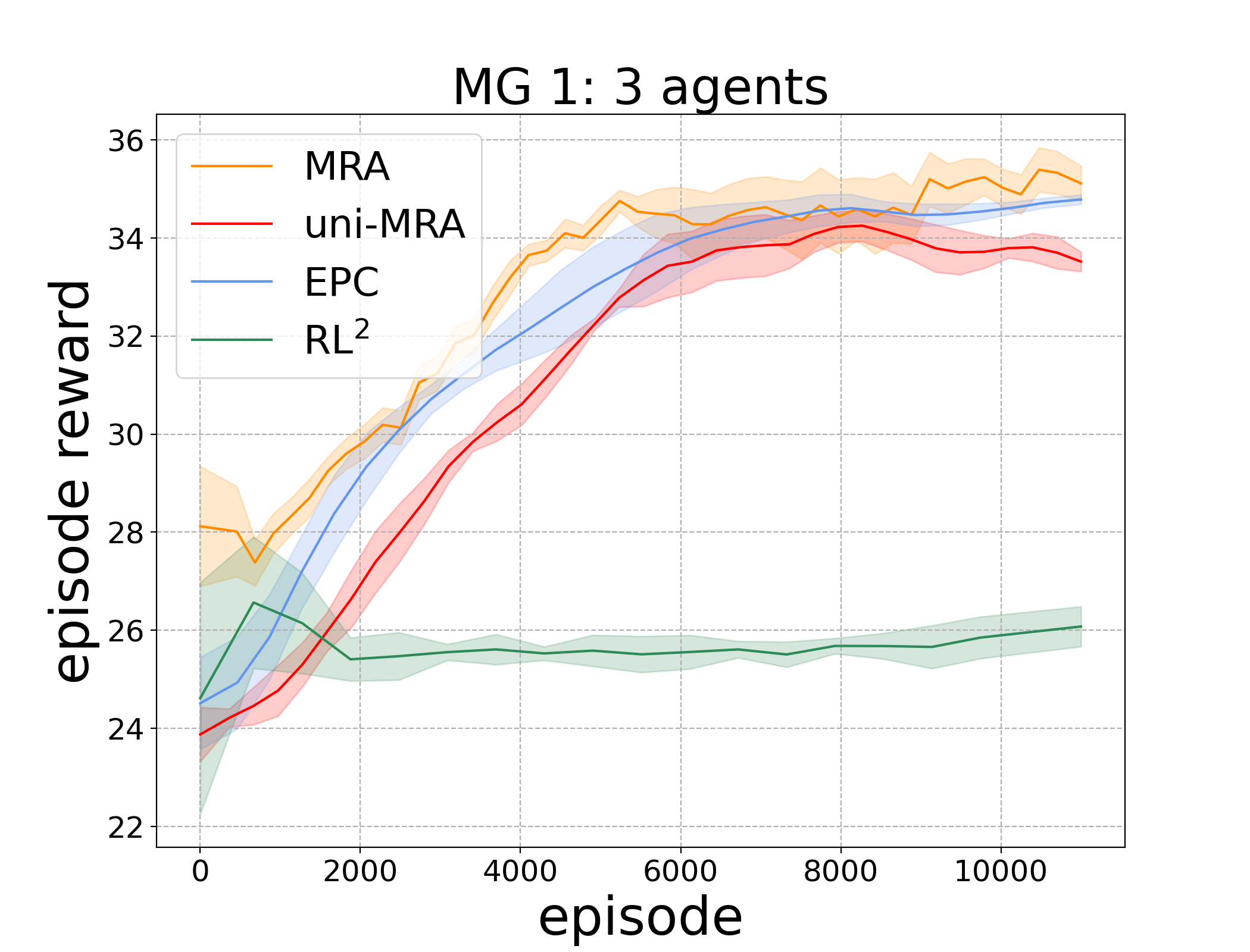}\\
    \end{minipage}%
}%
\subfigure{
    \begin{minipage}[t]{0.25\linewidth}
        \centering
        \includegraphics[width=1.7in]{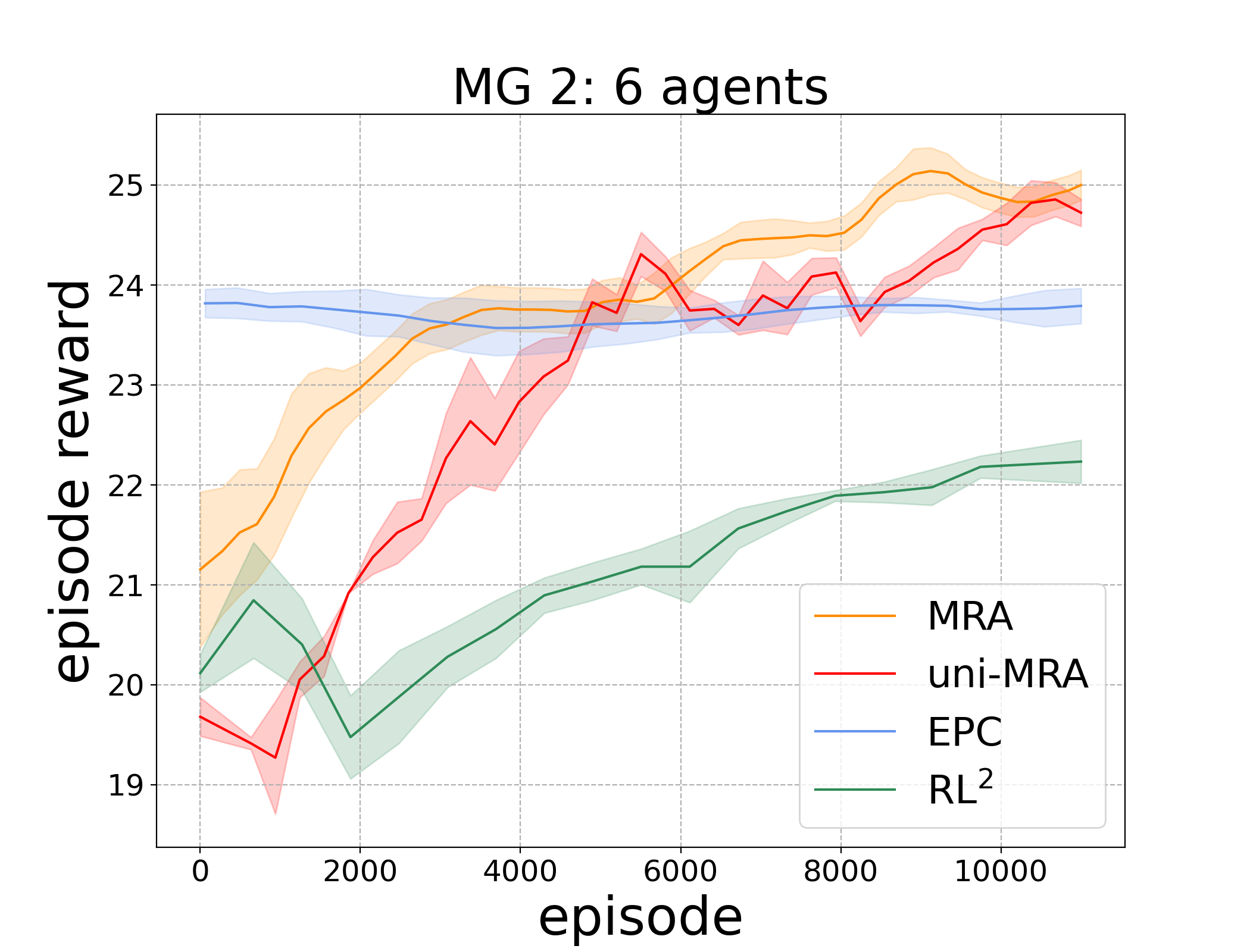}\\
    \end{minipage}%
}%
 \subfigure{
    \begin{minipage}[t]{0.25\linewidth}
        \centering
        \includegraphics[width=1.7in]{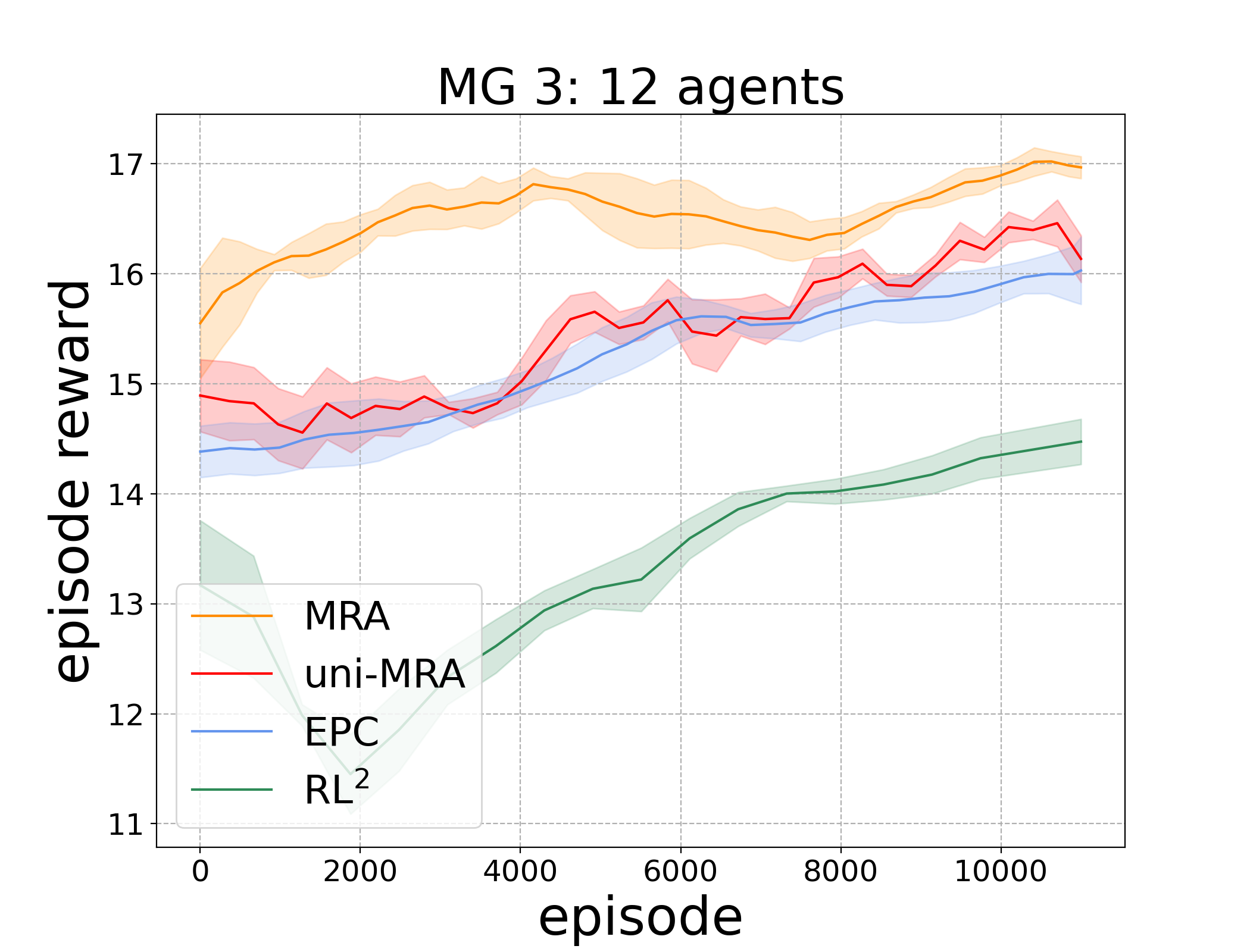}\\
    \end{minipage}%
}%
 \subfigure{
    \begin{minipage}[t]{0.25\linewidth}
        \centering
        \includegraphics[width=1.7in]{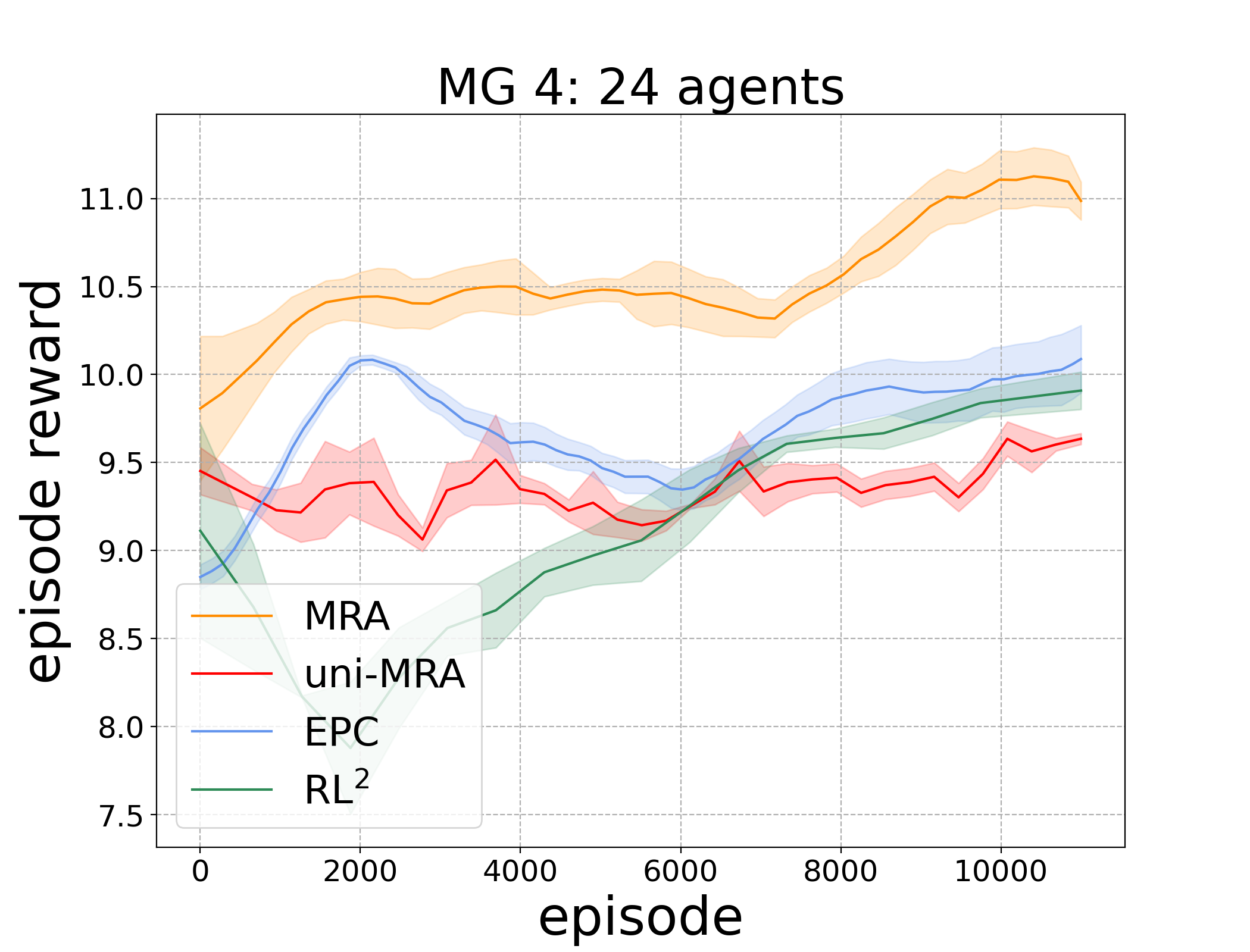}\\
    \end{minipage}%
}%
\vspace{-0.25cm}
\caption{The training curves in the Markov Games constructed by
varying the number of agents from the underlying environments. \textbf{First row:} Treasure collection environment; \textbf{Second row:} Resource occupation environment. In both environments, the total numbers of training MGs are $4$, where the numbers of agents are $3$, $6$, $12$, and $24$, respectively.\vspace{-0.35cm}}
\label{fig:add}
\end{figure*}

Due to the discrepancy between effective policies in different MGs, the game-common strategic knowledge is not well exploited by EPC. $\text{RL}^2$ agents are also observed to perform poorly in some MGs, which verifies the benefits of \textit{explicitly} modeling the game-common knowledge and game-specific knowledge when the number of agents varies. Since no game-specific information is conditioned in uni-MRA, we observe that some MGs dominate others.

\subsection{Generalization Evaluation}
If agents with the learned policies can both (1) adapt better and faster; and (2) perform well in novel (or unseen) evaluation games in a zero-shot manner, then the algorithm is considered to generalize well. In the following parts, we test the generalizability of MRA and other baseline methods based on these two metrics.

\textbf{Adaptation Ability:} We first show that MRA adapts faster and better compared with \textbf{EPC}, \textbf{$\text{RL}^2$}, \textbf{MADDPG}, and \textbf{MAAC} \citep{iqbal2018actor}. Here, MADDPG and MAAC are trained from scratch, while MRA, EPC, and $\text{RL}^2$ are fine-tuned from the parameters trained in multiple MGs as described in Section \ref{performance_multi}. The results are shown in Figure \ref{transfer}.

\begin{figure*}[htbp]
\centering
\subfigure[Sparse rewards.]{
\centering
\label{sparse}
\includegraphics[width=1.67in]{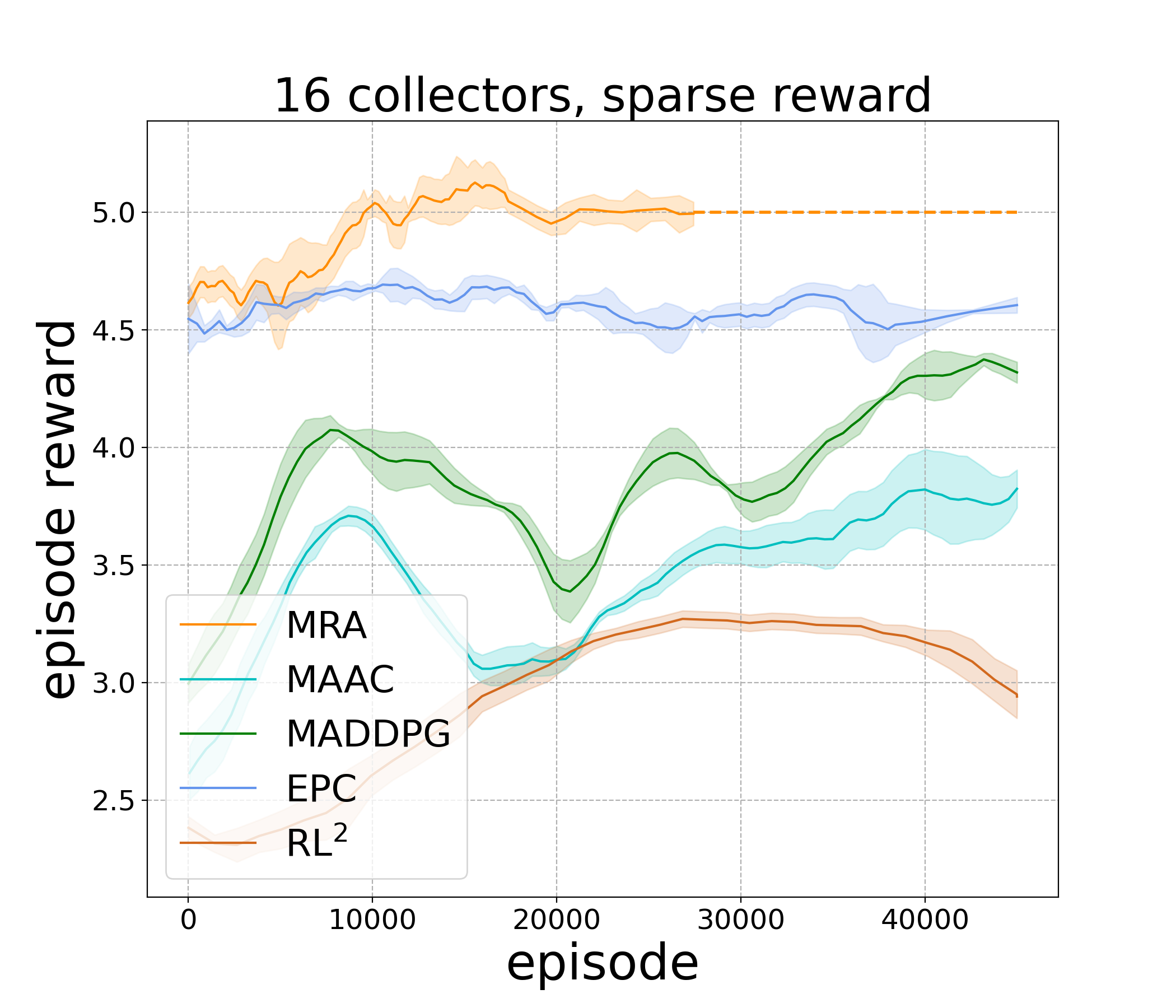}
}
\hspace{-0.2in}
\subfigure[Large population.]{
\centering
\label{complex}
\includegraphics[width=1.67in]{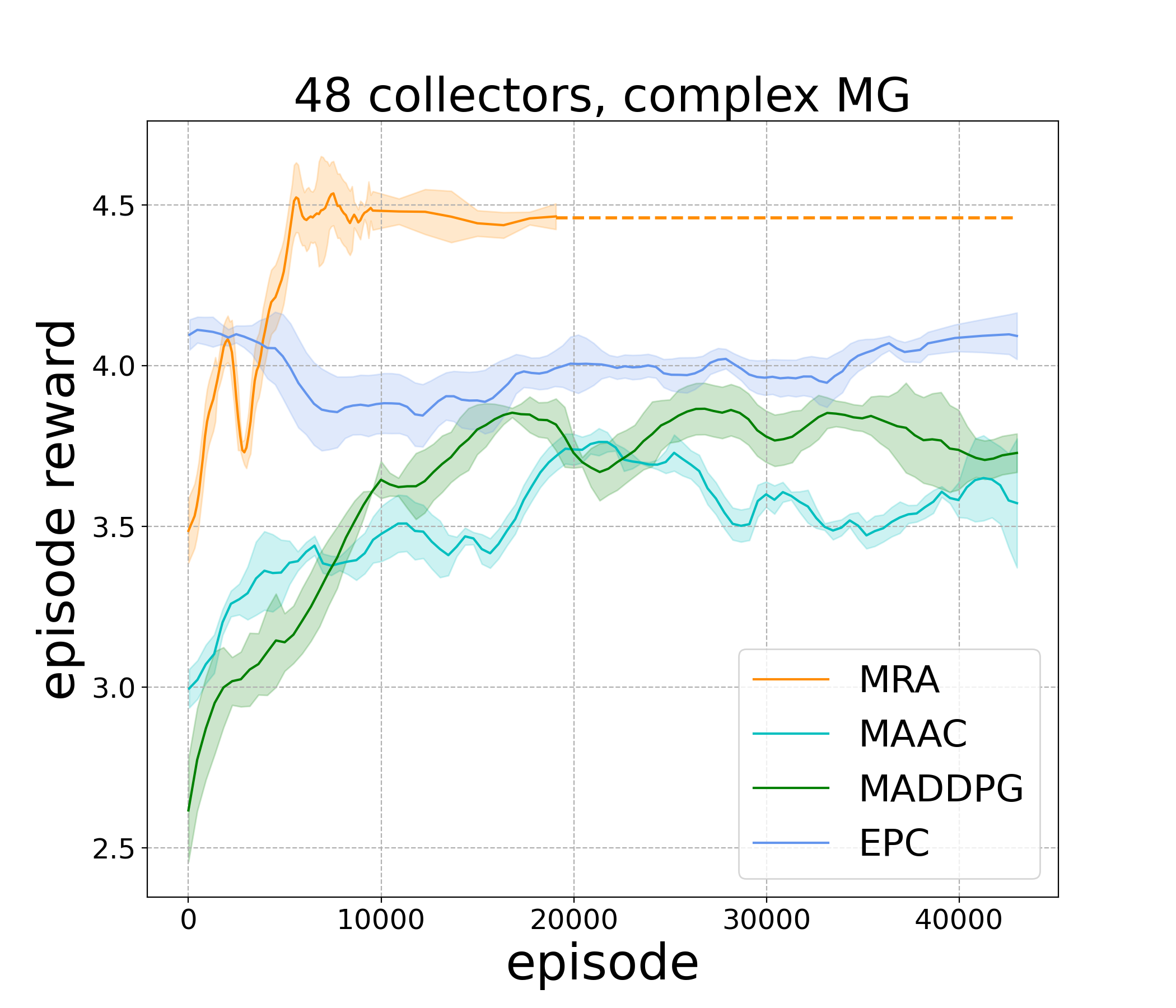}
}
\hspace{-0.2in}
\subfigure[Imbalanced competitive agents.]{
\label{transfer_pac}
\includegraphics[width=1.67in]{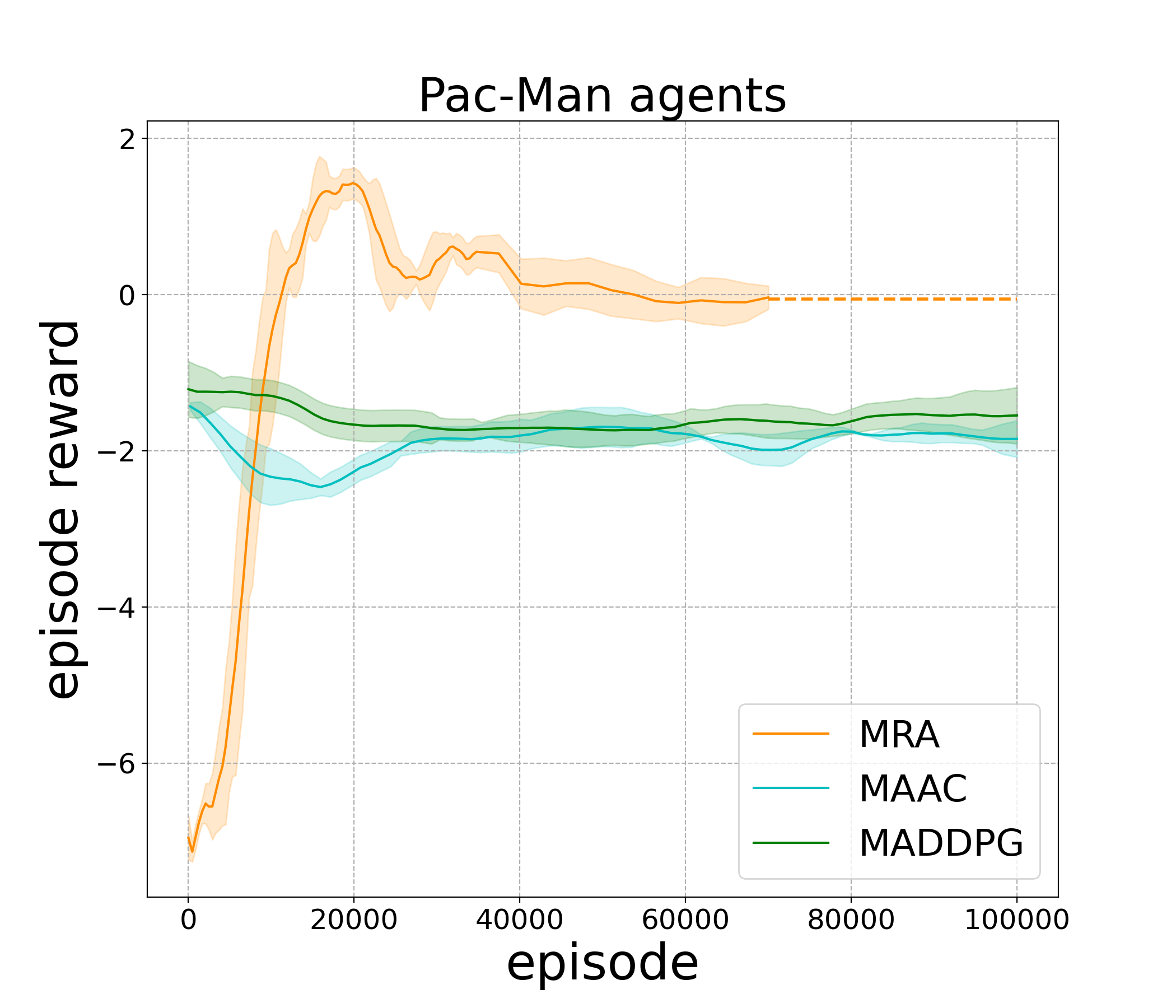}
\hspace{-0.2in}
\includegraphics[width=1.67in]{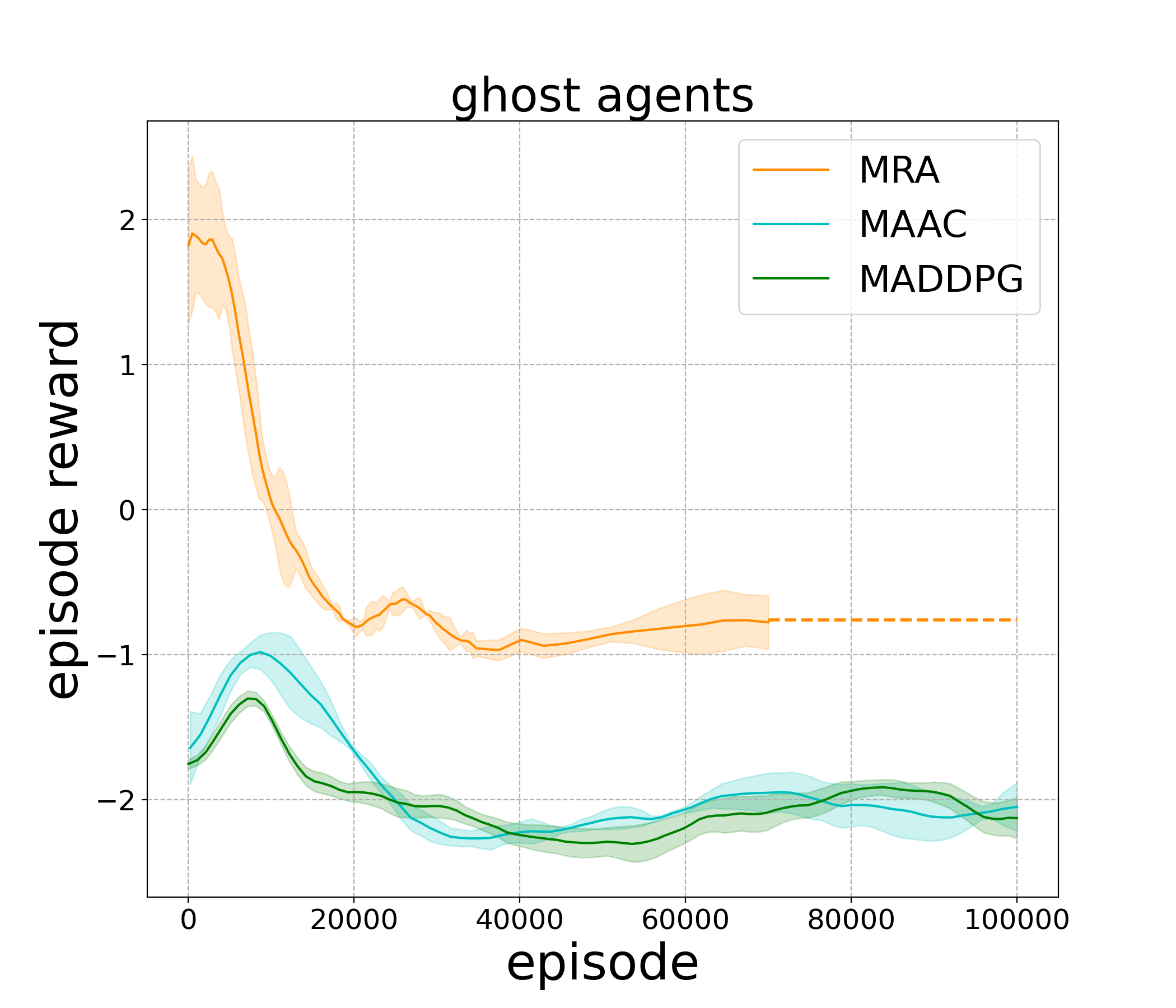}
}
\vspace{-0.29cm}
\setlength{\belowcaptionskip}{-10pt}
\caption{Adaptation performance comparison. \textbf{(a):} Sparse reward MG in the treasure collection environment;  \textbf{(b):} Complex MG with a large number of agents in the treasure collection environment;  \textbf{(c):} Imbalanced Pac-Man and ghost agents: $2$ Pac-Man vs $8$ ghosts, where the random exploration bottleneck prevents the agents to learn useful strategies due to the quickly terminated episodes and the lack of informative training signals. The dashed lines represent the asymptotic performance at convergence.\vspace{-0.1cm}}
\label{transfer}
\end{figure*}

Our first observation is that the game-common knowledge can provide agents with a good policy initialization and thus overcome the random exploration bottleneck. For example, random exploration can happen in the Pacman-like world where the ghost agents populationally dominate the game. When this is the case, the episode will quickly terminate since the Pac-Man agents will soon be killed by the ghost agents. As a result of the lack of informative training signals, both the Pac-Man and ghost agents will end up with almost random behaviors. This is evidenced by the poor performance of the MAAC and MADDPG algorithms in Figure \ref{transfer_pac}, where $8$ ghost agents are chasing $2$ Pac-Man agents. In the contrast, with the extracted common knowledge guiding the Pac-Man to take ghost-avoidance actions, the MRA agents can easily learn to accomplish the task.

Besides, in Figure \ref{sparse} when reward shaping is removed, i.e., in the sparse reward setting, MRA agents are able to effectively adapt to policies that have higher asymptotic performance in fewer episodes, compared to other baselines. We also show in Figure \ref{complex} that the complexity brought by the large population, such as $48$, can be successfully handled by our MRA algorithm. These results verify the generalization advantage of the meta-representation in MRA compared with EPC, which only transfers knowledge in the \textit{training} Markov Games.

\begin{wrapfigure}{r}{0.5\textwidth}
    \centering
        \vspace{-0.45cm}
    \includegraphics[width=1.0\linewidth]{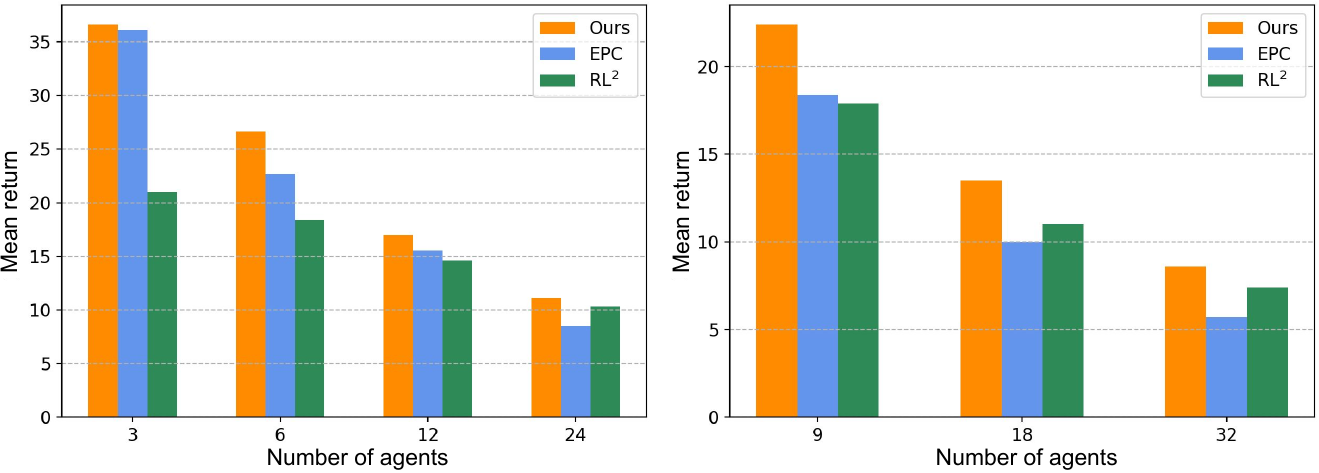}\vspace{-0.18cm}
\caption{\textbf{Left:} Evaluation in the training MGs. \textbf{Right:} Zero-shot transfer to unseen evaluation MGs.}
    \label{direct}
    \vspace{-0.5cm}
\end{wrapfigure}
\textbf{Zero-Shot Transfer:} MRA also has better generalizability compared to EPC and $\text{RL}^2$. In Figure \ref{direct}, we report the ability of MRA to represent different policies in multiple training games and zero-shot transfer to evaluation games in the resource occupation environment. The performance of MRA is calculated by taking expectations over the latent $z$. We note that the return of MRA will become higher if enumerated trial-and-error is performed, \ie, choose the best policy mode among various latent samples $z$. Besides, we also provide additional experimental results and ablation studies in Appendix \ref{add_exp}.

\section{Conclusion \& Discussions}
\label{conc}
In this paper, we propose meta representations for agents (MRA) that can generalize in Markov Games with varying populations. With latent variable policies and relational representations, the diverse strategic modes are captured. As an approximation to a theoretically justified objective, MRA effectively discovers the underlying strategic structures in the games that facilitate generalizable knowledge learning. Experimental results also verify the benefits of MRA.

Our work also opens several new problems. Theorem \ref{theorem1} requires the computation of $\varsigma$ and $\epsilon$ as well as an extremely large latent space, both of which are impractical. Although approximations that MRA makes are reasonable, obtaining optimal $\boldsymbol{\Pi}^*$ will not always be guaranteed. Possible future investigations include bounding $\varsigma$ by imposing restrictions on the evaluation MG set, or enlarging the latent space size by \eg, adopting continuous latent variables.

With role-symmetric game settings, MRA has benefits in many research problems, including dealing with population complexity, overcoming the multi-agent random exploration bottleneck, and adapting faster with the meta-represented agents. A fruitful avenue for future work is to augment MRA by \eg, adapting roles \citep{wang2020multi}, to apply to other game settings. Besides, achieving NE may not indicate the global optimality in general-sum MGs. Therefore, we would like to explore different metrics such as social optimum and correlated equilibrium as future work, which may require introducing the definition of distances established for these solution concepts.

For population-varying MGs, we model game-specific knowledge as strategic relationship. Although it may lose the universality in broader scopes compared with general meta-RL algorithms, we hope the idea of explicit strategic knowledge modeling can inspire algorithms that adjust with the task.


\bibliography{collas2023_conference}
\bibliographystyle{collas2023_conference}

\newpage
\appendix
\section{Proofs}
\label{proof_label}
\subsection{Proof Sketch of Theorem \ref{theorem1}}
\label{proof_sketch}
Theorem \ref{theorem1} can be proved by establishing the following lemmas.
\begin{lemma}
\label{l1}
Define the distance $\kappa(\pi^i)$ between policy $\pi^i$ and the best response $\pi^{*i}$ in the action space as:
$$\kappa(\pi^i) = \left\lVert{\left\lVert{\pi^{*i}(a\mid s)-\pi^i(a\mid s)}\right\rVert_{a,1}}\right\rVert_{s,\infty}.$$
For any joint strategy $\boldsymbol{\pi}$, the \textsc{NashConv} $\mathcal{D}_m(\boldsymbol{\pi})$ is bounded by: \newline
\begin{align*}
\mathcal{D}_m(\boldsymbol{\pi})\leq 
\left(\frac{\gamma \iota_m}{(1-\gamma)^2}+\frac{1}{1-\gamma}\right)\left\lVert{\kappa(\pi^i)}\right\rVert_{i,1}.
\end{align*}
\end{lemma}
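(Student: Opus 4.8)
The plan is to bound $\mathcal{D}_m(\boldsymbol{\pi})$ by controlling, for each agent $i$, the quantity $\|v^{*i,m}_{\boldsymbol{\pi}^{\textrm{-}i}} - v^{i,m}_{\boldsymbol{\pi}}\|_{s,\infty}$ in terms of the policy discrepancy $\kappa(\pi^i)$, then summing over $i$ (the outer $\|\cdot\|_{i,1}$ norm). Fix an agent $i$ and write $\pi^{*i}$ for its best response to $\boldsymbol{\pi}^{\textrm{-}i}$. The idea is a standard simulation/perturbation argument: compare the value obtained by the joint strategy $(\pi^{*i},\boldsymbol{\pi}^{\textrm{-}i})$ with that of $(\pi^i,\boldsymbol{\pi}^{\textrm{-}i})$. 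Both value functions are fixed points of their respective Bellman operators, and the only difference between those operators is (i) the one-step reward term, which changes by at most $\frac{1}{1-\gamma}\kappa(\pi^i)$ after geometric summation since rewards lie in $[0,1]$ and the action-distribution gap in $\ell_1$ is $\kappa(\pi^i)$, and (ii) the transition operator, which changes by $\|\mathcal{P}_m^{\pi^{*i},\boldsymbol{\pi}^{\textrm{-}i}} - \mathcal{P}_m^{\pi^{i},\boldsymbol{\pi}^{\textrm{-}i}}\|_{op}$, bounded by $\iota_m\kappa(\pi^i)$ via Assumption \ref{lip_assumption} (so I should set $L_m = \iota_m$, matching the notation).

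The concrete steps are: (1) write $v^{*i,m}_{\boldsymbol{\pi}^{\textrm{-}i}} - v^{i,m}_{\boldsymbol{\pi}} = (I - \gamma \mathcal{P}_m^{\pi^{*i},\boldsymbol{\pi}^{\textrm{-}i}})^{-1} r^{*} - (I - \gamma \mathcal{P}_m^{\pi^{i},\boldsymbol{\pi}^{\textrm{-}i}})^{-1} r$, where $r^{*}, r$ are the expected one-step rewards under the two policies; (2) use the resolvent identity $A^{-1} - B^{-1} = A^{-1}(B - A)B^{-1}$ with $A = I - \gamma \mathcal{P}_m^{\pi^{*i},\boldsymbol{\pi}^{\textrm{-}i}}$, $B = I - \gamma \mathcal{P}_m^{\pi^{i},\boldsymbol{\pi}^{\textrm{-}i}}$, so $B - A = \gamma(\mathcal{P}_m^{\pi^{*i},\boldsymbol{\pi}^{\textrm{-}i}} - \mathcal{P}_m^{\pi^{i},\boldsymbol{\pi}^{\textrm{-}i}})$; (3) split the difference into the transition-mismatch term and the reward-mismatch term and bound each in the appropriate norm, using $\|(I-\gamma\mathcal{P})^{-1}\|_{op} \le \frac{1}{1-\gamma}$ for any Markov operator $\mathcal{P}$, $\|r^{*} - r\|_\infty \le \kappa(\pi^i)$ (rewards bounded in $[0,1]$), and $\|v\|_\infty \le \frac{1}{1-\gamma}$. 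The transition term then contributes $\frac{1}{1-\gamma}\cdot\gamma\iota_m\kappa(\pi^i)\cdot\frac{1}{1-\gamma} = \frac{\gamma\iota_m}{(1-\gamma)^2}\kappa(\pi^i)$ and the reward term contributes $\frac{1}{1-\gamma}\kappa(\pi^i)$; (4) add them and sum over $i$ to get the stated bound with the coefficient $\frac{\gamma L_m}{(1-\gamma)^2} + \frac{1}{1-\gamma}$.

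The main obstacle I anticipate is keeping the norms consistent: the operator norm $\|\cdot\|_{op}$ is defined via the $L_1(\mathcal{S})$ norm, whereas $\mathcal{D}_m$ uses an $s,\infty$-sup norm on the value difference, so I need to be careful about which side the operators act on and whether to argue in the primal ($L_1$ over occupancy measures, using that $v$ differences integrated against a distribution) or dual ($\|v\|_\infty$) picture. The cleanest route is probably to work with the dual: express the value gap as an integral of the reward/transition perturbation against the discounted occupancy measure of the policy $(\pi^{*i},\boldsymbol{\pi}^{\textrm{-}i})$, note the occupancy measure is a probability distribution (total mass $\frac{1}{1-\gamma}$ after normalization), and bound the integrand in sup-norm — this sidesteps any mismatch between $\|\cdot\|_{op}$ and $\|\cdot\|_{s,\infty}$ and makes the geometric-series factors transparent. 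A secondary bookkeeping point is that $\kappa(\pi^i)$ already contains the $\|\cdot\|_{s,\infty}$ over states, so the state-sup on the left is absorbed uniformly; I will make sure the $\max_s$ on the value gap is dominated by the $\max_s$ inside $\kappa$ at the step where Assumption \ref{lip_assumption} is invoked.
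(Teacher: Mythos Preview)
Your proposal is correct and follows essentially the same route as the paper. Both arguments split $v^{*i,m}_{\boldsymbol{\pi}^{\textrm{-}i}}-v^{i,m}_{\boldsymbol{\pi}}$ into a reward-mismatch piece (yielding $\frac{1}{1-\gamma}\kappa(\pi^i)$) and a transition-mismatch piece (yielding $\frac{\gamma L_m}{(1-\gamma)^2}\kappa(\pi^i)$), then sum over $i$; the paper carries this out exactly via the occupancy-measure decomposition you identify as the ``cleanest route.'' The one cosmetic difference is that the paper bounds the resolvent difference $\sum_t\gamma^t\big[(\mathcal{P}_m^{\pi^{*i},\boldsymbol{\pi}^{\textrm{-}i}})^t-(\mathcal{P}_m^{\pi^{i},\boldsymbol{\pi}^{\textrm{-}i}})^t\big]$ through a telescoping lemma borrowed from Liu et al.\ rather than the resolvent identity $A^{-1}-B^{-1}=A^{-1}(B-A)B^{-1}$ you propose, but both devices deliver the identical estimate $\frac{\gamma}{(1-\gamma)^2}\|\mathcal{P}_m^{\pi^{*i},\boldsymbol{\pi}^{\textrm{-}i}}-\mathcal{P}_m^{\pi^{i},\boldsymbol{\pi}^{\textrm{-}i}}\|_{op}$, so there is no substantive divergence.
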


Lemma \ref{l1} builds a bridge between the action-space distance and the value-space distance $\mathcal{D}_m(\boldsymbol{\pi})$. Then the distance $\varsigma$ between $\mdp$ and $\mdp'$ can be represented in form of $\kappa(\pi^i)$ for some specific index $i$. Intuitively, if $\epsilon$ is larger than a certain threshold, the $\epsilon$-range joint strategy set $\boldsymbol{\hat{\Pi}}$ is also large enough to contain all the strategies that achieve NE in evaluation games. Formally, we provide Lemma \ref{prop1}.

\begin{lemma}\label{prop1}
For the training MG set $\mdp$ and the evaluation MG set $\mdp'$, if $\epsilon$ satisfies 
\begin{equation}
\label{eps_con}
\begin{aligned}
\epsilon\geq \varsigma - \min_{\iota_m,\iota_{m'}}\frac{\varsigma\gamma\left(\iota_{m'}-\iota_{m}\right)}{\gamma \iota_{m'}+1-\gamma},
\end{aligned}
\end{equation}
then for every evaluation Markov Game $m'\in \mdp'$, the joint strategy that achieves Nash Equilibrium in $m'$ is guaranteed to be contained in the $\epsilon$-range joint strategy set $\boldsymbol{\hat{\Pi}}$.
\end{lemma}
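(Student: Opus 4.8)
The goal, for every $m'\in\mdp'$, is to produce a joint strategy that reaches Nash Equilibrium in $m'$ and simultaneously lies in $\boldsymbol{\hat{\Pi}_m}$ for some training game $m$; by Definition \ref{def4} this amounts to exhibiting a NE $\boldsymbol{\pi'}$ of $m'$ with $\mathcal{D}_m(\boldsymbol{\pi'})\le\epsilon$. The plan is to take $\boldsymbol{\pi'}$ to be the equilibrium of $m'$ that realizes the inner minimum defining $\varsigma$ in Definition \ref{def_distance_sets}, and to bound $\mathcal{D}_m(\boldsymbol{\pi'})$ by carrying a localized, single-agent deviation estimate from $m'$ over to $m$ through Lemma \ref{l1} and Assumption \ref{lip_assumption}.

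First I would unfold $\varsigma$. Fix $m'$, let $i$ be an agent of $m'$ attaining the outer maximum, and let $(m,i',\boldsymbol{\pi},\boldsymbol{\pi'})$ attain the inner minimum, so that $\boldsymbol{\pi}$ is a NE of $m$, $\boldsymbol{\pi'}$ a NE of $m'$, $i'\in h_{i,m}$, and $\mathcal{D}_{m'}(\pi^{i'},\boldsymbol{\pi'^{\textrm{-}i}})\le\varsigma$. In the hybrid profile $(\pi^{i'},\boldsymbol{\pi'^{\textrm{-}i}})$ only agent $i$ departs from its $m'$-equilibrium strategy $\pi'^i$ --- which, since $\boldsymbol{\pi'}$ is a NE, is precisely agent $i$'s best response against $\boldsymbol{\pi'^{\textrm{-}i}}$ --- so $\mathcal{D}_{m'}(\pi^{i'},\boldsymbol{\pi'^{\textrm{-}i}})$ is governed (agent $i$'s term directly, the others through their knock-on effects, bounded via Assumption \ref{lip_assumption}) by the action-space discrepancy $\kappa(\pi^{i'})=\left\lVert\left\lVert\pi'^i(a|s)-\pi^{i'}(a|s)\right\rVert_{a,1}\right\rVert_{s,\infty}$. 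Inverting this relationship I would obtain $\kappa(\pi^{i'})\le\varsigma/C_{\iota_{m'}}$, where for a game with Lipschitz coefficient $\iota$ I write $C_\iota:=\tfrac{\gamma\iota}{(1-\gamma)^2}+\tfrac{1}{1-\gamma}$ for the constant appearing in Lemma \ref{l1}.

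Next I would transport this to the training game $m$ and estimate $\mathcal{D}_m(\boldsymbol{\pi'})$. Since $\boldsymbol{\pi}$ is an exact equilibrium of $m$, the strategy $\pi^{i'}$ is an exact best response for agent $i'$ in $m$; using role symmetry to reduce the all-agents \textsc{NashConv} sum in $m$ to its $h$ homogeneous representatives --- each dominated by the action-space gap just bounded --- Lemma \ref{l1} applied in $m$ gives
\begin{equation*}
\mathcal{D}_m(\boldsymbol{\pi'}) \;\le\; C_{\iota_m}\,\kappa(\pi^{i'}) \;\le\; \varsigma\,\frac{C_{\iota_m}}{C_{\iota_{m'}}} \;=\; \varsigma\,\frac{\gamma\iota_m+1-\gamma}{\gamma\iota_{m'}+1-\gamma} \;=\; \varsigma-\frac{\varsigma\gamma(\iota_{m'}-\iota_m)}{\gamma\iota_{m'}+1-\gamma}.
\end{equation*}
The right-hand side is at most $\varsigma-\min_{\iota_m,\iota_{m'}}\frac{\varsigma\gamma(\iota_{m'}-\iota_m)}{\gamma\iota_{m'}+1-\gamma}$, which by hypothesis is at most $\epsilon$; hence $\mathcal{D}_m(\boldsymbol{\pi'})\le\epsilon$, that is, $\boldsymbol{\pi'}\in\boldsymbol{\hat{\Pi}_m}\subseteq\boldsymbol{\hat{\Pi}}$. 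Since $m'$ was arbitrary, this establishes Lemma \ref{prop1}.

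I expect the main obstacle to be the inversion step: passing from the \textsc{NashConv} bound $\mathcal{D}_{m'}(\pi^{i'},\boldsymbol{\pi'^{\textrm{-}i}})\le\varsigma$ back to the action-space bound $\kappa(\pi^{i'})\le\varsigma/C_{\iota_{m'}}$. Lemma \ref{l1} is stated only in the easy direction (\textsc{NashConv} bounded \emph{by} the action distance), so the reverse must be argued separately, using that exactly one agent is perturbed in the $\varsigma$-hybrid and that the pair of equilibria chosen by the inner minimum is the best matching one, so that \textsc{NashConv} cannot stay small while $\kappa(\pi^{i'})$ grows. Together with the role-symmetry bookkeeping required to reassemble the all-agents \textsc{NashConv} in $m$ --- a game with a population different from that of $m'$ --- out of per-class terms, and with the need for one training game to serve every homogeneous class in the outer maximum over agents of $m'$, this is where the work concentrates; the Lipschitz transport is then merely the identity $\varsigma\,C_{\iota_m}/C_{\iota_{m'}}=\varsigma-\varsigma\gamma(\iota_{m'}-\iota_m)/(\gamma\iota_{m'}+1-\gamma)$.
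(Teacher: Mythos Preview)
Your outline is essentially the paper's argument: unfold the definition of $\varsigma$ to extract a single-agent action-space gap $\kappa$, then push that gap through Lemma \ref{l1} in a training game $m$ and simplify $\varsigma\,C_{\iota_m}/C_{\iota_{m'}}$ into the stated $\epsilon$-threshold. The inversion step you flag --- passing from $\mathcal{D}_{m'}(\pi^{i'},\boldsymbol{\pi'^{\textrm{-}i}})\le\varsigma$ back to $\kappa(\pi^{i'})\le\varsigma/C_{\iota_{m'}}$ --- is exactly where the paper's proof is thinnest: it writes $\varsigma=\max_{\iota_{m'}}C_{\iota_{m'}}\kappa(\pi^{i'})$ as an equality immediately after proving only the $\le$ direction, so your caution is well placed and the paper offers no further help there.

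The one structural difference is which profile is placed in $\boldsymbol{\hat{\Pi}_m}$. The paper does \emph{not} bound $\mathcal{D}_m(\boldsymbol{\pi'})$ for the full equilibrium $\boldsymbol{\pi'}$; instead it bounds $\mathcal{D}_m(\pi'^{i},\boldsymbol{\pi^{\textrm{-}i'}})$, the one-agent swap that replaces agent $i'$'s component of the training NE $\boldsymbol{\pi}$ by $\pi'^{i}$. Because every other agent is already at best response in $m$, the $\ell_1$-sum in Lemma \ref{l1} collapses to the single term $\kappa(\pi'^{i})$, and no role-symmetry bookkeeping across a population-mismatched game is needed. Your route --- bounding $\mathcal{D}_m(\boldsymbol{\pi'})$ directly and invoking role symmetry to control all agents' $\kappa$-terms simultaneously --- is closer to the lemma's literal statement but requires the extra work you anticipate; the paper's hybrid shortcut sidesteps that at the cost of concluding only that each agent's NE \emph{component} appears in some profile of $\boldsymbol{\hat{\Pi}}$.
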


We then show how Lemma \ref{prop1} can be utilized to obtain the objective in  \eqref{objective}. The first step is to find an equivalence with optimization objective as formally stated in Lemma \ref{prop_in}.

\begin{lemma}\label{prop_in}
If $\lvert\boldsymbol{\Pi}_{\Theta}\rvert \geq \lvert\boldsymbol{\hat{\Pi}}\rvert$ and $\epsilon$ satisfies $\epsilon\geq \varsigma - \min_{\iota_m,\iota_{m'}}\frac{\varsigma\gamma\left(\iota_{m'}-\iota_{m}\right)}{\gamma \iota_{m'}+1-\gamma}$, then the optimal $\boldsymbol{\Pi}$ that maximizes the objective:
\begin{equation}
\begin{aligned}
\label{maxmin}
\mathcal{L}(\boldsymbol{\Pi}) =\max_{\boldsymbol{\pi}\sim\boldsymbol{\Pi}}\min_{\boldsymbol{\hat{\pi}}\sim\boldsymbol{\hat{\Pi}}}\mathbb{E}_{\substack{\boldsymbol{a}\sim \boldsymbol{\hat{\pi}},\boldsymbol{o}}}\left [\log \boldsymbol{\pi}(\boldsymbol{a}\mid\boldsymbol{o}) \right ],
\end{aligned}
\end{equation}
for every evaluation Markov Game $m'\in \mdp'$, there exists a joint strategy $\boldsymbol{\pi}\in \boldsymbol{\Pi}_{\Theta^*}$ that reaches Nash Equilibrium (\ie, $\boldsymbol{\Pi}_{\Theta^*}=\boldsymbol{\Pi}^*$ satisfies equation \eqref{primal}).
\end{lemma}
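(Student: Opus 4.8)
The plan is to reduce Lemma \ref{prop_in} to Lemma \ref{prop1} by showing that the maximizer of the cross-entropy-type objective in Eq.~\eqref{maxmin} is forced to reconstruct every element of the training set $\boldsymbol{\hat{\Pi}}$. The starting point is the elementary identity, for any joint strategies $\boldsymbol{\pi}$ and $\boldsymbol{\hat{\pi}}$,
\[
\mathbb{E}_{\boldsymbol{a}\sim\boldsymbol{\hat{\pi}},\boldsymbol{o}}\big[\log\boldsymbol{\pi}(\boldsymbol{a}|\boldsymbol{o})\big] = -\,\mathbb{E}_{\boldsymbol{o}}\big[\mathcal{H}(\boldsymbol{\hat{\pi}}(\cdot|\boldsymbol{o}))\big] - \mathbb{E}_{\boldsymbol{o}}\big[D_{\mathrm{KL}}\big(\boldsymbol{\hat{\pi}}(\cdot|\boldsymbol{o})\,\big\|\,\boldsymbol{\pi}(\cdot|\boldsymbol{o})\big)\big],
\]
so that the inner quantity is at most $-\mathbb{E}_{\boldsymbol{o}}[\mathcal{H}(\boldsymbol{\hat{\pi}}(\cdot|\boldsymbol{o}))]$, with equality iff $\boldsymbol{\pi}$ equals $\boldsymbol{\hat{\pi}}$ on the support of the occupancy measure. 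Since $|\boldsymbol{\Pi}_{\Theta}| \geq |\boldsymbol{\hat{\Pi}}|$, the parameterized family has enough capacity to realize all of $\boldsymbol{\hat{\Pi}}$ at once; choosing $\Theta$ so that $\boldsymbol{\hat{\Pi}}\subseteq\boldsymbol{\Pi}_{\Theta}$ attains this per-target bound simultaneously and hence maximizes $\mathcal{L}(\boldsymbol{\Pi})$. I would then argue the converse: if some $\boldsymbol{\hat{\pi}}\in\boldsymbol{\hat{\Pi}}$ had no match in a candidate optimizer, the worst-case (inner) minimization over $\boldsymbol{\hat{\Pi}}$ would strictly decrease $\mathcal{L}$, so every maximizer $\boldsymbol{\Pi}_{\Theta^*}$ must contain a strategy matching each $\boldsymbol{\hat{\pi}}\in\boldsymbol{\hat{\Pi}}$.

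Given the covering property $\boldsymbol{\hat{\Pi}}\subseteq\boldsymbol{\Pi}_{\Theta^*}$, I would invoke Lemma \ref{prop1}: because $\epsilon$ meets the stated bound, for every $m'\in\mdp'$ some joint strategy with $\mathcal{D}_{m'}(\cdot)=0$ already lies in $\boldsymbol{\hat{\Pi}}$, hence in $\boldsymbol{\Pi}_{\Theta^*}$, which is exactly Eq.~\eqref{primal} and the identification $\boldsymbol{\Pi}_{\Theta^*}=\boldsymbol{\Pi}^*$. For the quantitative version, where $\boldsymbol{\Pi}_{\Theta^*}$ only approximately matches members of $\boldsymbol{\hat{\Pi}}$, I would chain a Pinsker-type bound (small $\mathbb{E}_{\boldsymbol{o}}[D_{\mathrm{KL}}]$ controls $\mathbb{E}_{\boldsymbol{o}}\lVert\boldsymbol{\pi}(\cdot|\boldsymbol{o})-\boldsymbol{\hat{\pi}}(\cdot|\boldsymbol{o})\rVert_{1}$, hence the action-space quantity $\kappa(\pi^i)$ of Lemma \ref{l1}) with Lemma \ref{l1} to convert closeness in likelihood into a small $\mathcal{D}_{m'}$, absorbing the residual into $\epsilon$.

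The main obstacle is the middle step: showing that maximizing the outer-$\max$/inner-$\min$ objective genuinely yields a \emph{set} covering all of $\boldsymbol{\hat{\Pi}}$ rather than a single strategy that best hedges against the worst target. This is exactly where the cardinality hypothesis $|\boldsymbol{\Pi}_{\Theta}| \geq |\boldsymbol{\hat{\Pi}}|$ must be used, and I expect the cleanest route is the iterative reading already described in Section~\ref{analysis}: each round augments $\boldsymbol{\Pi}_{\Theta}$ with the best-reconstructing $\boldsymbol{\pi}$ for the currently worst-covered target $\boldsymbol{\hat{\pi}}\in\boldsymbol{\hat{\Pi}}$, and because there is sufficient capacity this terminates only once every $\boldsymbol{\hat{\pi}}$ has a zero-$D_{\mathrm{KL}}$ match. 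A secondary technicality is lifting the observation-averaged KL control to the $\lVert\cdot\rVert_{s,\infty}$ norm appearing in $\kappa$, which I would handle by assuming full support of the occupancy measure (or by replacing the expectation over $\boldsymbol{o}$ with a supremum throughout at the cost of a constant factor).
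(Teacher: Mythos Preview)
Your approach is essentially the same as the paper's: decompose the cross-entropy via the KL identity, use the capacity hypothesis $|\boldsymbol{\Pi}_{\Theta}|\geq|\boldsymbol{\hat{\Pi}}|$ to argue that the optimizer must cover every $\boldsymbol{\hat{\pi}}\in\boldsymbol{\hat{\Pi}}$, and then invoke Lemma~\ref{prop1} to conclude that the Nash strategies for every $m'\in\mdp'$ lie in $\boldsymbol{\Pi}_{\Theta^*}$. The paper's own argument is in fact briefer and more informal than yours (it simply asserts the covering property and applies Lemma~\ref{prop1}); your explicit KL decomposition and your flagging of the $\max$/$\min$ ordering issue are handled by the paper only later, in the proof of Theorem~\ref{theorem1}, and your quantitative Pinsker/Lemma~\ref{l1} extension and the $\lVert\cdot\rVert_{s,\infty}$ concern go beyond what the paper attempts.
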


The next step is to build the relationship between  \eqref{maxmin} and the mutual information objective in  \eqref{objective}. If $\boldsymbol{\pi} \in \boldsymbol{\hat{\Pi}}$, we notice that $\max\limits_{\boldsymbol{\pi}}\mathbb{E}_{\substack{\boldsymbol{\hat{\pi}}}}\left [\log \boldsymbol{\pi} \right ] = - \mathcal{H}(\boldsymbol{\hat{\pi}})$. Thus, Theorem \ref{theorem1} can be proved by leveraging the close connection between entropy and mutual information.

In the sequel, we provide the proofs of the above three lemmas in \ref{A1}, \ref{A2}, and \ref{A3}, respectively. The proof of Theorem \ref{theorem1} in \ref{A4} is built upon these lemmas.

\subsection{Proof of Lemma \ref{l1}}
\label{A1}

Before giving the proof of Lemma \ref{l1}, we first provide a useful lemma of the Markov state transition operator $\mathcal{P}_m^{\pi^{i},\boldsymbol{\pi^{\textrm{-}i}}}$. The following Lemma \ref{liulemma} is modified from \citep{liu2021policy} Lemma E.1, which is originally presented for fictitious self-play. We modify the lemma and the proof to be suitable for the concerned multi-agent general-sum Markov Game setting with our notations.
\begin{lemma}
\label{liulemma}
(Liu et al., 2021). The Markov state transition operator satisfies:\newline
$$
\left\lVert{\sum_t \gamma^t \left[ \left(\mathcal{P}_m^{\pi^{*i},\boldsymbol{\pi^{\textrm{-}i}}}\right)^t - \left(\mathcal{P}_m^{\pi^{i},\boldsymbol{\pi^{\textrm{-}i}}}\right)^t \right]}\right\rVert_{op} \leq \frac{\gamma}{\left(1 - \gamma\right)^2}\left\lVert{ \mathcal{P}_m^{\pi^{*i},\boldsymbol{\pi^{\textrm{-}i}}} - \mathcal{P}_m^{\pi^{i},\boldsymbol{\pi^{\textrm{-}i}}}}\right\rVert_{op}.
$$
\end{lemma}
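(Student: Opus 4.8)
The plan is to treat $\mathcal{P}_m^{\pi^{*i},\boldsymbol{\pi^{\textrm{-}i}}}$ and $\mathcal{P}_m^{\pi^{i},\boldsymbol{\pi^{\textrm{-}i}}}$ as honest Markov (stochastic) operators and reduce the claim to an operator telescoping identity followed by a geometric series. Throughout, write $A := \mathcal{P}_m^{\pi^{*i},\boldsymbol{\pi^{\textrm{-}i}}}$ and $B := \mathcal{P}_m^{\pi^{i},\boldsymbol{\pi^{\textrm{-}i}}}$ for the two operators acting on $L_1(\mathcal{S})$. The first and most important ingredient I would establish is that both $A$ and $B$ are non-expansive in the $L_1$ operator norm, i.e. $\lVert A\rVert_{op}\le 1$ and $\lVert B\rVert_{op}\le 1$. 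This follows from the fact that for every fixed state and action profile, $\mathcal{P}_m(\cdot\mid s,a^i,\boldsymbol{a^{\textrm{-}i}})$ is a probability measure, so the induced transition operator preserves total mass and contracts (does not expand) the $L_1$ norm. I would state this as a standalone preliminary observation, since every subsequent estimate depends on it; submultiplicativity of the operator norm under composition will also be used and is immediate from the definition of $\lVert\cdot\rVert_{op}$.

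The second ingredient is the telescoping decomposition of the difference of powers,
\[
A^{t}-B^{t} \;=\; \sum_{k=0}^{t-1} A^{k}\,(A-B)\,B^{\,t-1-k},
\]
which I would verify by a one-line induction on $t$ (the $k$-th and $(k{+}1)$-th summands are chosen so consecutive terms cancel). Taking operator norms, applying the triangle inequality and submultiplicativity, and then inserting $\lVert A\rVert_{op}\le 1$, $\lVert B\rVert_{op}\le 1$ collapses each factor to $1$, giving the per-$t$ bound
\[
\lVert A^{t}-B^{t}\rVert_{op} \;\le\; \sum_{k=0}^{t-1}\lVert A\rVert_{op}^{\,k}\,\lVert A-B\rVert_{op}\,\lVert B\rVert_{op}^{\,t-1-k} \;\le\; t\,\lVert A-B\rVert_{op}.
\]

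The final step is to discount and sum. Applying the triangle inequality to the series and then the per-$t$ bound yields
\[
\Bigl\lVert \sum_{t}\gamma^{t}\bigl(A^{t}-B^{t}\bigr)\Bigr\rVert_{op} \;\le\; \sum_{t=0}^{\infty}\gamma^{t}\,t\,\lVert A-B\rVert_{op} \;=\; \frac{\gamma}{(1-\gamma)^{2}}\,\lVert A-B\rVert_{op},
\]
where the closed form uses the standard identity $\sum_{t\ge 0} t\gamma^{t} = \gamma/(1-\gamma)^{2}$ for $\gamma\in(0,1)$ (the $t=0$ term vanishes, consistent with $A^{0}-B^{0}=0$). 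Substituting back $A-B = \mathcal{P}_m^{\pi^{*i},\boldsymbol{\pi^{\textrm{-}i}}}-\mathcal{P}_m^{\pi^{i},\boldsymbol{\pi^{\textrm{-}i}}}$ gives exactly the stated inequality. The main obstacle I anticipate is not the algebra, which is routine, but rigorously justifying $\lVert A\rVert_{op},\lVert B\rVert_{op}\le 1$ under the $L_1$ operator norm from the integral definition of the transition operator: one must argue carefully that integrating an $L_1$ function against the (state-and-action-averaged) probability kernel is a genuine Markov operator and therefore an $L_1$-contraction. Once that functional-analytic fact is pinned down, the telescoping and summation steps follow mechanically.
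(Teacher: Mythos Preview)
Your proposal is correct and follows essentially the same route as the paper: the paper obtains the per-$t$ bound $\lVert A^{t}-B^{t}\rVert_{op}\le t\,\lVert A-B\rVert_{op}$ via the one-step recursion $A^{t}-B^{t}=A^{t-1}(A-B)+(A^{t-1}-B^{t-1})B$ (which unrolls to your telescoping sum), implicitly using $\lVert A\rVert_{op},\lVert B\rVert_{op}\le 1$, and then multiplies by $\gamma^{t}$ and sums using $\sum_{t} t\gamma^{t}=\gamma/(1-\gamma)^{2}$. The only cosmetic difference is that the paper leaves the non-expansiveness of the Markov operators implicit, whereas you (rightly) flag it as the one fact that needs independent justification.
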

\begin{proof}

As a first step, we study the operator $\mathcal{P}_m^{\pi^{i},\boldsymbol{\pi^{\textrm{-}i}}}$ and obtain the following results:
\begin{equation}
\label{eq_first_rec}
    \begin{aligned}
    &\left\lVert{ \left(\mathcal{P}_m^{\pi^{*i},\boldsymbol{\pi^{\textrm{-}i}}}\right)^t - \left(\mathcal{P}_m^{\pi^{i},\boldsymbol{\pi^{\textrm{-}i}}}\right)^t }\right\rVert_{op}\\
    &= \left\lVert{ \left(\mathcal{P}_m^{\pi^{*i},\boldsymbol{\pi^{\textrm{-}i}}}\right)^{t-1}  \left(\mathcal{P}_m^{\pi^{*i},\boldsymbol{\pi^{\textrm{-}i}}} - \mathcal{P}_m^{\pi^{i},\boldsymbol{\pi^{\textrm{-}i}}} \right) + \left( \left(\mathcal{P}_m^{\pi^{*i},\boldsymbol{\pi^{\textrm{-}i}}}\right)^{t-1} - \left(\mathcal{P}_m^{\pi^{i},\boldsymbol{\pi^{\textrm{-}i}}}\right)^{t-1} \right)  \mathcal{P}_m^{\pi^{i},\boldsymbol{\pi^{\textrm{-}i}}}}\right\rVert_{op}\\
    &\leq \left\lVert{ \left(\mathcal{P}_m^{\pi^{*i},\boldsymbol{\pi^{\textrm{-}i}}}\right)^{t-1}  \left(\mathcal{P}^{\pi^{*i},\boldsymbol{\pi^{\textrm{-}i}}} - \mathcal{P}_m^{\pi^{i},\boldsymbol{\pi^{\textrm{-}i}}} \right)}\right\rVert_{op} + \left\lVert{\left( \left(\mathcal{P}_m^{\pi^{*i},\boldsymbol{\pi^{\textrm{-}i}}}\right)^{t-1} - \left(\mathcal{P}_m^{\pi^{i},\boldsymbol{\pi^{\textrm{-}i}}}\right)^{t-1} \right)  \mathcal{P}_m^{\pi^{i},\boldsymbol{\pi^{\textrm{-}i}}}}\right\rVert_{op}\\
    &\leq \left\lVert{ \mathcal{P}_m^{\pi^{*i},\boldsymbol{\pi^{\textrm{-}i}}} - \mathcal{P}_m^{\pi^{i},\boldsymbol{\pi^{\textrm{-}i}}}}\right\rVert_{op} + \left\lVert{ \left(\mathcal{P}_m^{\pi^{*i},\boldsymbol{\pi^{\textrm{-}i}}}\right)^{t-1} - \left(\mathcal{P}_m^{\pi^{i},\boldsymbol{\pi^{\textrm{-}i}}}\right)^{t-1} }\right\rVert_{op},
        \end{aligned}
\end{equation}
where the equality follows from basic algebra, the first inequality follows from the triangle inequality, and the last inequality holds since $\mathcal{P}_m^{\pi^{*i},\boldsymbol{\pi^{\textrm{-}i}}} \leq 1$ \citep{lasota1998chaos}.

Recursively applying \eqref{eq_first_rec}, we obtain
\begin{align*}
    &\left\lVert{ \left(\mathcal{P}_m^{\pi^{*i},\boldsymbol{\pi^{\textrm{-}i}}}\right)^t - \left(\mathcal{P}_m^{\pi^{i},\boldsymbol{\pi^{\textrm{-}i}}}\right)^t }\right\rVert_{op}\\
    & \leq \left\lVert{ \mathcal{P}_m^{\pi^{*i},\boldsymbol{\pi^{\textrm{-}i}}} - \mathcal{P}_m^{\pi^{i},\boldsymbol{\pi^{\textrm{-}i}}}}\right\rVert_{op} +  \left\lVert{ \mathcal{P}_m^{\pi^{*i},\boldsymbol{\pi^{\textrm{-}i}}} - \mathcal{P}_m^{\pi^{i},\boldsymbol{\pi^{\textrm{-}i}}}}\right\rVert_{op} + \left\lVert{ \left(\mathcal{P}_m^{\pi^{*i},\boldsymbol{\pi^{\textrm{-}i}}}\right)^{t-2} - \left(\mathcal{P}_m^{\pi^{i},\boldsymbol{\pi^{\textrm{-}i}}}\right)^{t-2} }\right\rVert_{op}\\
    & \leq t \cdot \left\lVert{ \mathcal{P}_m^{\pi^{*i},\boldsymbol{\pi^{\textrm{-}i}}} - \mathcal{P}_m^{\pi^{i},\boldsymbol{\pi^{\textrm{-}i}}}}\right\rVert_{op}.
\end{align*}

Then we have by summation that
\begin{align*}
    \left\lVert{\sum_t \gamma^t \left[ \left(\mathcal{P}_m^{\pi^{*i},\boldsymbol{\pi^{\textrm{-}i}}}\right)^t - \left(\mathcal{P}_m^{\pi^{i},\boldsymbol{\pi^{\textrm{-}i}}}\right)^t \right]}\right\rVert_{op} &\leq \sum_t \gamma^t \left\lVert{ \left(\mathcal{P}_m^{\pi^{*i},\boldsymbol{\pi^{\textrm{-}i}}}\right)^t - \left(\mathcal{P}_m^{\pi^{i},\boldsymbol{\pi^{\textrm{-}i}}}\right)^t }\right\rVert_{op}\\
    &\leq \left(\sum_t t \gamma^t \right)\left\lVert{ \mathcal{P}_m^{\pi^{*i},\boldsymbol{\pi^{\textrm{-}i}}} - \mathcal{P}_m^{\pi^{i},\boldsymbol{\pi^{\textrm{-}i}}}}\right\rVert_{op}\\
    &= \frac{\gamma}{\left(1 - \gamma\right)^2}\left\lVert{ \mathcal{P}_m^{\pi^{*i},\boldsymbol{\pi^{\textrm{-}i}}} - \mathcal{P}_m^{\pi^{i},\boldsymbol{\pi^{\textrm{-}i}}}}\right\rVert_{op}.
\end{align*}

This concludes the proof of Lemma \ref{liulemma}.
\end{proof}


Now we are ready to prove Lemma \ref{l1}. 

\begin{proof}
To begin, we denote by $\rho_{s,m}^{\pi^{*i},\boldsymbol{\pi^{\textrm{-}i}}}$ the state visitation measure of the joint strategy $(\pi^{*i},\boldsymbol{\pi^{\textrm{-}i}})$ in the Markov Game $m$, which is defined as follows:
\begin{align*}
    \rho_{s,m}^{\pi^{*i},\boldsymbol{\pi^{\textrm{-}i}}} &= \left(\mathcal{I} - \gamma \mathcal{P}_m^{\pi^i,\boldsymbol{\pi^{\textrm{-}i}}} \right)^{-1}  \delta_s\\
    &= \left( \sum_t \gamma^t \left(\mathcal{P}_m^{\pi^i,\boldsymbol{\pi^{\textrm{-}i}}}\right)^t \right)  \delta_s ,
\end{align*}
where $\delta_s$ is a Dirac delta function.

By converting the value of strategy to the integration of
reward over state measure, it holds that 
\begin{equation}
\label{init}
\begin{aligned}
&v^{*i,m}_{\boldsymbol{\pi}^{\textrm{-}i}}(s) - v^{i,m}_{\boldsymbol{\pi}}(s)\\
&= \mathbb{E}_{s'\sim \rho_{s,m}^{\pi^{*i},\boldsymbol{\pi^{\textrm{-}i}}}}\left[ \mathbb{E}_{\pi^{*i},\boldsymbol{\pi^{\textrm{-}i}}} r^i_m(s',\boldsymbol{a}) \right] - \mathbb{E}_{s'\sim \rho_{s,m}^{\pi^{i},\boldsymbol{\pi^{\textrm{-}i}}}}\left[ \mathbb{E}_{\pi^{i},\boldsymbol{\pi^{\textrm{-}i}}} r^i_m(s',\boldsymbol{a}) \right]\\
&= \mathbb{E}_{s'\sim \rho_{s,m}^{\pi^{*i},\boldsymbol{\pi^{\textrm{-}i}}}}\left[ \mathbb{E}_{\pi^{i},\boldsymbol{\pi^{\textrm{-}i}}} r^i_m(s',\boldsymbol{a}) \right] - \mathbb{E}_{s'\sim \rho_{s,m}^{\pi^{i},\boldsymbol{\pi^{\textrm{-}i}}}}\left[ \mathbb{E}_{\pi^{i},\boldsymbol{\pi^{\textrm{-}i}}} r^i_m(s',\boldsymbol{a}) \right] \\
&\qquad+ \mathbb{E}_{s'\sim \rho_{s,m}^{\pi^{*i},\boldsymbol{\pi^{\textrm{-}i}}}}\left[ \mathbb{E}_{\pi^{*i},\boldsymbol{\pi^{\textrm{-}i}}} r^i_m(s',\boldsymbol{a}) - \mathbb{E}_{\pi^{i},\boldsymbol{\pi^{\textrm{-}i}}} r^i_m(s',\boldsymbol{a}) \right] \\
&\leq \left\lVert{\rho_{s,m}^{\pi^{*i},\boldsymbol{\pi^{\textrm{-}i}}} - \rho_{s,m}^{\pi^{i},\boldsymbol{\pi^{\textrm{-}i}}}}\right\rVert_{s,1} + \mathbb{E}_{s'\sim \rho_{s,m}^{\pi^{*i},\boldsymbol{\pi^{\textrm{-}i}}}}\left[ \mathbb{E}_{\pi^{*i},\boldsymbol{\pi^{\textrm{-}i}}} r^i_m(s',\boldsymbol{a}) - \mathbb{E}_{\pi^{i},\boldsymbol{\pi^{\textrm{-}i}}} r^i_m(s',\boldsymbol{a}) \right],
\end{aligned}
\end{equation}
where the inequality follows from the definition of the $\mathcal{L}_1$-norm over the state space.

We then bound the resulting two terms separately. For the first term, we have from Lemma \ref{liulemma} that:
$$
\left\lVert{\sum_t \gamma^t \left[ \left(\mathcal{P}_m^{\pi^{*i},\boldsymbol{\pi^{\textrm{-}i}}}\right)^t - \left(\mathcal{P}_m^{\pi^{i},\boldsymbol{\pi^{\textrm{-}i}}}\right)^t \right]}\right\rVert_{op} \leq \frac{\gamma}{\left(1 - \gamma\right)^2}\left\lVert{ \mathcal{P}_m^{\pi^{*i},\boldsymbol{\pi^{\textrm{-}i}}} - \mathcal{P}_m^{\pi^{i},\boldsymbol{\pi^{\textrm{-}i}}}}\right\rVert_{op}
$$

Following from the definition of $\iota_m$, we have
\begin{align*}
    \left\lVert{\sum_t \gamma^t \left[ \left(\mathcal{P}_m^{\pi^{*i},\boldsymbol{\pi^{\textrm{-}i}}}\right)^t - \left(\mathcal{P}_m^{\pi^{i},\boldsymbol{\pi^{\textrm{-}i}}}\right)^t \right]}\right\rVert_{op}&\leq \frac{\gamma}{\left(1 - \gamma\right)^2}\left\lVert{ \mathcal{P}_m^{\pi^{*i},\boldsymbol{\pi^{\textrm{-}i}}} - \mathcal{P}_m^{\pi^{i},\boldsymbol{\pi^{\textrm{-}i}}}}\right\rVert_{op}\\
    &\leq \frac{\gamma \iota_m}{\left(1 - \gamma\right)^2}\kappa(\pi^i)
\end{align*}

Thus, it holds that
\begin{equation}
\label{first_bound}
    \begin{aligned}
    \left\lVert{\rho_{s,m}^{\pi^{*i},\boldsymbol{\pi^{\textrm{-}i}}} - \rho_{s,m}^{\pi^{i},\boldsymbol{\pi^{\textrm{-}i}}}}\right\rVert_{s,1}
    &= \left\lVert{\left( \sum_t \gamma^t \left(\mathcal{P}_m^{\pi^{*i},\boldsymbol{\pi^{\textrm{-}i}}}\right)^t \right)  \delta_s   -   \left( \sum_t \gamma^t \left(\mathcal{P}_m^{\pi^i,\boldsymbol{\pi^{\textrm{-}i}}}\right)^t \right)  \delta_s}\right\rVert_{s,1}\\
&\leq  \left\lVert{\sum_t \gamma^t \left[ \left(\mathcal{P}_m^{\pi^{*i},\boldsymbol{\pi^{\textrm{-}i}}}\right)^t - \left(\mathcal{P}_m^{\pi^{i},\boldsymbol{\pi^{\textrm{-}i}}}\right)^t \right]}\right\rVert_{op} \cdot \left\lVert{\delta_s}\right\rVert_{s,1}\\
&\leq \frac{\gamma \iota_m}{\left(1 - \gamma\right)^2}\kappa(\pi^i),
    \end{aligned}
\end{equation}
where the last inequality holds due to Assumption \ref{lip_assumption}.

Besides, the second term on the right-hand side of \eqref{init} satisfies
\begin{equation}
\label{second_bound}
    \begin{aligned}
    &\left\lVert{\mathbb{E}_{s'\sim \rho_{s,m}^{\pi^{*i},\boldsymbol{\pi^{\textrm{-}i}}}}\left[ \mathbb{E}_{\pi^{*i},\boldsymbol{\pi^{\textrm{-}i}}} r^i_m(s',\boldsymbol{a}) - \mathbb{E}_{\pi^{i},\boldsymbol{\pi^{\textrm{-}i}}} r^i_m(s',\boldsymbol{a}) \right]}\right\rVert_{s,\infty}\\
    &\leq \left\lVert{\mathbb{E}_{s'\sim \rho_{s,m}^{\pi^{*i},\boldsymbol{\pi^{\textrm{-}i}}}}\left[   \left\lVert{\pi^{*i}(\cdot\mid s)-\pi^i(\cdot\mid s)}\right\rVert_{1}\right]}\right\rVert_{s,\infty}\\
    &\leq \kappa(\pi^i)\left(\sum_t\gamma^t \right)= \frac{\kappa(\pi^i)}{1 - \gamma},
    \end{aligned}
\end{equation}
where the second inequality follows from the definition of $\kappa(\pi^i)$.

Finally, combining \eqref{first_bound} and \eqref{second_bound}, we obtain the stated result for $\mathcal{D}_m(\boldsymbol{\pi})$ as follows:
\begin{align*}
\mathcal{D}_m(\boldsymbol{\pi})&=\left\lVert{\left\lVert{v^{*i,m}_{\boldsymbol{\pi}^{\textrm{-}i}}(s) - v^{i,m}_{\boldsymbol{\pi}}(s)}\right\rVert_{s,\infty}}\right\rVert_{i,1}\\
&\leq \left\lVert{\left\lVert{\left\lVert{\rho_{s,m}^{\pi^{*i},\boldsymbol{\pi^{\textrm{-}i}}} - \rho_{s,m}^{\pi^{i},\boldsymbol{\pi^{\textrm{-}i}}}}\right\rVert_{s,1} + \mathbb{E}_{s'\sim \rho_{s,m}^{\pi^{*i},\boldsymbol{\pi^{\textrm{-}i}}}}\left[ \mathbb{E}_{\pi^{*i},\boldsymbol{\pi^{\textrm{-}i}}} r^i_m(s',\boldsymbol{a}) - \mathbb{E}_{\pi^{i},\boldsymbol{\pi^{\textrm{-}i}}} r^i_m(s',\boldsymbol{a}) \right]}\right\rVert_{s,\infty}}\right\rVert_{i,1}\\
&\leq \left\lVert{\frac{\gamma \iota_m}{\left(1 - \gamma\right)^2}\kappa(\pi^i) + \frac{\kappa(\pi^i)}{1 - \gamma}}\right\rVert_{i,1}\\
&=\left(\frac{\gamma \iota_m}{(1-\gamma)^2}+\frac{1}{1-\gamma}\right)\left\lVert{\kappa(\pi^i)}\right\rVert_{i,1}
\end{align*}
\end{proof}

\subsection{Proof of Lemma \ref{prop1}}
\label{A2}

\begin{proof}
Definition \ref{def_distance_sets} describes the distance $\varsigma$ between the training MG set $\mdp$ and the evaluation MG set $\mdp'$. In the following, we provide an equivalent logic statement:

\begin{align*}
\forall m' \in \mdp', i\in\{1,\ldots,N_{m'}\}, \exists m\in\mdp, \boldsymbol{\pi}, \boldsymbol{\pi'}, i'\in h_{i,m},\\
\text{~s.t.~}  \mathcal{D}_{m}(\boldsymbol{\pi})=0, 
\mathcal{D}_{m'}(\boldsymbol{\pi'})=0, \forall \pi^i\in\boldsymbol{\pi}, \mathcal{D}_{m'}(\pi^{i'},\boldsymbol{\pi'^{\textrm{-}i}})\leq\varsigma.
\end{align*}

In an evaluation MG $\Tilde{m}'\in\mdp'$, let $i$ and $i'$ be the agent index defined as follows:

\begin{equation}
\label{i_d}
    \begin{aligned}
    i = \argmax_{i\in\{1,\ldots,N_{\Tilde{m}'}\}} \left[\min_{i'\in h_{i,m}} \mathcal{D}_{\Tilde{m}'}(\pi^{i'},\boldsymbol{\pi'^{\textrm{-}i}})\right], \text{~s.t.~}  \mathcal{D}_{m}(\boldsymbol{\pi})=0, 
\mathcal{D}_{\Tilde{m}'}(\boldsymbol{\pi'})=0
    \end{aligned}
\end{equation}

Intuitively, the above agent $i$ is the agent in $\Tilde{m}'$ that being replaced by the trained policy $\pi^{i'}$ in the policy set leads to the largest distance to the Nash Equilibrium. And agent $i'$ is the corresponding agent that $\pi^{i'}$ achieves an NE in a particular training MG.

With $i$ and $i'$ denied in \eqref{i_d}, the bound in Lemma \ref{l1} can be specified as follows:
\begin{align*}
\mathcal{D}_{\Tilde{m}'}(\pi^{i'},\boldsymbol{\pi'^{\textrm{-}i}})&=\left\lVert{\left\lVert{v^{*i,\Tilde{m}'}_{\boldsymbol{\pi'^{\textrm{-}i}}} - v^{i,\Tilde{m}'}_{\pi^{i'},\boldsymbol{\pi'^{\textrm{-}i}}}}\right\rVert_{s,\infty}}\right\rVert_{i,1}\\    &=\left\lVert{v^{i,\Tilde{m}'}_{\boldsymbol{\pi'}} - v^{i,\Tilde{m}'}_{\pi^{i'},\boldsymbol{\pi'^{\textrm{-}i}}}}\right\rVert_{s,\infty}\\
    &= \left\lVert{v^{i,\Tilde{m}'}_{\pi'^{i},\boldsymbol{\pi'^{\textrm{-}i}}} - v^{i,\Tilde{m}'}_{\pi^{i'},\boldsymbol{\pi'^{\textrm{-}i}}}}\right\rVert_{s,\infty}\\
    &\leq \frac{\gamma \iota_{\Tilde{m}'}}{\left(1 - \gamma\right)^2}\kappa(\pi^{i'}) + \frac{\kappa(\pi^{i'})}{1 - \gamma},
\end{align*}
where the second equality holds since $\mathcal{D}_{\Tilde{m}'}(\boldsymbol{\pi'})=0$, and the distance from the joint strategy $(\pi^{i'},\boldsymbol{\pi'^{\textrm{-}i}})$ to a Nash Equilibrium is equal to the distance to $\boldsymbol{\pi'}$. The last inequality holds due to the bound in Lemma \ref{l1} and the definition of $i$ and $i'$.

This implies that for any MG $\Tilde{m}'$, it holds that
\begin{equation}
\label{sin_u}
\max_{\substack{i\in\{1,\ldots,N_{\Tilde{m}'}\}}}\min_{ \substack{m\in\mdp,i'\in h_i\\ \boldsymbol{\pi}\in\{\boldsymbol{\pi}\mid \mathcal{D}_{m}(\boldsymbol{\pi})=0\}\\\boldsymbol{\pi'}\in\{\boldsymbol{\pi'}\mid \mathcal{D}_{\Tilde{m}'}(\boldsymbol{\pi'})=0\}}}\mathcal{D}_{\Tilde{m}'}(\pi^{i'},\boldsymbol{\pi'^{\textrm{-}i}}) \leq \frac{\gamma \iota_{\Tilde{m}'}}{\left(1 - \gamma\right)^2}\kappa(\pi^{i'}) + \frac{\kappa(\pi^{i'})}{1 - \gamma}.
\end{equation}

With the maximum influence $\iota_{m'}$ over the evaluation MG $m'\in\mdp'$, we obtain:
\begin{equation}
\label{eq_varsig}
\begin{aligned}
        \varsigma = \max_{\iota_{m'}}\frac{\gamma \iota_{m'}}{\left(1 - \gamma\right)^2}\kappa(\pi^{i'}) + \frac{\kappa(\pi^{i'})}{1 - \gamma}.
\end{aligned}
\end{equation}

Since $\mathcal{D}_{m}(\boldsymbol{\pi})=0$ and $\mathcal{D}_{m'}(\boldsymbol{\pi'})=0$, the best response in the Markov Game $m$ with other agents' policies fixed as $\boldsymbol{\pi^{\textrm{-}i'}}$ is $\pi^{i'}$. Similarly, in game $m'$, the best response with other agent's policies fixed as $\boldsymbol{\pi'^{\textrm{-}i}}$ is $\pi'^i$. Therefore, we obtain
\begin{align*}
    \kappa(\pi^{i'}) = \kappa(\pi'^{i})= \left\lVert{\left\lVert{\pi^{i'}(a) - \pi'^{i}(a) }\right\rVert_{a,1}}\right\rVert_{s,\infty}.
\end{align*}

For $\epsilon$ that satisfies \eqref{eps_con}, we obtain:
\begin{equation}
\label{eq_in_epsilon}
    \begin{aligned}
    \epsilon &\geq \max_{\iota_{m'},\iota_m}\varsigma - \frac{\varsigma\gamma\left(\iota_{m'}-\iota_{m}\right)}{\gamma \iota_{m'}+1-\gamma}\\
    &= \max_{\iota_{m'},\iota_m} \left(\frac{\gamma \iota_{m}+1-\gamma}{\gamma \iota_{m'}+1-\gamma}\right)\cdot\left(\frac{\gamma \iota_{m'}}{\left(1 - \gamma\right)^2}\kappa(\pi^{i'}) + \frac{\kappa(\pi^{i'})}{1 - \gamma}\right)\\
    &\geq \max_{\iota_m} \left(\frac{\gamma \iota_{m}+1-\gamma}{\gamma \iota_{\Tilde{m}'}+1-\gamma}\right)\cdot\left(\frac{\gamma \iota_{\Tilde{m}'}}{\left(1 - \gamma\right)^2}\kappa(\pi'^{i}) + \frac{\kappa(\pi'^{i})}{1 - \gamma}\right)\\
    &\geq \max_{\iota_m}\frac{\gamma \iota_{m}}{\left(1 - \gamma\right)^2}\kappa(\pi'^{i}) + \frac{\kappa(\pi'^{i})}{1 - \gamma},
    \end{aligned}
\end{equation}
where the equality follows from \eqref{eq_varsig}, and the last two inequalities follow from basic algebra.

Thus, we obtain
\begin{align*}
\mathcal{D}_{m}\left((\pi^{i'},\boldsymbol{\pi^{{\textrm{-}i'}}})\right)&=\left\lVert{\left\lVert{v^{*i,m}_{\boldsymbol{\pi^{{\textrm{-}i'}}}} - v^{i,m}_{\pi'^{i},\boldsymbol{\pi^{{\textrm{-}i'}}}}}\right\rVert_{s,\infty}}\right\rVert_{i,1}\\        &=\left\lVert{v^{i,m}_{\boldsymbol{\pi}} - v^{i,m}_{\pi'^{i},\boldsymbol{\pi^{{\textrm{-}i'}}}}}\right\rVert_{s,\infty}\\
    &= \left\lVert{v^{i,m}_{\pi^{i'},\boldsymbol{\pi^{{\textrm{-}i'}}}} - v^{i,m}_{\pi'^{i},\boldsymbol{\pi^{{\textrm{-}i'}}}}}\right\rVert_{s,\infty}\\
    &\leq \frac{\gamma \iota_{m}}{\left(1 - \gamma\right)^2}\kappa(\pi'^{i}) + \frac{\kappa(\pi'^{i})}{1 - \gamma}\\
    &\leq \max_{\iota_m}\frac{\gamma \iota_{m}}{\left(1 - \gamma\right)^2}\kappa(\pi'^{i}) + \frac{\kappa(\pi'^{i})}{1 - \gamma}\\
    &\leq \epsilon,
\end{align*}
where the second equality holds by the definition of $i$ and $i'$ in \eqref{i_d}, the third equality holds since $\mathcal{D}_{m}(\boldsymbol{\pi})=0$, and the last inequality follows from \eqref{eq_in_epsilon}.

This indicates that if we choose $\epsilon$ satisfying \eqref{eps_con}, then the policy $\pi'^i$ that achieves Nash Equilibrium in the Markov Game $\Tilde{m}'$ is guaranteed to be contained in the policy set. 

Since all the above inequalities hold for any $\Tilde{m}'\in\mdp'$, the policy that achieves NE in all MGs in the evaluation MG set $\mdp'$ is guaranteed to be included in the policy set. This completes the proof of Lemma \ref{prop1}.
\end{proof}

\subsection{Proof of Lemma \ref{prop_in}}
\label{A3}

\begin{proof}
We first know that
$$
\mathbb{E}_{\substack{\boldsymbol{a}\sim \boldsymbol{\hat{\pi}},\boldsymbol{o}}}\left [\log \boldsymbol{\hat{\pi}}(\boldsymbol{a}\mid \boldsymbol{o}) - \log \boldsymbol{\pi}(\boldsymbol{a}\mid \boldsymbol{o}) \right ] \geq 0.
$$

Since $\lvert\boldsymbol{\Pi}\rvert \geq\lvert\boldsymbol{\hat{\Pi}}\rvert$, the optimal $\lvert\boldsymbol{\Pi}\rvert$ that maximizes $\mathcal{L}(\boldsymbol{\Pi})$ in \eqref{maxmin} must satisfy

$$\min_{\boldsymbol{\hat{\pi}}\sim\boldsymbol{\hat{\Pi}}}\mathbb{E}_{\substack{\boldsymbol{a}\sim \boldsymbol{\hat{\pi}},\boldsymbol{o}}}\left [\log \boldsymbol{\pi}(\boldsymbol{a}\mid \boldsymbol{o}) \right ] \leq \mathbb{E}_{\substack{\boldsymbol{a}\sim \boldsymbol{\hat{\pi}},\boldsymbol{o}}}\left [\log \boldsymbol{\hat{\pi}}(\boldsymbol{a}\mid \boldsymbol{o})\right ] \leq  0.$$

In other words, for every policy $\boldsymbol{\hat{\pi}}$ in the joint strategy set $\boldsymbol{\hat{\Pi}}$, \ie, $\boldsymbol{\hat{\pi}}\in\boldsymbol{\hat{\Pi}}$, there exists a learned policy $\boldsymbol{\pi}\in \boldsymbol{\Pi}$, such that $\boldsymbol{\pi} = \boldsymbol{\hat{\pi}}$. Note that the above statement is true only when $\lvert\boldsymbol{\Pi}\rvert \geq \lvert\boldsymbol{\hat{\Pi}}\rvert$.

For $\epsilon$ that satisfies \eqref{eps_con}, from Lemma \ref{prop1} we know that for every evaluation MG $m'\in \mdp'$, the strategy that achieves Nash Equilibrium are guaranteed to be contained in $\boldsymbol{\hat{\Pi}}$. Thus, the optimal policy set $\boldsymbol{\Pi}$ that results from optimizing \eqref{maxmin} also contains the strategies that are NE in every MG $m'\in \mdp'$.

So the optimal policy set $\boldsymbol{\Pi}$ satisfies \eqref{primal}, which completes the proof.

\end{proof}

\subsection{Proof of Theorem \ref{theorem1}}
\label{A4}
\begin{proof}
Since $\lvert\boldsymbol{\Pi}_{\Theta}\rvert \geq \lvert\boldsymbol{\hat{\Pi}}\rvert$, we can simplify the objective in \eqref{maxmin} by updating the joint strategy $\boldsymbol{\pi}$ in a fixed-size set $\boldsymbol{\Pi}$. That is, maximizing \eqref{maxmin} is equivalent to

\begin{equation}
\label{simp}
\max_{\boldsymbol{\Pi}}\min_{\boldsymbol{\hat{\pi}}\sim\boldsymbol{\hat{\Pi}}}\max_{\boldsymbol{\pi}\sim\boldsymbol{\Pi}}\mathbb{E}_{\substack{\boldsymbol{a}\sim \boldsymbol{\hat{\pi}},\boldsymbol{o}}}\left [\log \boldsymbol{\pi}(\boldsymbol{a}\mid \boldsymbol{o}) \right ] = \max_{\boldsymbol{\pi}\sim\boldsymbol{\Pi}}\min_{\boldsymbol{\hat{\pi}}\sim\boldsymbol{\hat{\Pi}}}\mathbb{E}_{\substack{\boldsymbol{a}\sim \boldsymbol{\hat{\pi}},\boldsymbol{o}}}\left[\log \boldsymbol{\pi}(\boldsymbol{a}\mid \boldsymbol{o}) \right].
\end{equation}

From the non-negativeness of KL divergence, we have:
$$
\mathcal{D}_{KL} \left( \boldsymbol{\hat{\pi}}, \boldsymbol{\pi} \right) = \mathbb{E}_{\boldsymbol{\hat{\pi}}}\left [\log \frac{\boldsymbol{\hat{\pi}}}{\boldsymbol{\pi}} \right ] \geq 0,
$$
where the equality holds when $\boldsymbol{\pi}=\boldsymbol{\hat{\pi}}$.

Thus, we have from the definition of entropy that
\begin{align*}
\max_{\boldsymbol{\pi}}\mathbb{E}_{\substack{\boldsymbol{\hat{\pi}}}}\left [\log \boldsymbol{\pi} \right ] \leq - \mathcal{H}(\boldsymbol{\hat{\pi}}).
\end{align*}

If $\boldsymbol{\pi} \in \boldsymbol{\hat{\Pi}}$ is constrained, then the equality holds and $ \max\limits_{\boldsymbol{\pi}}\mathbb{E}_{\substack{\boldsymbol{\hat{\pi}}}}\left [\log \boldsymbol{\pi} \right ] = - \mathcal{H}(\boldsymbol{\hat{\pi}})$. This leads to
\begin{equation}
\begin{aligned}
\label{e2}
\max_{\boldsymbol{\pi}\sim\boldsymbol{\Pi}}\min_{\boldsymbol{\hat{\pi}}\sim\boldsymbol{\hat{\Pi}}}\mathbb{E}_{\substack{ \boldsymbol{\hat{\pi}}}}\left [\log \boldsymbol{\pi} \right ]
= & \min_{\boldsymbol{\hat{\pi}}\sim\boldsymbol{\hat{\Pi}}} - \mathcal{H}(\boldsymbol{\hat{\pi}})\\
= & \max_{\boldsymbol{\hat{\pi}}\sim\boldsymbol{\hat{\Pi}}} \mathcal{H}(\boldsymbol{\hat{\pi}})\\
= & \max_{\boldsymbol{\pi}\sim\boldsymbol{\Pi}}\mathcal{H}(\boldsymbol{\pi}), \text{~s.t.~} \boldsymbol{\pi}\in\boldsymbol{\hat{\Pi}}.
\end{aligned}
\end{equation}

The above equation states that the strategy $\boldsymbol{\pi}$ is learned to fit $\boldsymbol{\hat{\pi}}$. And to cover all the $\boldsymbol{\hat{\pi}}\in \boldsymbol{\hat{\Pi}}$, the entropy of strategies in $\boldsymbol{\Pi}$ should also be maximized.

Then we get the following equivalence:
\begin{equation}
\begin{aligned}
\label{e3}
\max_{\boldsymbol{\pi}\sim\boldsymbol{\Pi}}\mathcal{H}(\boldsymbol{\pi})
= & \max_{\Omega} \mathcal{I}(m;a\mid o) + \mathcal{H}(a\mid m,o)\\
= & \max_{\Omega} \mathcal{I}(m;a\mid o) + \mathcal{I}(g;a\mid m,o) + \mathcal{H}(a\mid m,o,g)\\
= & \max_{\Omega} \mathcal{I}(m;a\mid o) + \mathcal{I}(g;a\mid o) + \mathcal{H}(a\mid m,o,g)\\
= & \max_{\Omega} \mathcal{I}(m;a\mid o) + \mathcal{I}(g;a\mid o),\\
\end{aligned}
\end{equation}
where the first equality holds following the definition of mutual information and by noticing that policy $\boldsymbol{\pi}$ is meta-represented by $\Omega$. 
The last equality holds since the size of the meta-represented policy set $\mid \boldsymbol{\Pi}_{\Theta}\mid $ is sufficiently large and satisfying $\mid \boldsymbol{\Pi}_{\Theta}\mid  \geq \mid \boldsymbol{\hat{\Pi}}\mid $.

Combining \eqref{e2} and \eqref{e3}, we have
\begin{equation}
\label{c1}
\min_{\boldsymbol{\hat{\pi}}\sim\boldsymbol{\hat{\Pi}}}\max_{\boldsymbol{\pi}\sim\boldsymbol{\Pi}}\mathbb{E}_{\substack{ \boldsymbol{\hat{\pi}}}}\left [\log \boldsymbol{\pi} \right ] = \max_{\Omega} \mathcal{I}(m;a\mid o) + \mathcal{I}(g;a\mid o), \text{~s.t.~} \boldsymbol{\pi}\in\boldsymbol{\hat{\Pi}}.
\end{equation}

Then by \eqref{simp} and \eqref{c1}, we have that the solution of the following objective is equivalent to the solution of \eqref{maxmin}, i.e.,
$$
    \psi^*, \phi^* = \argmax_{\psi,\phi} \mathcal{I}(g;a\mid o) + \mathcal{I}(m;g\mid o)
    \text{~~s.t.~~}  \boldsymbol{\pi}_{\theta^*} \in \boldsymbol{\hat{\Pi}}.
$$
Thus, for every evaluation Markov Game $m'\in \mdp'$, there exists a joint strategy $\boldsymbol{\pi}\in \boldsymbol{\Pi}_{\Theta}$ that reaches Nash Equilibrium (\ie, $\boldsymbol{\Pi}_{\Theta^*}=\boldsymbol{\Pi}^*$ satisfies \eqref{primal}).
\end{proof}

\section{Fast Adaptation with First-Order Gradient}
\label{fast_adap}
Reptile \citep{nichol2018first} is a meta-learning algorithm that uses first-order gradient information for fast adaptation.

For parameter $\theta$ that maximizes objective $\mathcal{L}^k_{m}(\theta)$ in the $k$-th mini-batch of game $m$, $\theta$ is updated by $\theta\shortleftarrow\theta +\alpha \Delta\theta$, where $\Delta\theta = U_m^K(\theta) - \theta$ and $U_m^K(\theta)$ denotes the updated $\theta$ after $K$ gradient steps with learning rate $\beta$, and $\alpha$ is a hyperparameter. 

Denote the $k$-th step parameter as $\theta_k$, then the update $\Delta \theta$ of $K$ gradient steps is as follows:
\begin{align*}
    \Delta \theta \!&= \theta_K - \theta_1\\
    &= \beta \sum_{k=1}^{K-1}\nabla{\mathcal{L}}_m^k(\theta_k)\\
    &= \beta \sum_{k=1}^{K-1}\left(\nabla{\mathcal{L}}_m^k(\theta_1) + \nabla^{2}{\mathcal{L}_m^k(\theta_1)}\left(\theta_k - \theta_1\right) + \mathcal{O}\left(\left\lVert{\theta_k - \theta_1}\right\rVert^2\right)\right)\\
    &= \beta \sum_{k=1}^{K-1}\left(\nabla{\mathcal{L}}_m^k(\theta_1) + \beta \nabla^{2}{\mathcal{L}_m^k(\theta_1)}\sum_{j=1}^{k-1} \nabla{\mathcal{L}}_m^k(\theta_j) + \mathcal{O}\left(\beta^2\right)\right)\\
    &=\beta\left[\sum\limits_{k=1}^{K-1}\left(\nabla{\mathcal{L}}_m^k(\theta_1) \!+\! \beta\sum\limits_{j=1}^{k-1}\Big(\nabla^{2}{\mathcal{L}_m^k(\theta_1)} \nabla{\mathcal{L}_m^j(\theta_1)}\Big)\right)\!+\! \mathcal{O}\left(\beta^2\right)\right],
\end{align*}
where the last equation holds since $\nabla{\mathcal{L}}_m^k(\theta_j) = \nabla{\mathcal{L}}_m^k(\theta_1) + \mathcal{O}\left(\beta\right)$.

For the initial parameter $\theta = \theta_1$, the term $\sum\limits_{k=1}^{K-1}\nabla{\mathcal{L}}_m^k(\theta_1)$ maximizes the overall performance at $\theta$ in all the $K$ mini-batches in an MG $m$. The key difference from the joint training
objective is the term $\nabla^{2}{\mathcal{L}_m^k(\theta_1)} \nabla{\mathcal{L}_m^j(\theta_1)}$. When the expectation are taken under mini-batch sampling in $m$, we denote by $\mathbb{E}_{k}$ the expectation over the mini-batch defined by $J^k$. Omitting the higher-order term $\mathcal{O}\left(\beta^2\right)$, we have
\begin{align*}
    \mathbb{E}\left[\Delta\theta\right] &= (K-1) \mathbb{E}_k \left[\nabla\mathcal{L}_m^k(\theta)\right] + (K-1)(K-2)\beta\cdot
    \mathbb{E}_{j,k}\left[\nabla^{2}{\mathcal{L}_m^k(\theta)} \nabla{\mathcal{L}_m^j(\theta)}\right] \\
    &= (K-1) \mathbb{E}_k \left[\nabla\mathcal{L}_m^k(\theta)\right]\\
    &+ \frac{(K-1)(K-2)\beta}{2} \mathbb{E}_{j,k}\left[\nabla^{2}{\mathcal{L}_m^k(\theta)} \nabla{\mathcal{L}_m^j(\theta)} + \nabla^{2}{\mathcal{L}_m^j(\theta)} \nabla{\mathcal{L}_m^k(\theta)}\right]\\
    &= (K-1) \mathbb{E}_k \left[\nabla\mathcal{L}_m^k(\theta)\right] + \frac{(K-1)(K-2)\beta}{2} \mathbb{E}_{j,k}\left[\nabla{\Big(\nabla{\mathcal{L}_m^k(\theta)} \nabla{\mathcal{L}_m^j(\theta)}}\Big)\right].
\end{align*}

Thus, updating $\theta$ by $\theta\shortleftarrow\theta +\alpha \Delta\theta$ not only maximizes the average performance in $K$ mini-batches of all Markov Games, but also maximizes the inner product between gradients of different mini-batches, \ie, $\nabla{\mathcal{L}_m^k(\theta)} \nabla{\mathcal{L}_m^j(\theta)}$. Thus, the generalization ability is improved and fast adaptation is achieved.

\section{Derivation of Mutual Information Calculation}
\label{mi_der}
The two mutual information terms in \eqref{objective}, \ie, $\mathcal{I}(g;a\mid o)$ and $\mathcal{I}(m;g\mid o)$ can be calculated as follows:
\begin{align*}
    \mathcal{I}(g;a\mid o) &= \int p(a,o,g)\log\frac{p(a\mid o,g)}{p(a\mid o)}da\,do\,dg\\
    &= \mathbb{E}_{a,o,g}[\log\frac{\pi_{\bar{\theta}}(a\mid o,g)}{p(a\mid o)}]+ \mathbb{E}_{a,o,g}[\mathcal{D}_{KL}(p(a\mid o,g)\mid \mid \pi_{\bar{\theta}}(a\mid o,g))]\\
    &\geq \mathbb{E}_{a,o,g}[\log\frac{\pi_{\bar{\theta}}(a\mid o,g)}{p(a\mid o)}]\\
    &\approx \mathbb{E}_{o\sim D, z\sim p(\cdot\mid m), a \!\sim\! \pi_{{\theta}}(\cdot\mid o,\phi(o,z))}\left[\log\frac{\pi_{\bar{\theta}}(a\mid o,\phi(o,z))}{\mathbb{E}_{z'\sim p(\cdot\mid m), g'=\phi(o,z')}\left[\pi_{\bar{\theta}}(a\mid o,g')\right]}\right].
\end{align*}

Similarly, we have for the second mutual information term that
\begin{align*}
\mathcal{I}(m;g\mid o) &= \int p(m,o,g)\log\frac{p(m\mid o,g)}{p(m\mid o)}dm\,do\,dg\\
&= \mathbb{E}_{\substack{m,o,g}}[\log\frac{p(m\mid o,g)}{p(m)}]\\
&= \mathbb{E}_{m,o\sim D}\left[\log p(m\mid o,g)\right] + \log \lvert\mdp\rvert.
\end{align*}

When maximizing the mutual information objectives described in Section \ref{o_s}, the Gumbel-softmax trick \citep{jang2016categorical} can be used for discrete $z$.

\section{Complete Pseudocode}
\label{pcode}
We begin by describing the \textit{overview} of the training and adaptation procedures of MRA in Algorithm \ref{alg:training} and \ref{alg:evaluation}, respectively. Notably, the main difference between Algorithm \ref{alg_high} in the main text and Algorithm \ref{alg:training} here lies in that the latter is instantiated from the former, using the policy gradient and mutual information objectives depicted in Section \ref{sec_opt}.

\begin{figure}[h]
\begin{minipage}{0.49\textwidth}
\begin{algorithm}[H]
\centering
\caption{MRA: Training in the MG set $\mdp$ (overview)}
\begin{algorithmic}
\WHILE {not converged}
    \FOR {$\text{MG }   m\in\mdp$}
        \STATE {Sample lower-level latent $z\sim p_\psi(\cdot|m)$}
            \STATE {Execute action $a\!\sim\!\pi_{\theta}(\cdot|o,g)$, where $g=\phi(o,z)$}
            \STATE {Push $(\boldsymbol{o},\boldsymbol{a},\boldsymbol{o'},\boldsymbol{g},\boldsymbol{r})$ to replay buffer}
            \FOR {$k = 1, \ldots, K$}
            \STATE {Update critic $\zeta$ by minimizing  \eqref{loss_q}} \STATE {update policy at the $k$-th step $\theta_k$ by  \eqref{grad_pi}}
            \ENDFOR
            \STATE {Update $\theta$ by $\theta\shortleftarrow\theta + \alpha(\theta_K - \theta)$}\\
            \STATE {Update $\phi$ to maximize the RHS of  \eqref{mi1};}\\
            \STATE {Update $\psi$ and $\xi$ by \eqref{mi2}}
            \STATE {Update delayed parameters $\bar{\theta}$ and $\bar{\zeta}$}
    \ENDFOR
\ENDWHILE
\end{algorithmic}
\label{alg:training}
\end{algorithm}
\end{minipage}
\hfill
\begin{minipage}{0.49\textwidth}
\begin{algorithm}[H]
\centering
\caption{MRA: Adaptation in an evaluation Markov Game $m'\in \mdp'$ (overview)}
\begin{algorithmic}
\WHILE {not converged}
            \STATE {Sample lower-level latent $z\sim p_\psi(\cdot|m')$}
            \STATE {Execute action $a\!\sim\!\pi_{\theta}(\cdot|o,g)$, where $g=\phi(o,z)$}
            \STATE {Push $(\boldsymbol{o},\boldsymbol{a},\boldsymbol{o'},\boldsymbol{r})$ to replay buffer}
                            \STATE {Update critic $\zeta$ by minimizing  \eqref{loss_q}} \STATE {Update $\theta$ and $\phi$ by  \eqref{grad_pi_eval}}
            \STATE {Update delayed parameters $\bar{\theta}$ and $\bar{\zeta}$}
\ENDWHILE
\end{algorithmic}
\label{alg:evaluation}
\end{algorithm}
\end{minipage}
\end{figure}

The complete pseudocode of the training and adaptation procedures of MRA  that contains the training details and agent indexes is provided in Algorithm \ref{alg:training_app} and Algorithm \ref{alg:adaptation}, respectively.

\begin{algorithm}
\caption{MRA: Training Procedure of MRA (complete)}
\label{alg:training_app}
\begin{algorithmic}
\STATE {\textbf{Input:} Training set $\mdp$ that contains Markov Games constructed by varying the population (i.e., the number of agents) from the same underlying environment.}
\STATE {Initialize $P$ threads of games}
\STATE {Initialize $T_{\text{update}} \shortleftarrow 0$}
\STATE {Initialize replay buffer $\mathcal{D}_m$ for each Markov Game $m$}
\WHILE {total episode number not reach}
    \FOR {$\text{Markov Game }   m = 1, \ldots, \left|\mdp\right|$}
        \STATE {Reset game, each agent $i$ samples lower-level latent code $z^i\sim p_\psi(z|m)$}
        \FOR {time steps in an episode}
            \STATE {Each agent $i$ executes action $a^i\sim\pi\!\left(\cdot|o^i,\phi_i(o^i,z^i);\theta^i\right)$ simultaneously and get reward $r^i$, next observation $o'^i$}
            \STATE {Push $\left(\boldsymbol{o},\boldsymbol{a},\boldsymbol{o'},\boldsymbol{g},\boldsymbol{r}\right)$ to replay buffer $\mathcal{D}_m$}
            \STATE {$\boldsymbol{o} \shortleftarrow \boldsymbol{o'}$}
            \STATE {$T_{\text{update}} \shortleftarrow T_{\text{update}} + P$}
            \IF {$T_{\text{update}} \%  (\text{min steps per update}) \leq P$}
            \STATE {A mini-batch of $B$ samples of $\left(\boldsymbol{o}_b,\boldsymbol{a}_b,\boldsymbol{o'}_b,\boldsymbol{g}_b,\boldsymbol{r}_b\right)$ is sampled from $\mathcal{D}_m$}
            \FOR {$k = 1 $ to $ K$}
            \STATE {Update all agents' critic parameter $\zeta^i$ by minimizing $\mathcal{L}\left(\zeta^i\right) = \frac{1}{B}\sum_{b}\left(\mathcal{B}^i_{\boldsymbol{\pi}} Q - Q\left(\boldsymbol{o}_b,\boldsymbol{a}_b,g^i_b;\zeta^i\right)\right)^2$,}
            \STATE {\hspace*{3em}$\text{where } \mathcal{B}^i_{\boldsymbol{\pi}} Q = r^i_b + \gamma  \mathbb{E}_{\boldsymbol{a'} \sim \bar{\boldsymbol{\pi}}}\left[ Q\left(\boldsymbol{o'_b},\boldsymbol{a'},g^i_b;\bar{\zeta}^i\right) \right]$}
            \STATE {The $k$-th step of policy parameter $\theta_k^i$ is updated by gradient ascent:} 
            \STATE {\hspace*{3em}$\nabla_{\theta^i}J = \frac{1}{B}\sum_{b}{\nabla_{\theta^i}\log\pi\!\left(a^i|o^i_b,g^i_b;\theta^i_k\right) Q\left(\boldsymbol{o}_b,\boldsymbol{a}_b,g^i_b;\zeta^i\right)}$}
            \ENDFOR
            \STATE{Update all agents' $\theta^i$ by $\theta^i\shortleftarrow\theta^i + \alpha(\theta^i_K - \theta^i)$}
            \STATE {Sample $n$ latent code $z'^i$ and approximate $p(a|o^i_b)$ by $\frac{1}{n}\sum\pi\!\left(a^i|o^i_b,\phi^i\left(o^i_b,z'^i\right);\bar{\theta}^i\right)$}
            \STATE {Update all agents' $\phi_i$ by maximizing:}
            \STATE {\hspace*{3em}$\mathcal{L}\left(\phi^i\right) = \frac{1}{B}\sum_{b}\mathbb{E}_{\small\substack{a^i \sim \pi(\cdot|o^i_b,\phi^i(o^i_b,z^i);{\theta}^i)}\normalsize}\left[\log\bigl(\pi(a^i|o^i_b,\phi^i(o^i_b,z^i);\bar{\theta}^i) / p(a|o)\bigr)\right]$}
            \STATE {Update $\psi$ and  auxiliary network $\xi$ simultaneously by minimizing:}
            \STATE {\hspace*{3em}$\mathcal{L}\left(\psi,\xi\right) = \mathbb{E}_{\substack{z'^i\!\sim\!p_{\psi}(\cdot|m)}}\left[-y \log\left(p\left(\hat{y}|o^i_b,\phi(o^i_b,z'^i);\xi\right)\right)\right]$}
            \STATE {Update all agents' delayed parameters $\bar{\theta}^i$ and $\bar{\zeta}^i$}
            \ENDIF
            \ENDFOR
    \ENDFOR
\ENDWHILE
\STATE {\textbf{Output:} Parameter $\psi$ and $\theta^i$, $\phi^i$, $\zeta^i$ for each agent $i$.}
\end{algorithmic}
\end{algorithm}

\newpage
\begin{algorithm}
\caption{MRA: Adaptation Procedure of MRA (complete)}
\label{alg:adaptation}
\begin{algorithmic}
\STATE {\textbf{Input:} Trained parameters from Algorithm \ref{alg:training_app}, an evaluation Markov Game $m'\in\mdp'$.}
\STATE {Initialize replay buffer $\mathcal{D}$;}
\STATE {Each agent $i$ samples lower-level latent code $z^i\sim p_\psi(z|m')$;}
\WHILE {total episode number not reach}
        \STATE {Reset game and receive initial observation $\boldsymbol{o}$;}
        \FOR {time steps in an episode}
            \STATE {Each agent $i$ executes action $a^i\sim\pi\!\left(\cdot|o^i,\phi_i(o^i,z^i);\theta^i\right)$ simultaneously and get reward $r^i$, next observation $o'^i$\hspace{-0.1cm}}
            \STATE {Push $\left(\boldsymbol{o},\boldsymbol{a},\boldsymbol{o'},\boldsymbol{r}\right)$ to replay buffer $\mathcal{D}_{m'}$}
            \STATE {$\boldsymbol{o} \shortleftarrow \boldsymbol{o'}$}
            \STATE {A mini-batch of $B$ samples of $\left(\boldsymbol{o}_b,\boldsymbol{a}_b,\boldsymbol{o'}_b,\boldsymbol{r}_b\right)$ is sampled from $\mathcal{D}_m$}
            \STATE {Calculate the detached relational graph $g^i = \phi_i(o^i_b,z^i_b)$}
            \STATE {Update all agents' critic parameter $\zeta^i$ by minimizing:}
            \STATE {\hspace*{3em}$\mathcal{L}\left(\zeta^i\right) = \frac{1}{B}\sum_{b}\left(\mathcal{B}^i_{\boldsymbol{\pi}} Q - Q\left(\boldsymbol{o}_b,\boldsymbol{a}_b,g^i;\zeta^i\right)\right)^2, \text{where } \mathcal{B}^i_{\boldsymbol{\pi}} Q = r^i_b + \gamma  \mathbb{E}_{\boldsymbol{a'} \sim \bar{\boldsymbol{\pi}}}\left[ Q\left(\boldsymbol{o'_b},\boldsymbol{a'},g^i;\bar{\zeta}^i\right) \right]$}
            \STATE {Update all agents' parameter $\omega^i =(\theta^i,\phi^i)$ by gradient ascent:}
            \STATE {\hspace*{3em}$\nabla_{\omega^i}J = \frac{1}{B}\sum_{b}{\nabla_{\omega^i} \log\pi\!\left(a^i|o^i_b,\phi^i(o^i_b,z^i_b);\theta^i_k\right) Q\left(\boldsymbol{o}_b,\boldsymbol{a}_b,g^i;\zeta^i\right)}$}
            \STATE {Update all agents' delayed parameters $\bar{\theta}^i$ and $\bar{\zeta}^i$}
            \ENDFOR
\ENDWHILE
\STATE {\textbf{Output:} Parameter $\psi$ and $\theta^i$, $\phi^i$, $\zeta^i$ for each agent $i$.}
\end{algorithmic}
\end{algorithm}

\section{Details of Experiments}
\label{add_exp}

\subsection{Experiment Settings and Task Descriptions}
\label{sec_app_env_set}
\textbf{Treasure Collection:} Each agent is with the goal to collect more treasures in an episode. Treasures disappear and re-generate at a random location when touched by agents. Agents must act according to the distances to other agents and treasures to gain high rewards, and paying attention to the right agents is key to success in this task.

\textbf{Resource occupation:} Agents receive rewards for occupying varisized resource landmarks: higher reward if one agent is occupying a larger resource with fewer other agents in it. Intuitively, training in various MGs could improve the performance in each single game benefiting from knowledge transfer. Specifically, agents can learn to move to large resources in small-population games, and learn to keep away from other agents in the competitive games with large populations. We will verify the benefit of such knowledge transfer with meta representations in Section \ref{benefits_single}.

\textbf{Pacman-like World:} There are competing agents in the Pacman-like World, Pac-Man agents and ghost agents. Pac-Man agents are with goals to collect food and elude ghost agents, and ghost agents are with goals to touch the Pac-Man agents. The environment is similar to the predatory-prey environment, but with additional food dots, which means that the Pacman game is not a zero-sum game.

The screenshots of the three environments in the experiments are shown in Figure \ref{demo}. The treasures in the treasure collection environment and the food dots in the Pacman-like world are randomly initialized in the position range of $[-1,+1]$, and regenerated when touched by collector agents and PacMan agents, respectively. For an $n$-resource occupation task, the sizes of the resource landmarks are pre-defined as $\{0.1,0.2,\ldots, 0.1\times n\}$ and fixed in each episode. 

\begin{figure*}[htbp]
\centering
\subfigure[Treasure collection.]{
\begin{minipage}[t]{0.3\linewidth}
\centering
\includegraphics[width=1.8in]{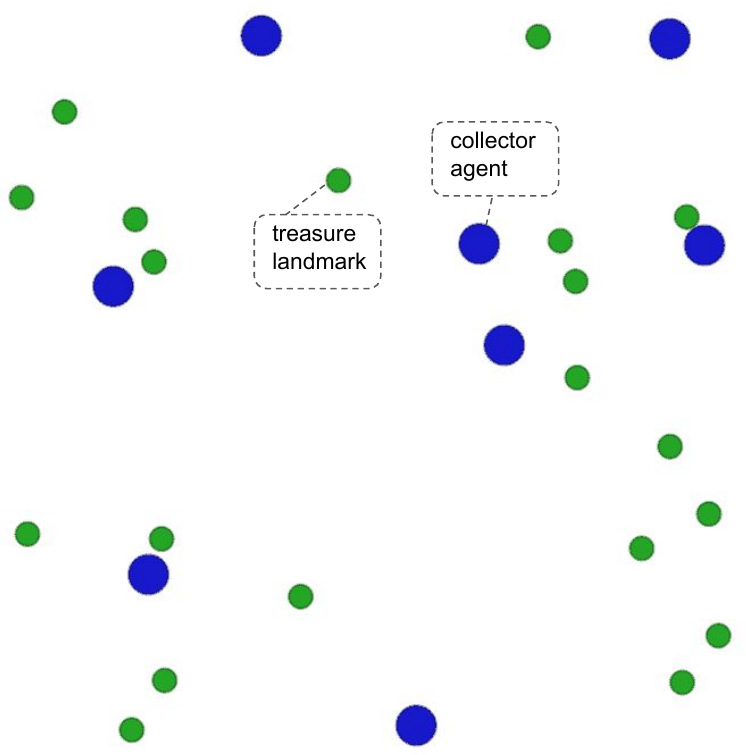}
\end{minipage}%
}%
\subfigure[Resource occupation.]{
\begin{minipage}[t]{0.3\linewidth}
\centering
\includegraphics[width=1.8in]{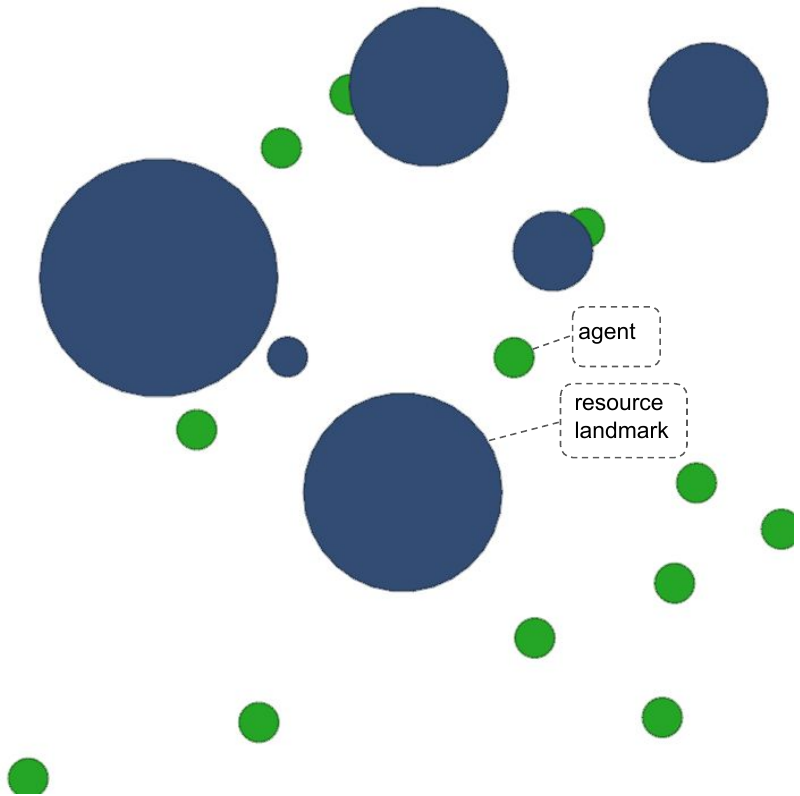}
\end{minipage}%
}%
\subfigure[Pacman-like world.]{
\begin{minipage}[t]{0.3\linewidth}
\centering
\includegraphics[width=1.8in]{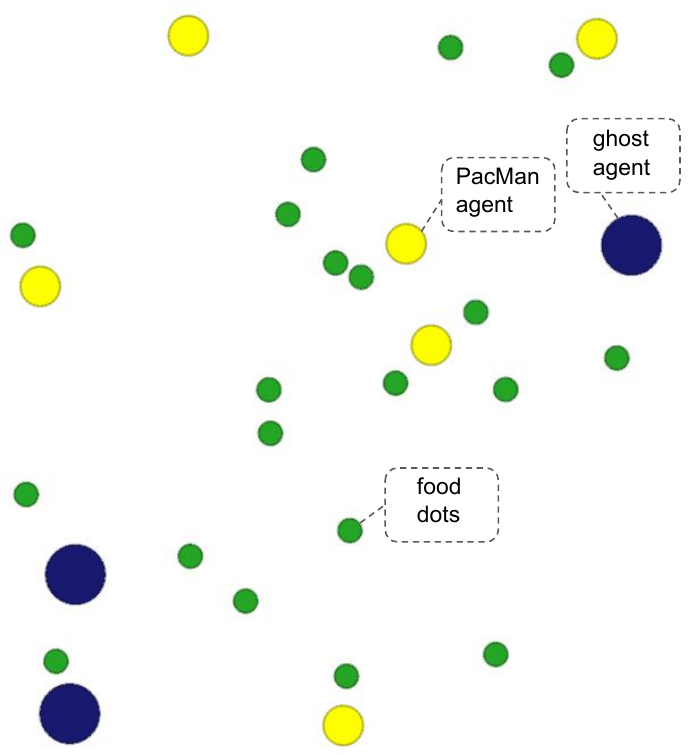}
\end{minipage}%
}%
\centering
\caption{The illustration of the three environments that are used in our experiments.}
\label{demo}
\end{figure*}

We adopt the same set of hyperparameters for experiments. Specifically, $12$ rollouts are executed in parallel when training. The maximum length of the replay buffer is $1e6$. The episode length is set to $20$ and the number of random seeds is set to $4$. The dimension of the latent code $z$ is 6. The critic also adopts a self-attention network in a similar way with MAAC \citep{iqbal2018actor}. And the number of gradient steps of policy and critic parameters in each update, \ie, $K$, is set as $10$. And $\alpha=1$ works well in experiments. Batch size is set to $1024$ and Adam is used as the optimizer. The initial learning rate is set to $0.0003$. In all experiments, we use one NVIDIA Tesla P40 GPU.



\subsection{Cross-Comparison Results}
\label{sec_app_cc}
We provide the cross-comparison results in the PacMan-like world. The comparisons are conducted between the MRA agents trained in multiple MGs and the agents trained in a single MG. The score is summed in each episode, averaged across homogeneous agents on $40$ runs, and normalized. 
\begin{figure*}[ht]
\begin{minipage}{0.50\textwidth}{
\captionof{table}{PacMan scores.}
\label{cross1}
\renewcommand\arraystretch{1.18}
\centering
\scalebox{0.98}{
\begin{tabular}{p{2cm}<{\centering} p{2.0cm}<{\centering} p{1.8cm}<{\centering}}
\hline
{\diagbox[innerwidth=2cm]{Ghosts}{PacMan}}  & Single  & MRA \\
\hline
Single &0.78 & \textbf{1.00} \\
MRA & 0.54  & 0.89  \\
\hline
\end{tabular}}}
\end{minipage}
\hspace{3mm}
\begin{minipage}{0.50\textwidth}{
\captionof{table}{Ghost scores.}
\label{cross2}
\renewcommand\arraystretch{1.18}
\centering
\scalebox{0.98}{
\begin{tabular}{p{2cm}<{\centering} p{2.0cm}<{\centering} p{1.8cm}<{\centering}}
\hline
{\diagbox[innerwidth=2cm]{Ghosts}{PacMan}}  & Single  & MRA \\
\hline
Single &0.82 & 0.59 \\
MRA & \textbf{1.00}  & 0.85  \\
\hline
\end{tabular}}}
\end{minipage}
\end{figure*}

The cross-comparison results are shown in Table \ref{cross1} and Table \ref{cross2}. We can see that the agents created by the proposed MRA outperform the single-MG counterparts for both PacMan agents and ghosts agents, validating the effectiveness of the proposed method.

\subsection{Ablation on the Implementation Variants}
We now depict two variants of implementing $\phi(o,z)$ and compare them with the default implementation, which we denote as the option \citep{sutton1999between} architecture.

Specifically, the variants we consider are the ones discussed in \citep{florensa2017stochastic}. The first variant is to concatenate $z$ to each entity of the observation decomposition $o^i = \left[o_s^i, o_1^i, \cdots, o_j^i, \cdots, o_N^i\right]$. The same relational representation is also adopted to generate the relational graph $g$. The second variant is to perform the outer product between each observation entity and $z$. We refer to the two variants as ``concat" and ``bilinear", respectively. 

The performance of the three implementations is evaluated in the 6-resource occupation environment. The number of training MGs is set to $3$, and the numbers of agents in the $3$ games are $\{6,9,12\}$. The results in Figure \ref{variants} show that all the three implementations can obtain agents that effectively act in all the $3$ scenarios. And the default option architecture achieves better performance than the other two variants. The possible reason is that the lower-level latent code $z$ in the option architecture can explicitly control the structural factors and can thus learn the common knowledge more quickly and better.

\begin{figure*}[htbp]
\centering
\subfigure{
\begin{minipage}[t]{0.33\linewidth}
\centering
\includegraphics[width=2.2in]{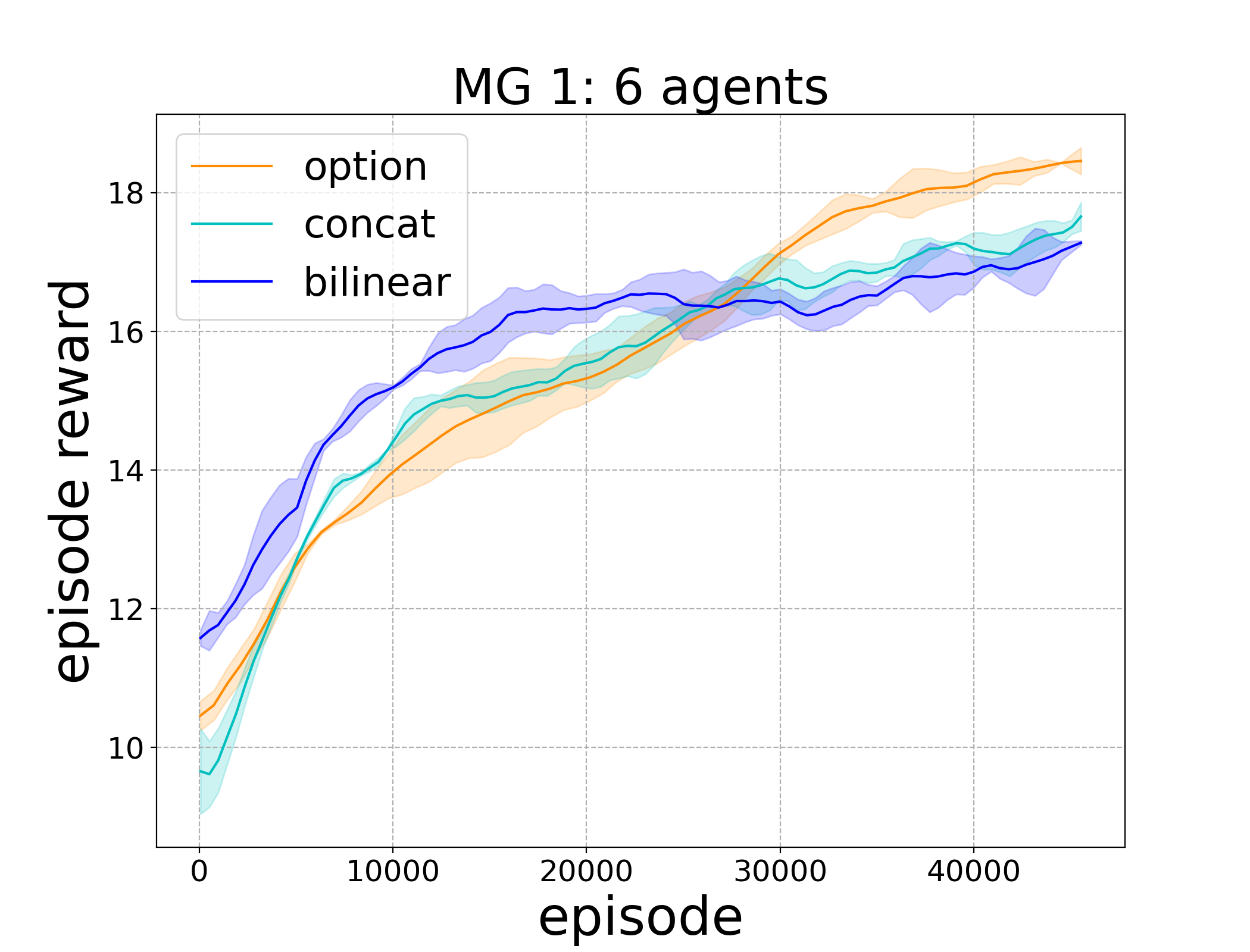}
\end{minipage}%
}%
\subfigure{
\begin{minipage}[t]{0.33\linewidth}
\centering
\includegraphics[width=2.2in]{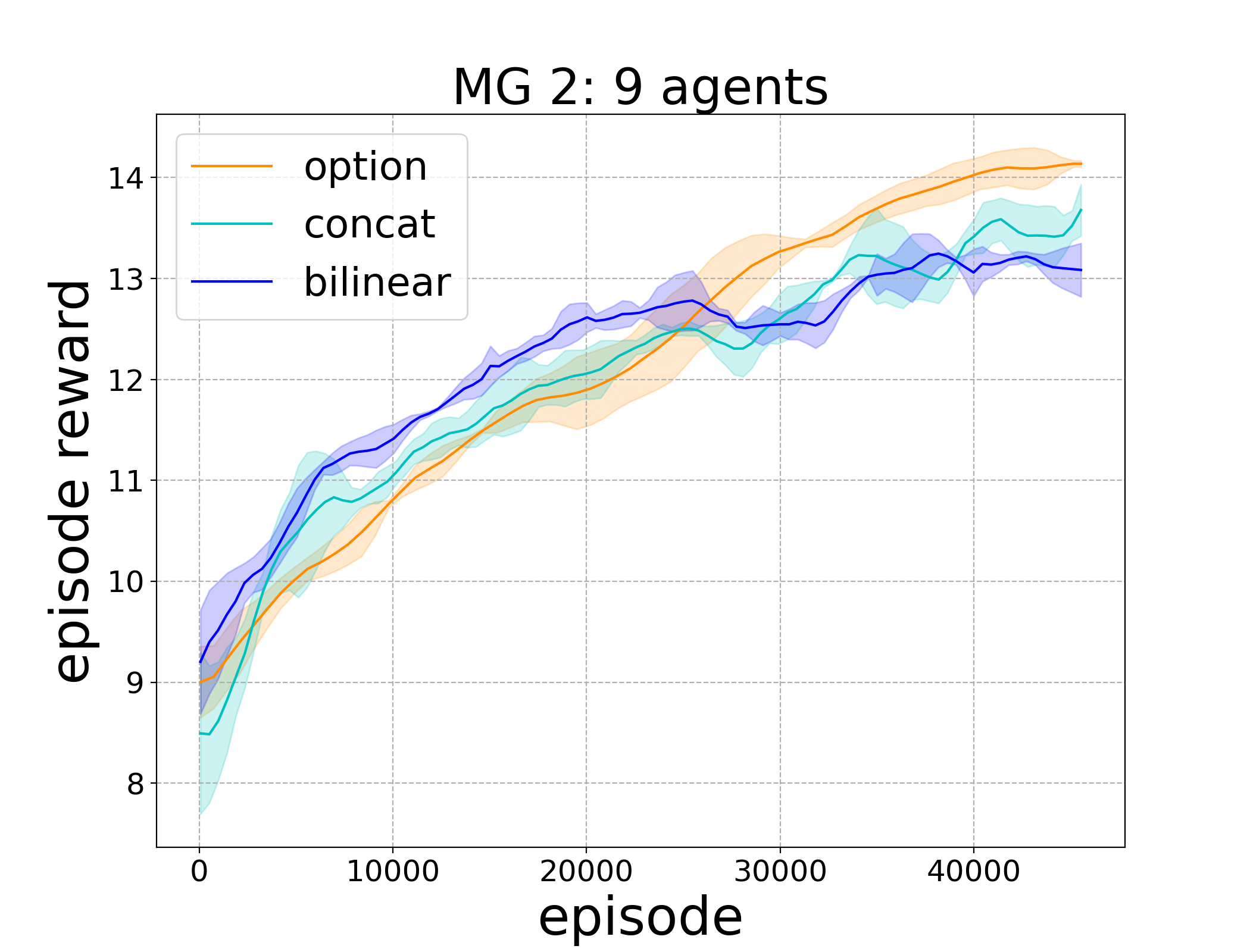}
\end{minipage}%
}%
\subfigure{
\begin{minipage}[t]{0.33\linewidth}
\centering
\includegraphics[width=2.2in]{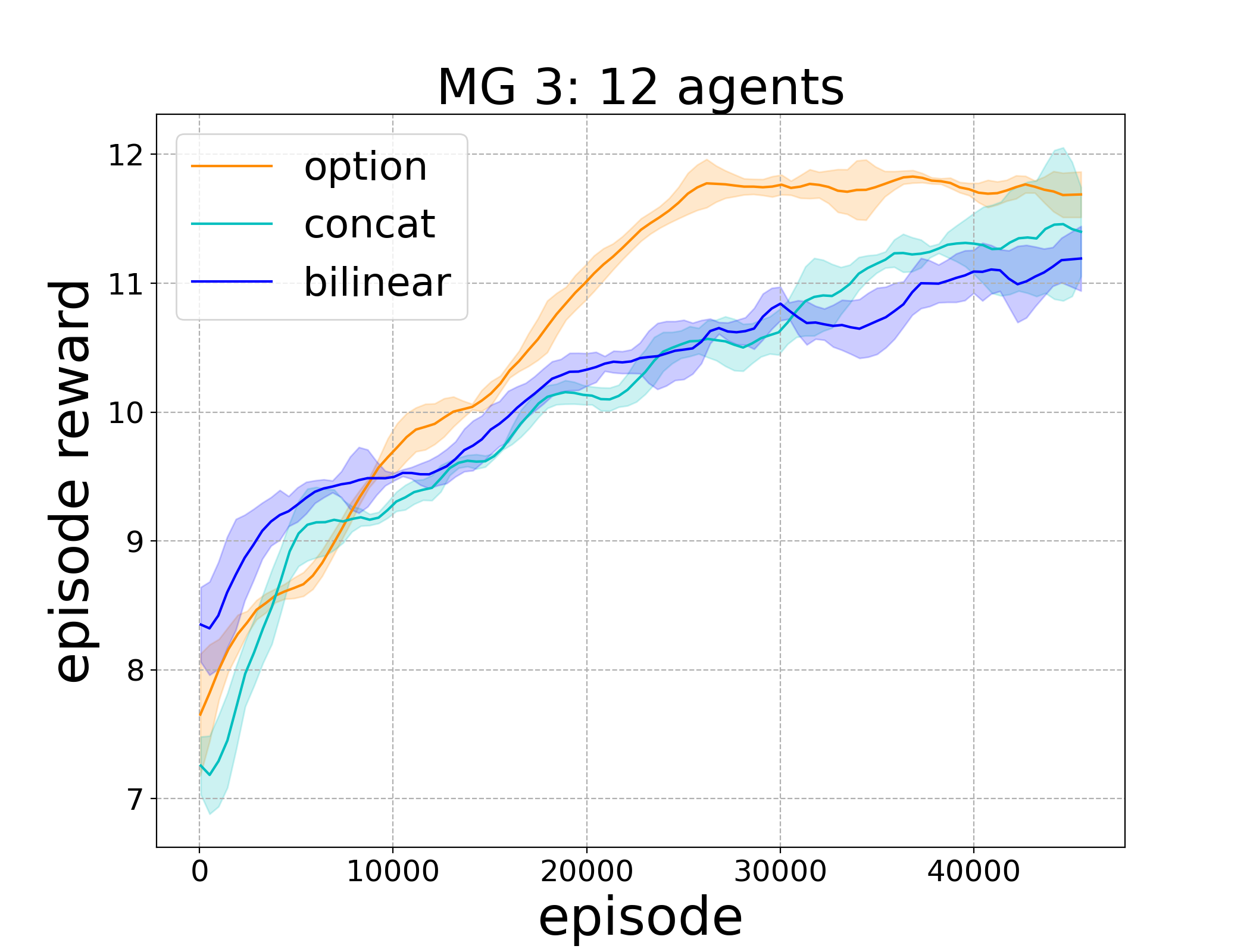}
\end{minipage}%
}%
\centering
\caption{Performances of different implementation variants in the resource occupation environment.}
\label{variants}
\end{figure*}

\subsection{Ablation Study on the Size of Training MG Set}
The information in all the training MGs determines the common knowledge that agents can learn. We provide ablation study on the number of training MGs. In the resource occupation environment, we train the agents in three settings, each of which is with different size of training MG set:  $2,3$ and $4$. Specifically, the population size of the three settings are: $\{6,12\}$, $\{6,9,12\}$ and $\{6,9,12,15\}$. The curves in MGs with $\{6,9,12\}$ populations are shown in Figure \ref{sc_num}.

\begin{figure*}[htbp]
\centering
\subfigure{
\begin{minipage}[t]{0.33\linewidth}
\centering
\includegraphics[width=2.2in]{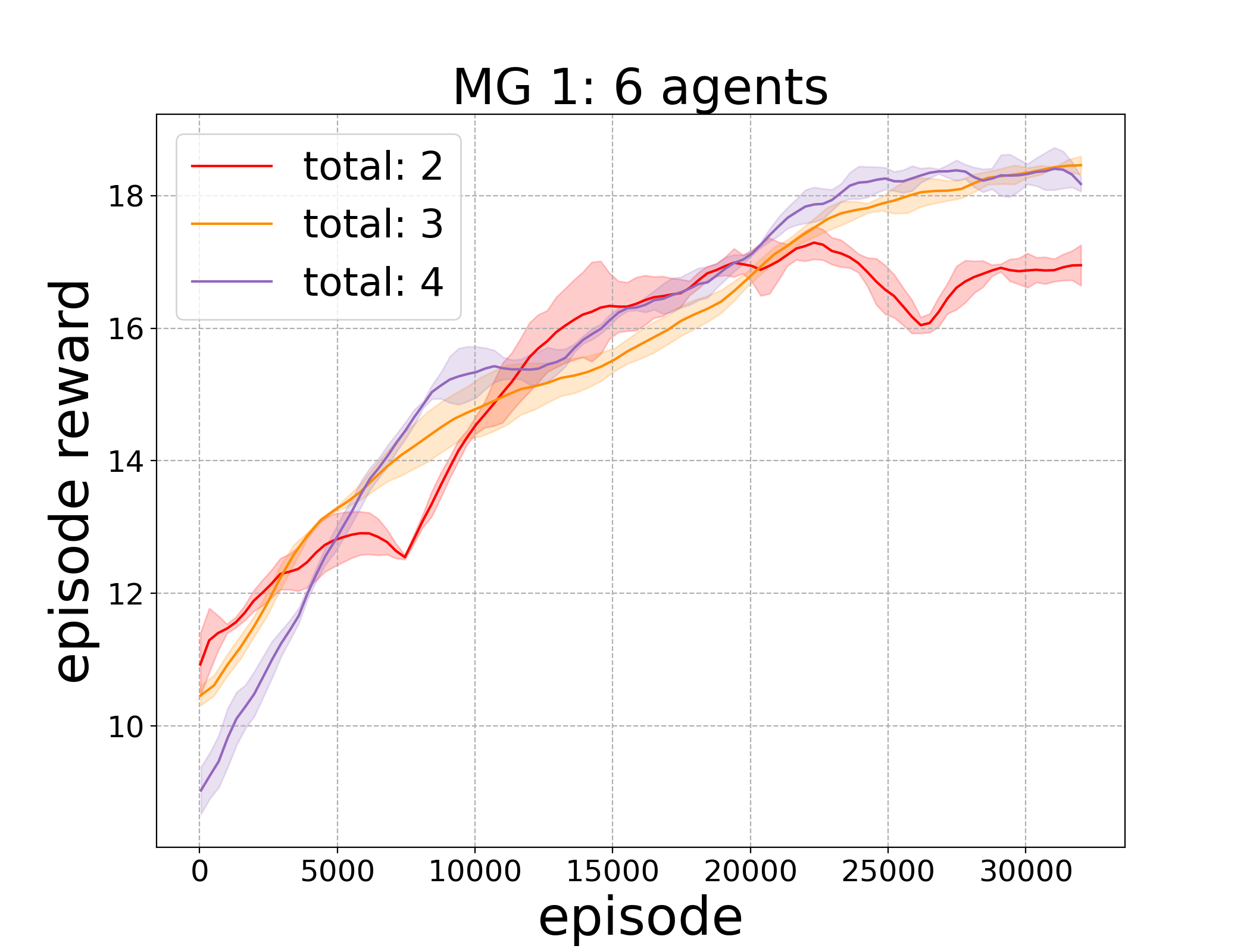}
\end{minipage}%
}%
\subfigure{
\begin{minipage}[t]{0.33\linewidth}
\centering
\includegraphics[width=2.2in]{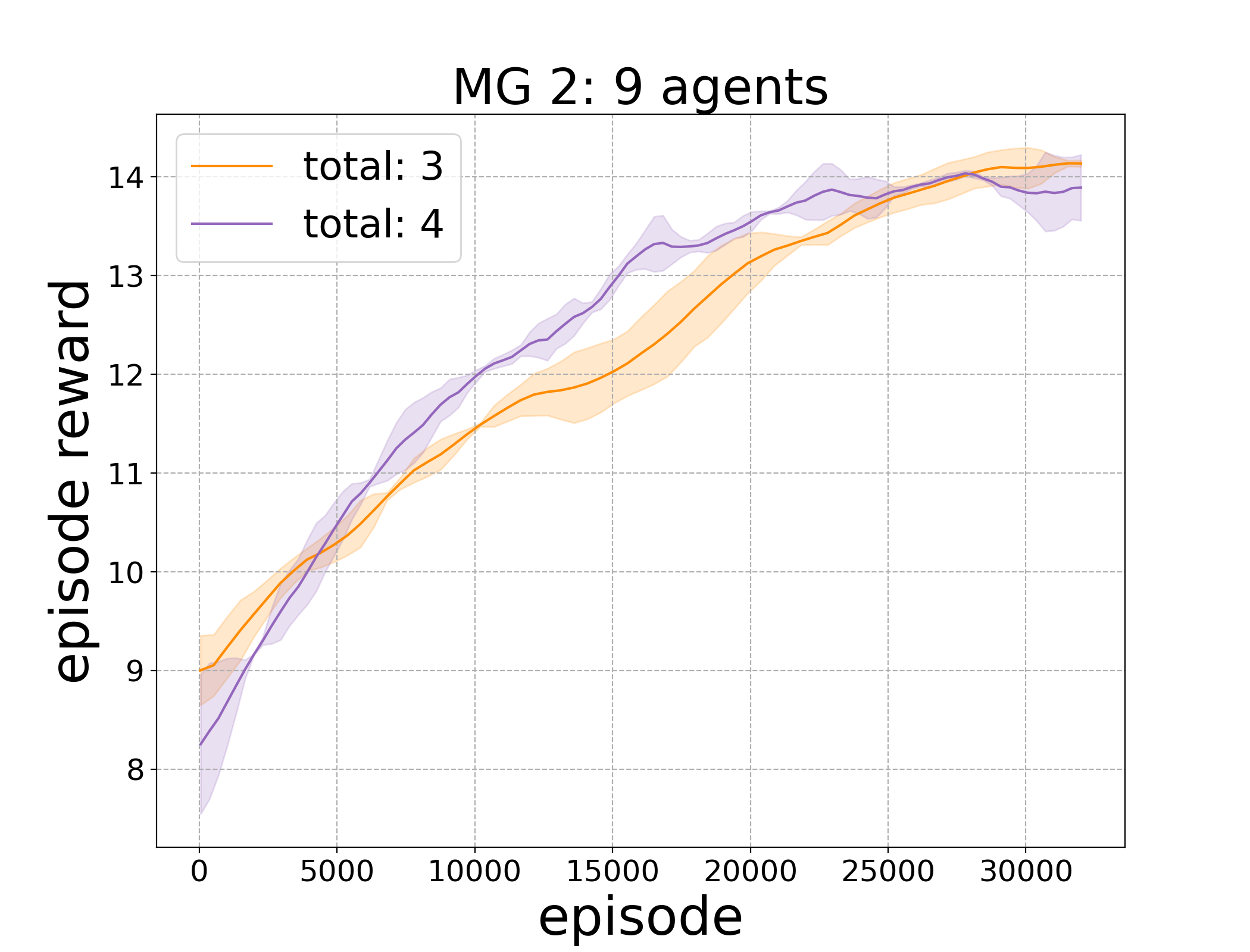}
\end{minipage}%
}%
\subfigure{
\begin{minipage}[t]{0.33\linewidth}
\centering
\includegraphics[width=2.2in]{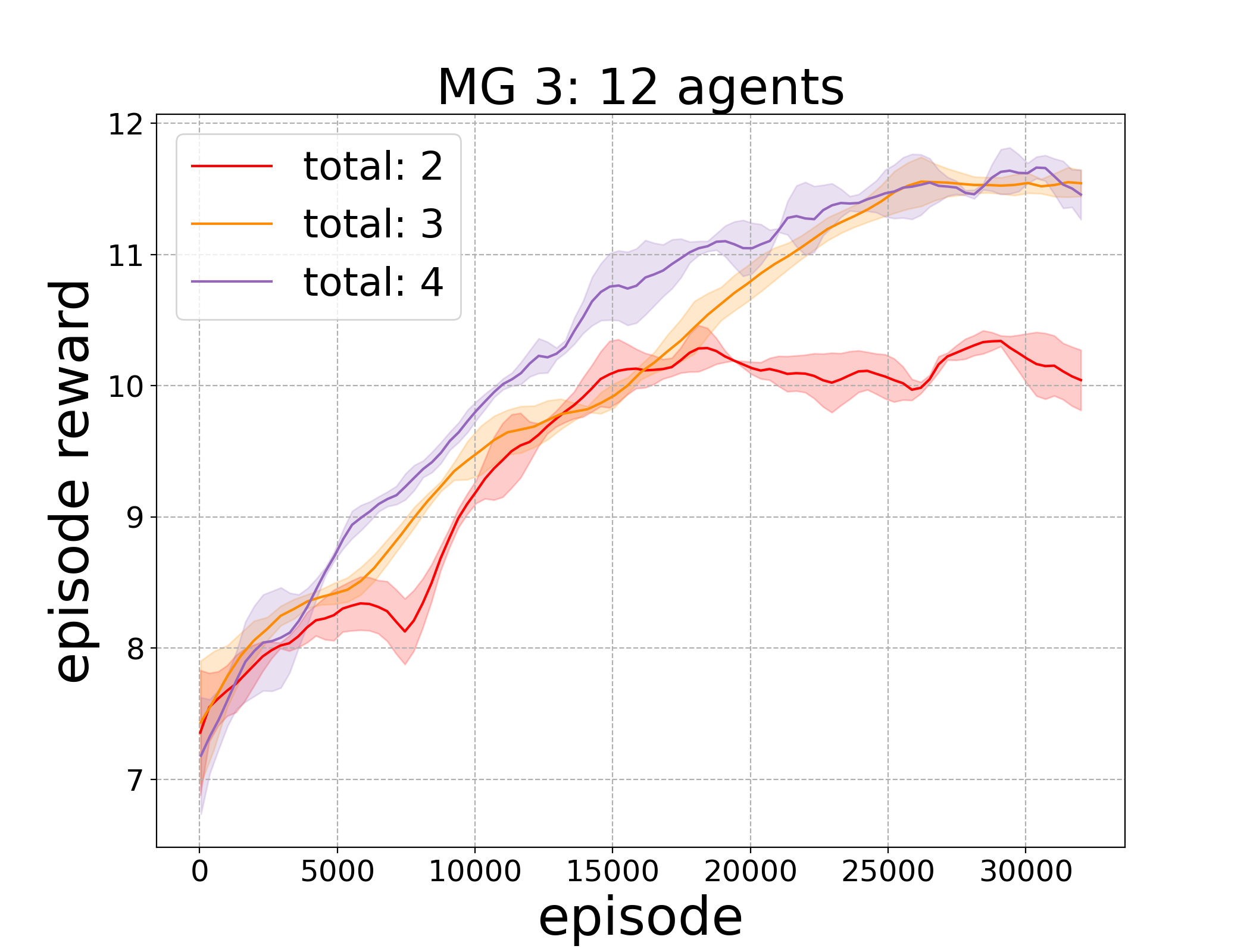}
\end{minipage}%
}%
\centering
\caption{Ablation study on the size of the training MG set in the resource occupation environment.}
\label{sc_num}
\end{figure*}

We observe that when the size of the training MG set is greater than $2$, the benefits of the meta-representation are obvious. The knowledge that agents learn from few training MGs, \eg, $1$ or $2$ training MGs, is limited, and the random exploration bottleneck still exists. However, the performance can be significantly improved by leveraging the information from more training MGs, \eg, $3$ or $4$ training MGs, where the common knowledge is more likely to be distilled and thus guiding the exploration.


\subsection{Training Curves}
We provide the training phase curves of the approximated mutual information $I(g;a\mid o)$ and the inference loss of MG index output by auxiliary network $\xi$. The curves are shown in Figure \ref{curves}.
\begin{figure*}[htbp]
\centering
\subfigure[Curve of mutual information $I(g;a\mid o)$ during training.]{
\begin{minipage}[t]{0.45\linewidth}
\centering
\includegraphics[width=2.3in]{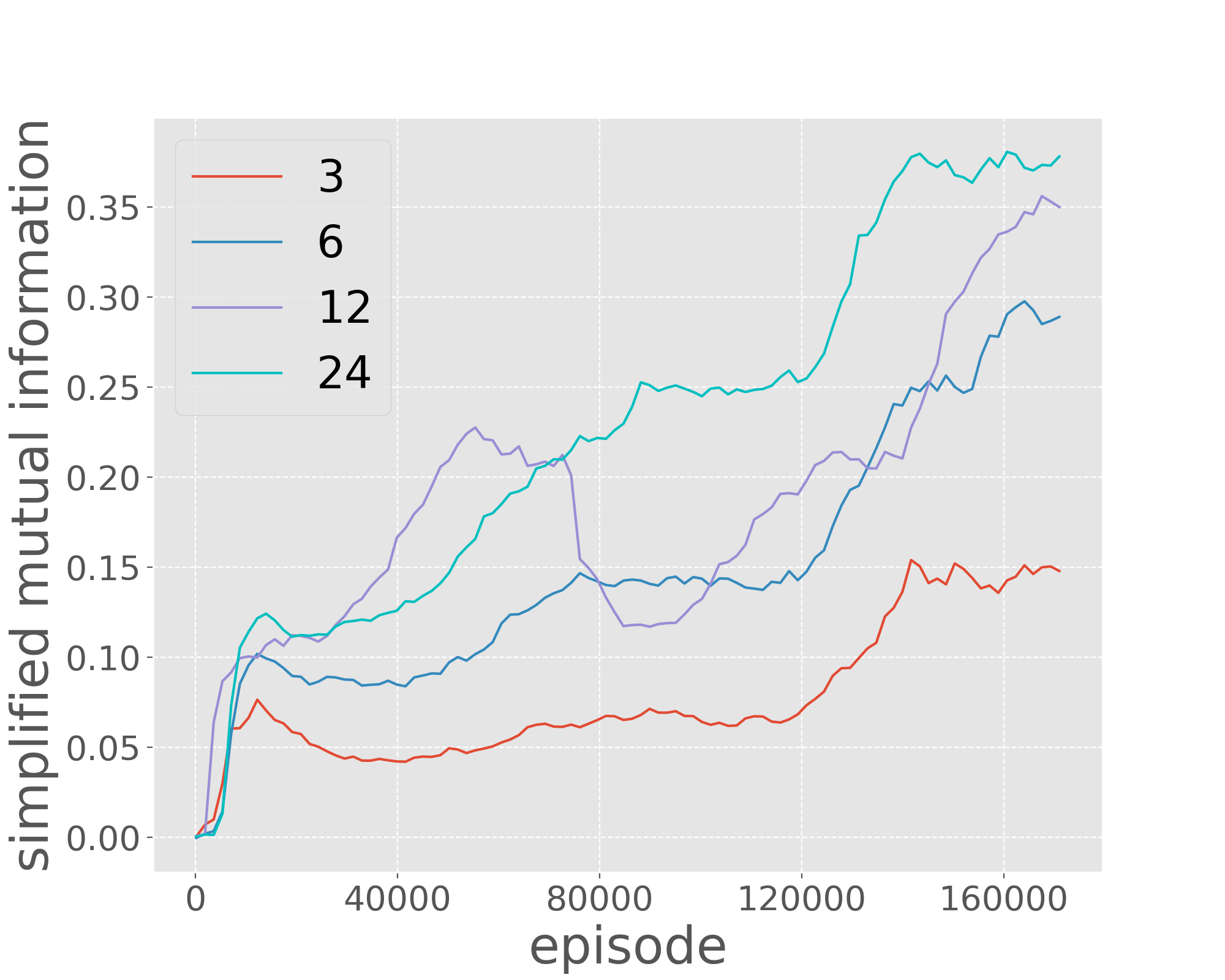}
\end{minipage}%
}%
\hspace{0.1in}
\subfigure[The loss curve of the MG index inference.]{
\begin{minipage}[t]{0.45\linewidth}
\centering
\includegraphics[width=2.3in]{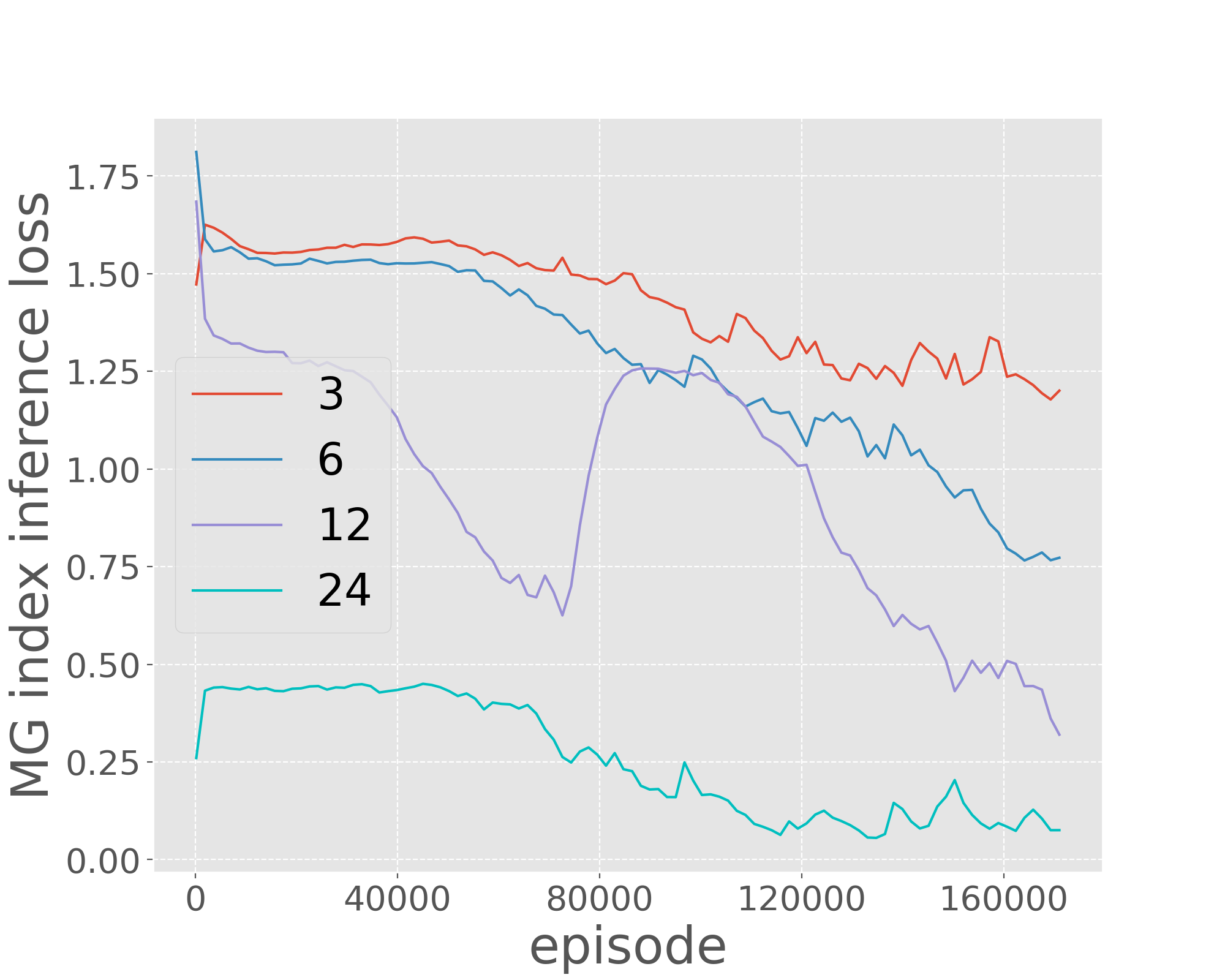}
\end{minipage}%
}%
\centering
\caption{Curves during training. The total number of training MGs is $4$, with $\{3,6,12,24\}$ agents in the resource occupation environment.}
\label{curves}
\end{figure*}
     
\subsection{Visualizations}
We visualize the trajectories of one agent in a resource occupation game in Figure \ref{traje}. Green dots and blue dots are agents and resources, respectively. When agents are only trained in this MG with different random seeds, different behaviors are obtained. This indicates that agents trained in single MGs are confined to environmental settings. Agents only learn the best responses and fit an NE. However, if the agents are only aware of some of the successful behaviors, the generalization will be constrained. On the contrary, the proposed MRA algorithm has a large capacity to represent multiple strategies by incorporating different relational graph with the distilled common knowledge, which leads to various modes of behaviors that are reasonable and diverse.

\begin{figure*}[htbp]
\centering
\subfigure{
\begin{minipage}[t]{0.45\linewidth}
\centering
\includegraphics[width=1.92in]{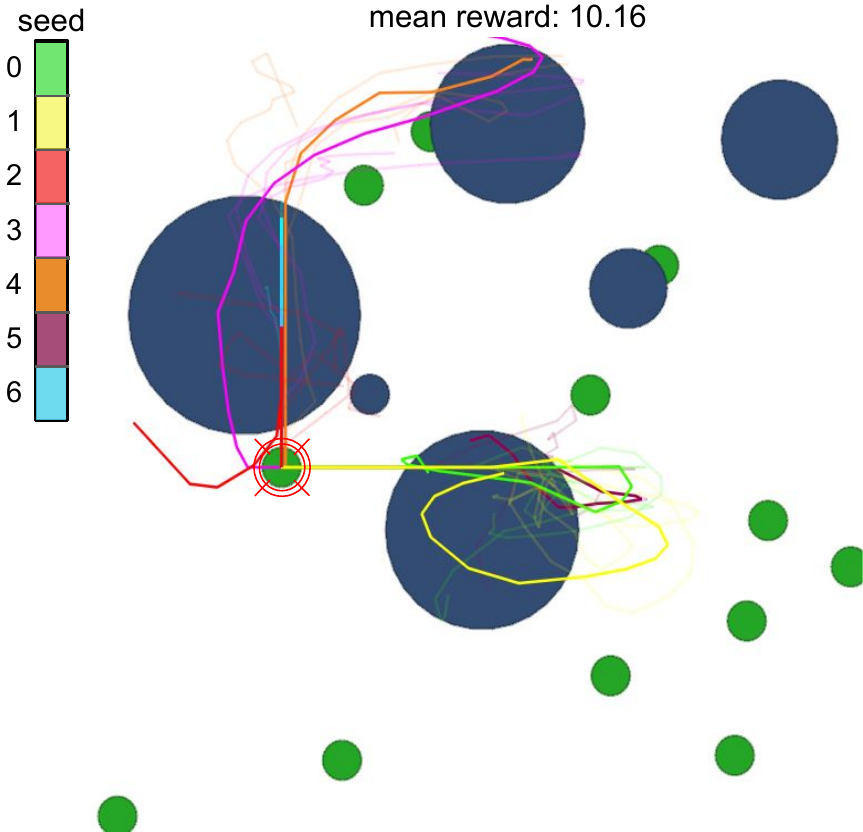}
\end{minipage}%
}%
\hspace{0.1in}
\subfigure{
\begin{minipage}[t]{0.45\linewidth}
\centering
\includegraphics[width=1.7in]{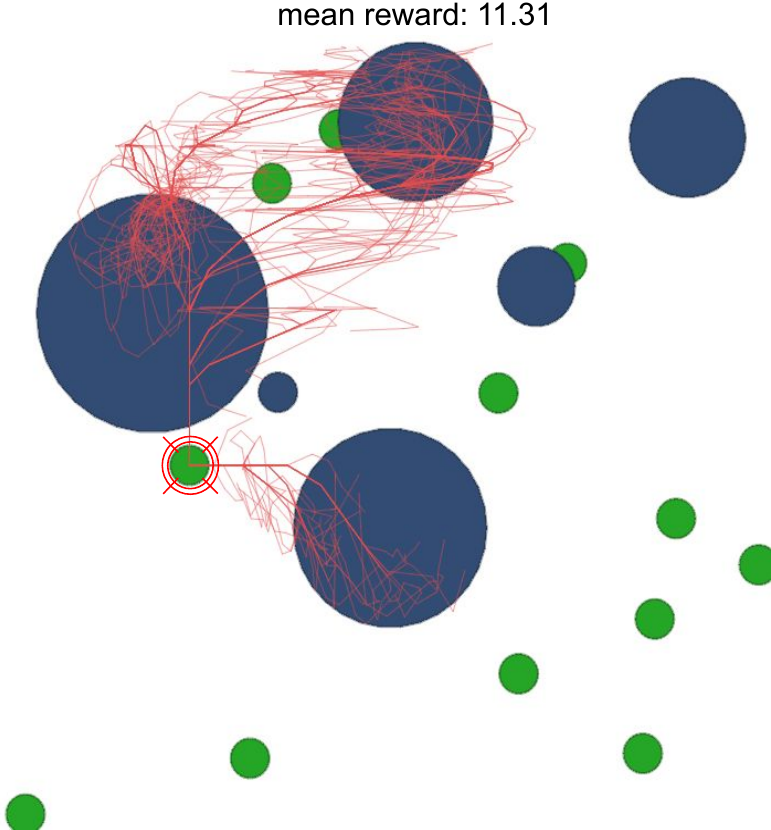}
\end{minipage}%
}%
\centering
\caption{Trajectory visualization in the resource occupation environment. \textbf{Left:} Trajectories of agents that are trained in a single Markov Game. \textbf{Right:} Trajectories of agents that are trained in multiple Markov Games.}
\label{traje}
\end{figure*}

We also visualize some instances of the learned relation variations, \ie, different relational graph $g$ under observation $o$, as well as how agents make the smartest decisions under different variations in Fig. \ref{bah}. The common knowledge learned by the agent can be interpreted as "moving to less-agent resources". Specifically, in Fig. \ref{bah}(a) the black agent makes decisions to move left by focusing on the topmost red agents which are occupying a resource. By focusing on the leftmost red agents in Fig. \ref{bah}(b), the black agent makes decisions to move up. Although such behavior might not be optimal, since the topmost resource is smaller than the leftmost resource, this variation helps agents learn common knowledge and optimally behave in an unseen MG by incorporating the optimal relation mapping in that game.
\begin{figure*}[h!tbp]
\centering
\subfigure{
\begin{minipage}[t]{0.33\linewidth}
\label{vis_a}
\centering
\includegraphics[width=1.8in]{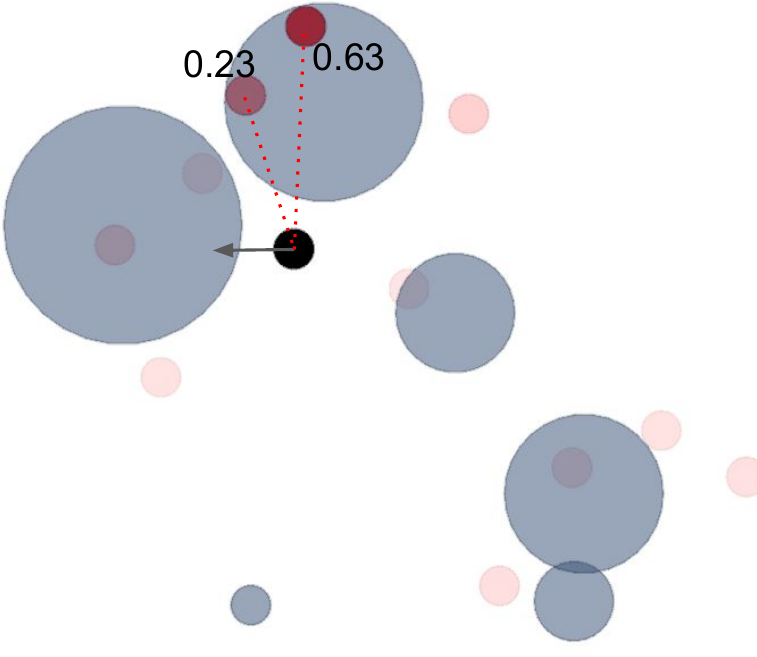}
\end{minipage}%
}%
\subfigure{
\begin{minipage}[t]{0.33\linewidth}
\centering
\label{vis_b}
\includegraphics[width=1.8in]{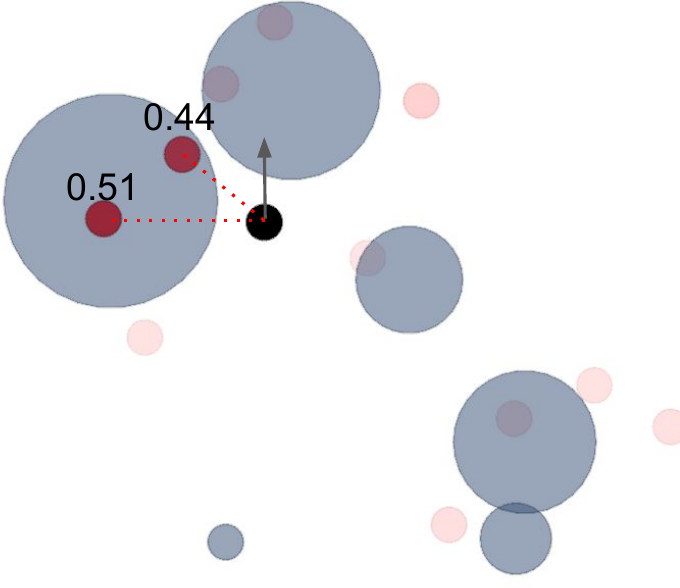}
\end{minipage}%
}%
\subfigure{
\begin{minipage}[t]{0.33\linewidth}
\centering
\label{vis_c}
\includegraphics[width=1.8in]{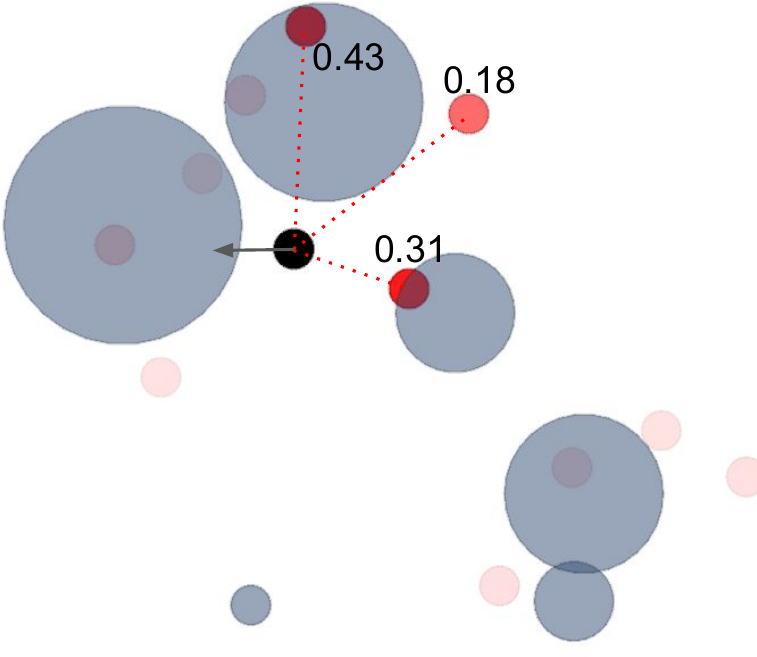}
\end{minipage}%
}%
\centering
\caption{Instances of different learned relational graph and the corresponding actions that agents take. We visualize how the black agent makes different reasonable decisions by incorporating different relational graphs. The relation scores $g^{i,j}$ that are smaller than $0.1$ are not shown.}
\label{bah}
\end{figure*}

\end{document}